\documentclass[twoside,11pt]{article}

\usepackage[preprint]{jmlr2e}

\usepackage{newtxmath} 

\usepackage{tikz}
\usetikzlibrary{positioning, fit, arrows.meta}
\usepackage{bm}

\usepackage[ruled,vlined]{algorithm2e}

\usepackage{booktabs}

\usepackage{subcaption}
\usepackage{amsmath}

\usepackage{float}
\usepackage{xurl}

\newcommand{\mby}{\mathbf{y}}
\newcommand{\mbz}{\mathbf{z}}

\newcommand{\mbw}{\mathbf{w}}

\newcommand{\mbalpha}{\bm{\alpha}}
\newcommand{\mbgamma}{\bm{\gamma}}
\newcommand{\mbtheta}{\bm{\theta}}
\newcommand{\mbbeta}{\bm{\beta}}
\newcommand{\mbeta}{\bm{\eta}}
\newcommand{\mbnu}{\bm{\nu}}
\newcommand{\mbmu}{\bm{\mu}}
\newcommand{\mbsigma}{\bm{\sigma}}
\newcommand{\mbtau}{\bm{\tau}}

\newcommand{\mbphi}{\bm{\phi}}

\newcommand{\s}{\,;\,}
\newcommand{\rmp}{\mathrm{p}}
\newcommand{\E}[1]{\mathbb{E}\left[#1\right]}
\newcommand{\EE}[2]{\mathbb{E}_{#1}\left[#2\right]}

\DeclareMathOperator{\rank}{rank}

\newtheorem{assumption}{Assumption}

\usepackage{lastpage}
\jmlrheading{23}{2026}{1-\pageref{LastPage}}{1/21; Revised 5/22}{9/22}{21-0000}{Hector Rodriguez-Deniz and David M. Blei}

\ShortHeadings{Exponential Family Synthetic Controls}{Rodriguez-Deniz and Blei}
\firstpageno{1}

\begin{document}

\title{Exponential Family Synthetic Controls}
\author{\name Hector Rodriguez-Deniz \email hector.rodriguez@columbia.edu \\
       \addr Data Science Institute\\
       Columbia University\\
       New York, NY 10027, USA
       \AND
       \name David M. Blei \email david.blei@columbia.edu \\
       \addr Department of Computer Science and Department of Statistics\\
       Columbia University\\
       New York, NY 10027, USA}
\editor{My editor}
\maketitle

\begin{abstract}
We develop \emph{exponential family synthetic controls} (EFSC), a distributional version of synthetic controls for a panel of datasets. Each cell of the panel corresponds to a dataset drawn from an exponential family whose natural parameters factorize probabilistically across units and times. We estimate the latent factors using black-box variational inference. This replaces the usual weighted-average view of synthetic controls with a flexible probabilistic model that operates on full distributions. We propose causal estimands based on divergences between pre- and post-intervention distributions induced by the posterior of the natural parameters, together with distributional placebo tests to support causal inference and assess the significance of the estimated effects. We validate the proposed framework on synthetic and real data. Across a variety of exponential-family distributions, EFSC accurately recovers causal effects induced by exponential tilts, together with the corresponding divergences between treated and counterfactual distributions. The framework also captures effects induced by structural perturbations of the latent factors and by heavy-tailed noise contamination. Finally, we apply EFSC to study the expansion of Medicaid under the Affordable Care Act (ACA) and its impact on the distribution of health insurance coverage across U.S. states. Code is available at \url{https://github.com/blei-lab/efsc}.
\end{abstract}

\begin{keywords}
  synthetic controls, counterfactual inference, exponential families, matrix factorization, variational inference
\end{keywords}

\section{Introduction}

The method of \emph{synthetic controls} (SC) uses panel data to estimate the causal effect of an intervention. Consider the study of California's tobacco-control program in \citet{abadie2010synthetic}. The rows of the panel are states, the columns are years, and each cell contains annual per-capita cigarette sales. Beginning in 1989, California implemented Proposition~99, which raised its tax on cigarettes. SC helps answer the question: How did this policy change cigarette consumption in California?

The dataset contains California's cigarette sales under Proposition~99, but it does not contain the sales that would have occurred without it. The idea behind SC is to estimate this missing counterfactual from the other states. Specifically, it models each year of California's (untreated) cigarette sales as a weighted average of the other states' sales. It then uses this fitted model to provide a ``synthetic counterfactual California'' during the treated years.  The difference between the observed sales and synthetic sales estimates the policy's effect.

SC began as a method for comparative policy evaluation, with applications to conflict in the Basque Country and tobacco policy in California \citep{abadie2003economic,abadie2010synthetic}. It is now used across economics and the social sciences to study interventions that affect one or a few aggregate units~\citep{abadie2021using}.

In its usual form, SC is designed to analyze a panel with one measurement in each cell, such as the cigarette sales of each state in each year. In this paper, we consider how to implement SC analysis on panels of \textit{datasets}. Rather than observing aggregated cigarette sales in each cell of the panel, suppose we observed a sample of individuals from that state and year. These data might include how many cigarettes they bought as well as demographic covariates.  (Indeed, many SC applications are attached to summaries of such data.)

With a panel of datasets, we assume that each cell contains a sample from a \textit{distribution} of observations, e.g., a distribution of cigarette sales. For California's cells after 1989, those distributions are formed under treatment, i.e., the increase in the tax. For the other cells---in California before the tax and in other states throughout the period---these distributions are under the control, i.e., no tax increase. We now ask: What is the effect of the tax increase on the distribution of cigarette sales in California?  Figure~\ref{fig:data_panel_and_tilt} illustrates this setting.

We develop a new SC method to answer this question. In general, for unit $i$ and time $j$, the cell $(i,j)$ contains a dataset $\mby_{ij}=\{y_{ijk}\}_{k=1}^{m_{ij}}$, a sample from a cell-specific distribution. In the treated post-intervention cells, we observe the distribution under treatment; in the other cells we observe the distribution under the untreated distributions. The causal question is how the treatment changed the distribution.

To solve the problem, we develop \emph{exponential family synthetic controls} (EFSC). EFSC models each cell as a sample from an exponential-family distribution with natural parameter $\eta_{ij}$. In a treated post-intervention cell, the observations inform the treated parameter $\eta_{ij}^{\mathrm{treat}}$. The corresponding untreated parameter $\eta_{ij}^{\mathrm{ctrl}}$ is missing.

EFSC takes the matrix completion view of synthetic controls~\citep{athey2021matrix}, where the goal is to complete the matrix of untreated natural parameters. Specifically, EFSC completes the matrix through a latent factor model of the distributions. We assume the untreated natural parameter can be written as a linear function of per-row and per-column latent variables,
\begin{align}
  \eta_{ij}^{\mathrm{ctrl}}
  =
  \alpha_i+\gamma_j+\mbtheta_i^\top\mbbeta_j.
\end{align}
First, the data in the untreated cells help us estimate these per-row and per-column variables. Then, their fitted values determine the missing $\eta_{ij}^{\mathrm{ctrl}}$ for each treated unit and post-treatment time.

To fit the model, we take a fully probabilistic approach where we place Gaussian priors on the variables in the factorization and then approximate the posterior. Specifically, we approximate it with a mean-field Gaussian variational distribution and fit the variational parameters with black-box variational inference \citep{ranganath2014black,blei2017variational}. The fitted variational distribution induces an approximate posterior distribution over each missing counterfactual natural parameter.

With this setup, the rest of the paper proceeds as follows. Section~2 situates this work in the related literature. Section~3 develops the exponential family synthetic controls method. It defines the causal estimands, develops the latent factor model, and develops its variational inference algorithm. It further establishes the assumptions that support a causal interpretation, and it constructs distributional placebo tests to help assess and criticize an EFSC model.

With the method in hand, Section~4 studies EFSC on simulated and real datasets. In simulation, we confirm that EFSC can recover true causal effects with different types of interventions on the distribution. On real data, we evaluate the Medicaid expansion under the Affordable Care Act \citep{patientprotection2010}. In this study, each state-year cell contains the distribution of low-income adults across five insurance categories. EFSC estimates the causal effect of expanding Medicaid on the distribution of insurance coverage. Section~5 concludes the paper.


\section{Related Work}
\begin{figure}[t]
    \centering
    \includegraphics[width=0.85\linewidth]{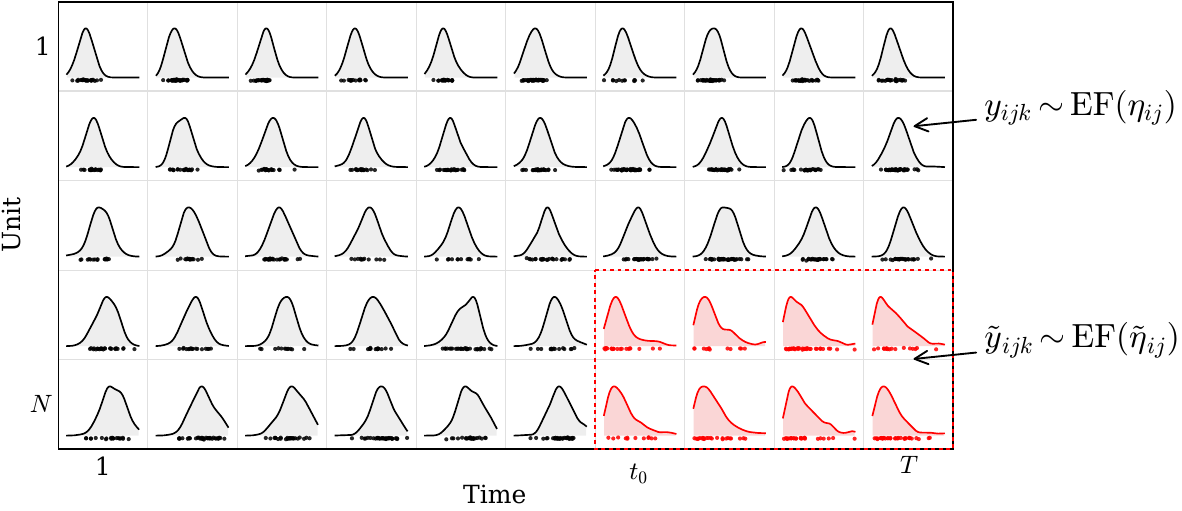}
    \caption{Panel of datasets with $N$ units and $T$ time steps; each cell $(i,j)$ contains a sample of size $m_{ij}$, drawn from a cell-specific exponential-family distribution with natural parameter $\eta_{ij}$. The data in the last two units are affected by a policy that perturbs the generating process after time $t_0$.}\label{fig:data_panel_and_tilt}
\end{figure}
\textbf{Synthetic controls on disaggregated data.}
Recent contributions on synthetic controls have focused on leveraging disaggregated observations within panel data. \citet{abadie2021penalized} propose a penalized estimator that operates at the unit level with the objective of reducing interpolation bias. \citet{shi2022assumptions} reframe potential outcomes in a fine-grained fashion to evaluate some of the assumptions underlying SC. In particular, they analyze identification under linear factor models, linking them to invariance principles, and provide guidance for donor and covariate selection. Also building on this framework, \citet{nazaret2024misspecification} assess the robustness of the SC estimator to misspecification of the linear assumption and derive corresponding error bounds. \cite{rho2025cluster} tackle the problem of disaggregated observations by selecting a subset of donors via clustering methods, which allows them to exploit fine-grained information while controlling the size of the effective donor set and reducing the variance. Although not directly related to SC, recent developments on hierarchical causal models such as \cite{weinstein2026hierarchical} also highlight the importance of modeling unit-level heterogeneity in panel data. In contrast to previous approaches based on e.g., regularization or clustering, our EFSC summarizes the distributional properties of the disaggregated observations through the corresponding sufficient statistics within the exponential family. We then use a latent factorization to model the natural parameters, thereby modeling the data-generating process directly.

\textbf{Flexible low-dimensional representations.}
A parallel line of work introduces additional model flexibility to SC through latent and dynamic formulations. These include robust and Bayesian approaches based on matrix estimation and ensemble methods such as in \citet{viviano2023synthetic}, Bayesian structural time-series models \citep{brodersen2015inferring}, and state-space extensions with time-varying coefficients or latent dynamics \citep{shao2022generalized, klinenberg2024timevarying,rho2024timeaware}. A closely related perspective frames synthetic control as a matrix completion problem, connecting causal panel models to low-rank methods in machine learning \citep{athey2021matrix}. In this view, counterfactual estimation is treated as imputing missing potential outcomes under a structured low-rank assumption. Probabilistic matrix factorization \citep{salakhutdinov2007pmf} provides a canonical probabilistic formulation of this idea that we exploit in our EFSC. Our work extends this perspective to a distributional setting, by factorizing the data-generating process itself and leveraging exponential-family structure to move beyond scalar observations.

\textbf{Stochastic and structural interventions.}
Stochastic interventions define causal effects through transformations of the treatment distribution rather than deterministic assignments \citep{diaz2012population, kennedy2019nonparametric}. A common instance is an exponential-tilt intervention, where the treatment density is modified via a multiplicative exponential factor \citep{diaz2019causal, schindl2024incremental, jetsupphasuk2025difference}. We consider stochastic interventions based on exponential tilting, which arise naturally within our exponential-family representation. We further evaluate our EFSC model under structured interventions acting on latent representations of units and time, capturing heterogeneity driven by unobserved factors. These induce nonlinear perturbations in latent space.

\section{Exponential Family Synthetic Controls}

We begin by describing the problem of estimating distributional causal effects in panels of datasets and the latent factor model underlying EFSC\@. We then present a black-box variational inference procedure for posterior estimation of the model parameters. Finally, we discuss the causal assumptions and interpretation of the model, and introduce several classes of interventions that motivate our empirical studies.

\subsection{Distributional Interventions in Panels of Datasets}\label{subsec:method_overview}

We observe a panel of datasets $\{\mby_{ij}\}$ for units $i=1,\dots,N$, and times $j=1,\dots,T$. Each dataset $\mby_{ij} = \{y_{ijk}\}_{k=1}^{m_{ij}}$ is an i.i.d. sample from an EF with natural parameter $\eta_{ij}$:
\begin{align}
  y_{ijk}|\eta_{ij} \stackrel{\text{iid}}{\sim} \textsc{expfam}(\eta_{ij}),\;k=1,\ldots,m_{ij}, \\ 
  \log \rmp(y_{ijk} \s \eta_{ij}) = \eta_{ij}t(y_{ijk}) - a(\eta_{ij}) + c(y_{ijk}),
  \end{align}
where $t(y)$ is the sufficient statistic function, $a(\eta)$ is the log-partition function and $c(y)$ the log-carrier term. 

Assume that a policy has been implemented in unit $i$ at time $T$, and that this policy induces a change in the data-generating distribution. For example, the policy might tilt the exponential family, so that $y_{iTk}\mid\tilde{\eta}_{iT}\stackrel{\mathrm{iid}}{\sim}\textsc{expfam}(\tilde{\eta}_{iT}),\;k=1,\ldots,m_{iT}$, and $\tilde{\eta}_{iT} = \eta_{iT} + \tau$.  Here the untilted $\eta_{iT}$ is the counterfactual parameter that would have generated $\mby_{iT}$ had the policy not been implemented. For simplicity, we initially focus on a single treated unit $i$ and an intervention at the final time period $j=T$, although the framework naturally extends to multiple treated units and post-treatment periods. Figure~\ref{fig:data_panel_and_tilt} illustrates this more general setting.

Our goal is to estimate the causal effect of the policy on the distribution of the data. Let $\eta^{\text{treat}}_{iT}$ denote the treated parameter and $\eta^{\text{ctrl}}_{iT}$ the untreated parameter. We can define the posterior expected effect as
\begin{align}
  \text{ECE}_{iT}\triangleq\E{\eta^{\text{treat}}_{iT} - \eta^{\text{ctrl}}_{iT} \mid \mby},\label{eq:causal_effect_posterior_diff}
\end{align}
or, more generally, a divergence between their induced distributions,
\begin{align}
  \text{ECD}_{iT}\triangleq\E{\mathrm{KL}\big(p(y \s \eta^{\text{treat}}_{iT})
      \,\|\,p(y \s \eta^{\text{ctrl}}_{iT})\big)\,\big|\,\mby},\label{eq:causal_effect_posterior_kl}
\end{align}
where $\text{KL}(p\,\|\,q)$ is the Kullback-Leibler divergence from $p$ to $q$. The challenge is that, while we observe data from $\eta^{\text{treat}}_{iT}$, we do not have any observations from $\eta^{\text{ctrl}}_{iT}$.

\subsection{Counterfactual Modeling via Latent Factorization}

To estimate the missing counterfactual distributions, we adapt the matrix completion SC approach from \cite{athey2021matrix}. We frame this problem as
imputing the corresponding missing entries in the matrix of untreated natural parameters. Since we have a panel of datasets, EFSC poses a matrix factorization over the parameters of the EF rather than the data itself. Specifically, we model the natural parameters as
\begin{align}
  \eta_{ij} = \alpha_i + \gamma_j + \mbtheta_i^\top \mbbeta_j,\label{eq:pmf}
\end{align}
with unit effects $\alpha_i$, time effects $\gamma_j$, and latent factors $\mbtheta_i, \mbbeta_j \in \mathbb{R}^r$, such that
\begin{align}
      \mbtheta &= [\mbtheta_1,\dots,\mbtheta_N],\;\text{and}\;
  \mbbeta = [\mbbeta_1,\dots,\mbbeta_T].
\end{align}
For families with constrained natural parameter spaces $\mathcal{H}\subsetneq\mathbb{R}$, appropriate transformations are applied to ensure that $\eta_{ij}\in\mathcal H$; see Appendix~\ref{app:multi-pmf}.

Through this probabilistic matrix factorization, EFSC factorizes the matrix of exponential-family natural parameters. Given a panel of datasets, posterior inference yields estimates of the latent factorization that characterizes the underlying data-generating process. Through their shared dependence on the latent factors, observations from the untreated cells of the panel provide information about the missing counterfactual natural parameters in the treated post-treatment cells.

Let $\Theta=(\boldsymbol{\alpha},\boldsymbol{\gamma},\mbtheta,\mbbeta)$ collect the parameters of the latent factorization, and let
$\mby^{\mathrm{ctrl}}$ denote the observations from cells not exposed to the intervention. The posterior over $\Theta$ induces a posterior distribution over the missing counterfactual natural parameter through
\begin{align}
p(\eta_{iT}^{\mathrm{ctrl}}\mid \mby^{\mathrm{ctrl}})
&=\int p(\eta_{iT}^{\mathrm{ctrl}}\mid \Theta)
p(\Theta\mid\mby^{\mathrm{ctrl}})
\,\mathrm d\Theta,\label{eq:induced_posterior}\\
\E{\eta_{iT}^{\mathrm{ctrl}}\mid\mby^{\mathrm{ctrl}}}
&=\EE{p(\Theta\mid\mby^{\mathrm{ctrl}})}
{\alpha_i+\gamma_T+\mbtheta_i^\top\mbbeta_T},
\end{align}
where $p(\eta_{iT}^{\mathrm{ctrl}}\mid\Theta)$ is concentrated at the value implied by the factorization in Equation~\eqref{eq:pmf}, with the appropriate constraint transformation when required. The resulting posterior distributions of the treated and counterfactual natural parameters are then used to evaluate the estimands in Equations~\eqref{eq:causal_effect_posterior_diff}--\eqref{eq:causal_effect_posterior_kl}.

\begin{figure}
    \centering
    \includegraphics[width=0.5\linewidth]{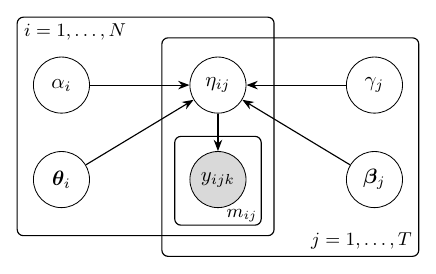}
    \caption{EFSC factorizes the natural parameters driving the data-generating process of entire datasets at each cell of the panel, allowing causal inference to be performed on full distributions rather than aggregated outcomes. }\label{fig:dag-efsc}
\end{figure}

To contrast with classical synthetic controls, consider an additive intervention on the outcome scale, $\tilde{Y}_{ijk}=Y_{ijk}+\delta$, where the treatment effect is recovered by estimating the missing counterfactual $Y_{ijk}$. EFSC adopts a more general distributional approach. The observations in each treated post-treatment cell inform the treated natural parameter $\eta_{ij}^{\mathrm{treat}}$, while EFSC imputes the missing counterfactual natural parameter $\eta_{ij}^{\mathrm{ctrl}}$ through the untreated latent factorization.

\subsection{Variational Inference}\label{subsec:bbvi}

The posterior distribution over the latent variables is generally analytically intractable due to the hierarchical latent factor model and general exponential-family likelihood.  We approximate the posterior distribution of the PMF parameters using variational inference (VI; \citeauthor{blei2017variational}, \citeyear{blei2017variational}).

VI approximates an intractable posterior by optimizing the parameters of a family of approximations. Let $q(\Theta;\mbnu)$ denote a variational approximation to $p(\Theta\mid\mby)$. We estimate the variational parameters $\mbnu$ by maximizing the evidence lower bound (ELBO),
\begin{equation}
    \mathcal{L}(\mbnu)=\EE{q}{\log p(\mby,\Theta)-\log q(\Theta;\mbnu)},
\end{equation}
where $\Theta$ is the vector of all $D=(N+T)(r+1)$ parameters defining $\mbeta$. We define a factorized (mean-field) variational approximation of $\Theta$ to model all the latent parameters defining the matrix factorization of $\mbeta$: 
\begin{equation}
    q(\Theta;\mbnu)=\prod_{l=1}^Dq_l(\Theta_l;\nu_l),    
\end{equation}
where each component $q_l$ is a univariate Gaussian so $\nu_l=\{\mu_l,\sigma^2_l\}$ for each $l=1,\ldots,D$. For simplicity, we assume that a one-parameter EF generates the data $y_{ijk}$ but our procedure directly extends to multi-parameter families, see Appendix~\ref{app:multi-pmf}. 

Assume a multivariate Gaussian prior on $\Theta$ with independent standard components. The log probabilities of the prior and variational distributions are:
\begin{align}
    \log p(\Theta)&= -\frac{1}{2}\sum_{l=1}^D\Theta_l^2\;+\mathrm{const.},\;
    \log q(\Theta;\mbnu)= -\frac{1}{2}\sum_{l=1}^D\left[\log\sigma^2_l+(\Theta_l-\mu_l)^2/\sigma^2_l\right]\;+\mathrm{const}.    
\end{align}
Let $\Omega$ denote the set of panel cells used in a given model fit. For the log-joint, we assume that the cell datasets are conditionally independent given $\Theta$ and the cell sizes, with  observations $y_{ijk}$ i.i.d. from an EF with parameter $\eta_{ij}$ within each cell. Let $T_{ij}=\sum_{k=1}^{m_{ij}}t(y_{ijk})$ be the sum of the sufficient statistics of the $(i,j)$-th cell of the panel. The log-joint is 
\begin{equation}
  \log p(\mby_{\Omega},\Theta)=\log p(\Theta)+\sum_{(i,j)\in\Omega}\left[\eta_{ij}T_{ij}-m_{ij}a(\eta_{ij})\right]\;+\mathrm{const}.    \end{equation}
The composition of $\Omega$ depends on the inferential task; for example, counterfactual estimation excludes treated post-intervention cells, while the placebo procedure uses different conditioning sets across its model fits. In Algorithm~\ref{alg:bbvi-efsc} we describe a generic black-box variational inference (BBVI; \citeauthor{ranganath2014black}, \citeyear{ranganath2014black}) procedure for estimating the model parameters; implementation details are provided in Appendix~\ref{app:bbvi_implementation}. An empirical Bayes extension that jointly learns the prior hyperparameters and variational parameters is in Appendix~\ref{app:eb}.

\begin{algorithm}[t]
\SetAlgoNoLine
\DontPrintSemicolon
\caption{BBVI for EFSC}\label{alg:bbvi-efsc}
\KwIn{Data $\mby$, log-joint of the model $\log p(\mby,\Theta)$, log-variational distribution $\log q(\Theta;\mbnu)$, number of Monte Carlo (MC) samples $S$}
\KwOut{Fitted variational parameters $\mbnu=\{\mbmu_{\nu},\mbsigma^2_{\nu}\}$}
 \textbf{Initialize} parameters $\mbnu$ randomly, set an adaptive step-size schedule $\rho_t$\;
 \Repeat{\textnormal{the ELBO converges or another stopping criterion is met}}{
 $L=\text{diag}(\mbsigma_{\nu})$\;
Draw $S$ samples from $q$ by i) $\mbz^{(s)}\sim\mathcal{N}(0,I_D)$, and ii) $\Theta^{(s)}=\mbmu_{\nu}+L\mbz^{(s)}$\;    
MC-approximate the $\text{ELBO}:\mathcal{L}(\mbnu)\approx S^{-1}\sum_{s=1}^S (\log p(\mby,\Theta^{(s)})-\log q(\Theta^{(s)};\mbnu))$\;   Compute the reparameterization gradient $\hat{\nabla}_{\mbnu}$ using automatic differentiation\;
   Update the variational parameters: $\mbnu=\mbnu+\rho_t\hat{\nabla}_{\mbnu}$\; 
 }
\end{algorithm}

\subsection{Causal Framework and Assumptions}
To interpret the ECE and ECD in
Equations~\eqref{eq:causal_effect_posterior_diff}--\eqref{eq:causal_effect_posterior_kl}
as causal effects, we connect the treated and counterfactual exponential-family
parameters to a distributional potential outcomes framework \citep{rubin2005causal} and state the assumptions
required for this interpretation, following the conventional assumptions in synthetic control methods \citep{abadie2021using}. We also establish a population-based identification result for the counterfactual natural parameters and, consequently, for the population targets underlying the ECE and ECD.

We begin with some notation. Let $\{\mathcal T,\mathcal C\}$ be a partition of $\{1,\dots,N\}$, denoting the sets of treated and control units,
respectively. Similarly, let $\mathcal P,\mathcal Q\subseteq\{1,\dots,T\}$ denote the pre- and
post-treatment periods, such that $\mathcal P=\{1,\dots,t_0-1\}$ and
$\mathcal Q=\{t_0,\dots,T\}$, for an intervention time $t_0\leq T$.
The observed untreated region of the panel is $\Omega_{\text{unt}}=\bigl(\mathcal{C}\times\{1,\ldots,T\}\bigr)
\cup\bigl(\mathcal{T}\times\mathcal{P}\bigr),$ and the treated--post-treatment target region is $\Omega_{\text{tgt}}=\mathcal{T}\times\mathcal{Q}$.

For each unit, let $A_i\in\{1,\ldots,T\}\cup\{\infty\}$ denote its realized intervention-adoption time, where $A_i=\infty$ denotes no adoption during the study window; for treated units, $A_i=t_0$. Let
$\mathbf A=(A_1,\ldots,A_N)$, and let $Y_{ijk}(\mathbf a)$ denote the potential
outcome under a hypothetical vector $\mathbf a\in(\{1,\ldots,T\}\cup\{\infty\})^N$ of adoption times. For
brevity, write $Y_{ijk}(a)\triangleq Y_{ijk}(a,\mathbf A_{-i})$ when unit $i$ adopts at time $a$ and the other units retain their realized adoption times. 

\begin{definition}[Untreated and treated potential-outcome distributions]
Let $\mathbf a^\infty=(\infty,\ldots,\infty)$ denote the all-never-treated
adoption vector. For every cell $(i,j)$, define the untreated potential outcome
by
\begin{align}
Y_{ijk}\triangleq Y_{ijk}(\mathbf a^\infty),
\end{align}
and let $P_{ij}$ denote its distribution: $Y_{ijk}\sim P_{ij}.$

For a treated post-treatment cell $(i,j)\in\Omega_{\mathrm{tgt}}$, define
\begin{align}
\tilde Y_{ijk}
\triangleq
Y_{ijk}(A_i)
=
Y_{ijk}(t_0),
\end{align}
and let $\tilde P_{ij}$ denote its distribution: $\tilde Y_{ijk}\sim\tilde P_{ij}.$
\end{definition}

Under the exponential-family observation model, these distributions are indexed by the natural
parameters $\mbeta_{ij},\tilde{\mbeta}_{ij}\in\mathcal H\subseteq\mathbb R^P$: $P_{ij}
=p(\,\cdot\,;\mbeta_{ij})$, and $\tilde P_{ij}=p(\,\cdot\,;\tilde{\mbeta}_{ij}).$

\begin{assumption}[Well-defined intervention and no interference; SUTVA]
\label{ass:sutva}
The potential outcomes $Y_{ijk}(\mathbf a)$ are well defined for every
adoption-time vector $\mathbf a$, with a single relevant version of the
intervention. Moreover, for any two adoption-time vectors $\mathbf a$ and
$\mathbf a'$, if $a_i=a_i'$, then
\begin{align*}
    Y_{ijk}(\mathbf a)\overset{d}{=}Y_{ijk}(\mathbf a').
\end{align*}
Thus, unit $i$'s potential-outcome distribution depends on the assignment
vector only through its own adoption time, which justifies the reduced
notation $Y_{ijk}(a_i)$ used below.
\end{assumption}
This is a distributional version of the stable unit treatment value
assumption (SUTVA): EFSC requires a well-defined intervention and rules out
spillover effects on each unit's marginal potential-outcome distribution
\citep{rubin2005causal,wager2024causal}.

\begin{assumption}[Consistency]
\label{ass:consistency}
For every observed cell, the observed random variable $Y_{ijk}^{\text{obs}}$ equals the potential
outcome corresponding to the unit's realized adoption time. Thus,
\begin{align*}
    Y_{ijk}^{\text{obs}}=Y_{ijk}(\mathbf A)=Y_{ijk}(A_i).
\end{align*}
\end{assumption}

Consistency links potential outcomes to the observed panel
\citep{hernan2020causal}.

\begin{assumption}[No anticipation]
\label{ass:noanticipation}
For every treated unit $i\in\mathcal{T}$, every pretreatment period
$j\in\mathcal{P}$, and every observation $k=1,\ldots,m_{ij}$, future exposure to the
intervention does not alter the pretreatment distribution relative to the
untreated condition. Thus,
\begin{align}
    Y_{ijk}(A_i)\overset{d}{=}Y_{ijk}(\infty),
\quad
(i,j)\in\mathcal{T}\times\mathcal{P}.
\end{align}
By Assumption~\ref{ass:sutva} and the definition of $P_{ij}$, both sides
therefore have distribution $P_{ij}$.
\end{assumption}

Together with consistency, this assumption makes the treated units'
pretreatment cells valid observations of the untreated data-generating
process \citep{abadie2021using}.

\begin{assumption}[Valid control units]
\label{ass:controls}
Every unit in $\mathcal{C}$ has $A_i=\infty$ and is not exposed to a substantively
equivalent version of the intervention during the study window. Thus, its
realized adoption condition is the untreated condition throughout the study.
\end{assumption}

Together, Assumptions~\ref{ass:sutva}--\ref{ass:controls} imply the
cell-wise observed-data distributions:
for control units, consistency, no interference, and $A_i=\infty$ give the
untreated distribution; for treated units before $t_0$, consistency, no
interference, and no anticipation give the untreated distribution; and on
$\Omega_{\text{tgt}}$, consistency, no interference, and the definition of
$\tilde Y_{ijk}$ give the treated distribution. Hence,
\begin{align}
    Y_{ijk}^{\text{obs}}
\sim
\begin{cases}
P_{ij},
& (i,j)\in\Omega_{\text{unt}},\\
\tilde P_{ij},
& (i,j)\in\mathcal{T}\times\mathcal{Q}.
\end{cases}
\end{align}
Note that $\tilde Y_{ijk}$ and $\tilde{\mbeta}_{ij}$ appear only in
treated post-treatment cells, while $Y_{ijk}$ and $\mbeta_{ij}$ describe the
untreated cells and the missing untreated counterfactuals.

\begin{theorem}[Counterfactual parameter identification]\label{theo:identification_one}
For a one-parameter EF with unrestricted natural parameter, let $M=\mbeta$ denote the $N\times T$ complete untreated natural-parameter matrix, partitioned as
\begin{align}
    M=\begin{pmatrix}
    A & B\\
    C & D
    \end{pmatrix},
\end{align}
according to control/treated units and pre/post-treatment periods in rows and columns, respectively. The block $D$ contains the counterfactual parameters and has dimension $|\mathcal{T}|\times|\mathcal{Q}|$. Suppose the causal assumptions above identify the observed cells in $A,B,$ and $C$ with their untreated distributions, the exponential-family parameterization is identifiable, and
\begin{align}
    \rank(A)=\rank(M).
\end{align}
Then the counterfactual block is identified as
\begin{align}
    D=CA^\dagger B,
\end{align}
where $A^\dagger$ is the Moore-Penrose pseudo-inverse of the pretreatment control block $A$.
\end{theorem}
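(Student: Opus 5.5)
The proof has two stages. First we show that the blocks $A$, $B$, $C$ are identified from the observed-data distribution. Then we use pure linear algebra to express $D$ in terms of them under the rank condition.

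For the first stage, the display following Assumption~\ref{ass:controls} says that every cell in $\Omega_{\text{unt}}$, which is exactly the union of the cells in $A$, $B$ and $C$, has observed distribution $P_{ij}=p(\,\cdot\,;\eta_{ij})$. Because the exponential-family parameterization is identifiable, $P_{ij}$ determines $\eta_{ij}$ uniquely. For a one-parameter minimal family with unrestricted natural parameter, this is because $\eta\mapsto a'(\eta)=\E{t(y)}$ is strictly increasing. Hence every entry of $A$, $B$ and $C$ is a functional of the observed-data law. It then remains to show that $D$ is a fixed function of $A$, $B$ and $C$.

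For the second stage, let $k=\rank(M)=\rank(A)$ and write $M=\begin{pmatrix}A\\ C\end{pmatrix}\begin{pmatrix}I & X\end{pmatrix}$ in the following way.

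First, I will show that the columns of $\begin{pmatrix}B\\ D\end{pmatrix}$ lie in the column space of $\begin{pmatrix}A\\ C\end{pmatrix}$, and that the rows of $\begin{pmatrix}C & D\end{pmatrix}$ lie in the row space of $\begin{pmatrix}A & B\end{pmatrix}$. The argument is a rank count:
\[
\rank(A)\le\rank\begin{pmatrix}A\\ C\end{pmatrix}\le\rank(M)=\rank(A),
\]
so all three ranks are equal. Consequently $\col\begin{pmatrix}A\\ C\end{pmatrix}$ is a $k$-dimensional subspace of $\col(M)$, which also has dimension $k$, so the two coincide. The symmetric argument handles the rows. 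The same squeeze also gives $\col(B)\subseteq\col(A)$ and $\mathrm{row}(C)\subseteq\mathrm{row}(A)$, since $\rank\begin{pmatrix}A&B\end{pmatrix}=\rank(A)$.

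Second, from the column inclusion there is a matrix $X$ with $B=AX$ and $D=CX$. From the row inclusion there is a matrix $Y$ with $C=YA$. Combining these,
\[
D=CX=YAX=YB.
\]

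Third, I will substitute using the pseudo-inverse identity $AA^\dagger A=A$:
\[
CA^\dagger B=YAA^\dagger AX=YAX=D.
\]
The right-hand side depends only on the identified blocks, which gives the claim. I will also note that the choice of $X$ and $Y$ does not matter, because $CA^\dagger B$ is defined without reference to them.

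I expect the main subtlety to be the rank-equality step, specifically deducing both the row and column inclusions from the single condition $\rank(A)=\rank(M)$. I will handle it with the dimension squeeze above. The identification of the observed entries is routine given the assumptions, though I should state explicitly that identifiability refers to injectivity of $\eta\mapsto p(\,\cdot\,;\eta)$.
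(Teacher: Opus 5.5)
Your proof is correct. Your first stage is the same as the paper's Part~I, but your linear-algebra stage takes a different route.

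The paper's route:
\begin{itemize}
\item It takes a rank factorization $M=UV^\top$ with $U\in\mathbb{R}^{N\times k}$ and $V\in\mathbb{R}^{T\times k}$.
\item It argues that $\rank(A)=k$ forces $U_{\mathcal C}$ and $V_{\mathcal P}$ to have rank $k$.
\item It invokes the reverse-order law $A^\dagger=(V_{\mathcal P}^\top)^\dagger U_{\mathcal C}^\dagger$ for a full-column-rank times full-row-rank product. Then $CA^\dagger B=U_{\mathcal T}V_{\mathcal P}^\top(V_{\mathcal P}^\top)^\dagger U_{\mathcal C}^\dagger U_{\mathcal C}V_{\mathcal Q}^\top=U_{\mathcal T}V_{\mathcal Q}^\top=D$.
\end{itemize}

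Your route extracts the column and row inclusions from the rank squeeze, writes $B=AX$, $D=CX$, $C=YA$, and needs only the single Penrose identity $AA^\dagger A=A$.

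What your version buys:
\begin{itemize}
\item It is more elementary, with no reverse-order law, which the paper only cites.
\item Since it uses nothing beyond $AA^\dagger A=A$, it shows $D=CA^{-}B$ for every generalized inverse $A^{-}$ of $A$. So the completion does not depend on the Moore--Penrose choice in particular.
\item It makes the synthetic-control reading explicit: $D=YB$, where the treated rows are linear combinations of the control rows with weights $Y$ fit on the pretreatment block. Among such weights, $Y=CA^\dagger$ is the minimum-norm solution of $YA=C$.
\end{itemize}

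What the paper's version buys: it ties the rank condition directly to the latent unit and time factors. Its later remark on the number of controls is phrased in those terms: the control units' factor vectors $U_{\mathcal C}$ must span all $k$ unit-factor directions, so the requirement is $|\mathcal C|\ge k$ rather than $|\mathcal C|\ge|\mathcal T|$.
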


The proof is in Appendix~\ref{app:proof_theorem}. An immediate consequence of Theorem~\ref{theo:identification_one} is that the untreated counterfactual cell distributions are identified, as are the population natural-parameter differences and KL divergences underlying the ECE and ECD, respectively. In practice, EFSC estimates the ECE and ECD by taking expectations of these identified population quantities under the variational posterior, conditioned on the observed panel.

We stated the theorem for a one-parameter EF with unrestricted $\mbeta$. The full proof for the general case of a family with $P$ reparameterized components, along with additional assumptions, is in Appendix~\ref{app:proof_theorem}. Note that Theorem~\ref{theo:identification_one} is not a posterior-consistency theorem: it says that the population observed-data distribution in the panel uniquely determines the counterfactual parameter block given the assumptions. When finite $m_{ij}$ observations are available in $(i,j)\in\Omega_{\text{unt}}$, EFSC estimates the identified block of parameters through the variational posterior over the latent factorization and the induced posterior over $\mbeta$ in $\Omega_{\text{tgt}}$. Showing that the exact posterior, or our variational approximation, concentrates around the true counterfactual block is a separate problem.

\subsection{Types of Interventions}\label{subsec:types_interventions}
We characterize an intervention through the change in the treated natural parameter $\eta_{ij}^{\mathrm{treat}}$ relative to its untreated counterfactual $\eta_{ij}^{\mathrm{ctrl}}$. We study three types of interventions in our experiments: (i) exponential tilts that preserve the exponential family structure, (ii) structured latent interventions that induce nonlinear, low-rank perturbations in the natural parameters, and (iii) distributional perturbations that introduce heavy-tailed noise and break the exponential family assumption. 

An exponential tilt adds a linear, homogeneous perturbation of magnitude $\tau$ to the natural parameter, i.e.
\begin{align}
    \tilde{\eta}_{ij} &= \eta_{ij} + \tau.
\end{align}
An example of an intervention that could be represented by a tilt is the effect of a tax reform on the yearly consumption of tobacco in California \citep{abadie2010synthetic}. The expected causal effect (ECE) in Equation~\eqref{eq:causal_effect_posterior_diff} would directly recover the effect $\tau$ of an exponential tilt \citep{efron2022exponential}. In contrast, the expected causal divergence (ECD) in Equation~\eqref{eq:causal_effect_posterior_kl} captures not only shifts in the mean but also changes in the full distribution implied by a perturbation of the exponential-family model in the treated units. The ECD admits a simple interpretation when the perturbation is an exponential tilt. In this case, 
\begin{equation}
    \text{KL}\big(p(y;\eta+\tau)\,\|\,p(y;\eta)\big)=\tau\EE{\eta+\tau}{t(y)}-a(\eta+\tau)+a(\eta),\label{eq:causal_effect_kl_bregman} 
\end{equation}
where $a(\cdot)$ is the log-partition function of the EF. Thus, under an exponential tilt, the ECD can be interpreted as the posterior mean of the Bregman divergence generated by $a(\cdot)$. See Appendix~\ref{app:exptilt} for details.

A structured latent intervention, on the other hand, induces heterogeneous effects driven by latent interactions, e.g.
\begin{align}
\tilde{\eta}_{ij} &= \eta_{ij} + \kappa \, \sigma(\mbtheta_i^\top\mbw)\, g(j),
\end{align}
where $\sigma(\cdot)$ denotes the sigmoid function, $g(j)$ is a function of the time index $j$, and $\kappa \in \mathbb{R}$ and $\mbw \in \mathbb{R}^r$. This formulation induces a perturbation that depends on the units' latent features $\mbtheta_i$, modulated along the direction $\mbw$, with time-varying dynamics governed by $g$. An example would be a drug treatment whose effect varies across biological contexts, such as cell types or genetic backgrounds \citep{mao2024learning}; in our formulation these effects may also evolve over time. This setting is particularly relevant for assessing the ability of models to capture nonlinear distributional shifts that are not directly observable at the level of outcomes, in contrast to the tilts. 

Finally, we consider a distributional perturbation that replaces treated post-treatment Gaussian observations by heavy-tailed Student-$t$ draws,
\begin{align}
    \tilde{y}_{ijk}=\mu_{ij}+s_{ij} t_{ijk},
    \; t_{ijk}\sim\mathrm{Student}\text{-}t(\nu_{\text{df}}),
\end{align}
where $s_{ij}$ is chosen so that $\tilde{y}_{ijk}$ is a Student-$t$ distribution with the same mean and variance as the original Gaussian. As the degrees of freedom, $\nu_{\text{df}}$, become smaller, the intervention alters the tail behavior of the treated distribution and leads to rare but extreme events. This setting allows us to assess the robustness of EFSC to model misspecification, since the treated observations no longer follow the assumed exponential-family model.


\section{Empirical Studies}
We assess the performance of EFSC in four experiments. First, we evaluate its ability to recover causal effects induced by exponential tilts across a range of exponential-family distributions. Second, we study more general perturbations in Gaussian panels, including structural interventions on the latent factors and heavy-tailed corruptions. Third, we run distributional placebo tests to assess the significance of the estimated effects. Finally, we analyze a real Medicaid dataset to investigate how the expansion of Medicaid under the Affordable Care Act (ACA) affected the distribution of health insurance coverage across U.S. states. Code to reproduce the experiments and apply EFSC to new datasets is available at \url{https://github.com/blei-lab/efsc}.
\subsection{Exponential Tilting across Panels of Exponential Families}\label{subsec:tilting_across_efs}
\begin{figure}[t]
\centering
\begin{subfigure}[b]{0.32\textwidth}
    \centering
    \includegraphics[width=\textwidth]{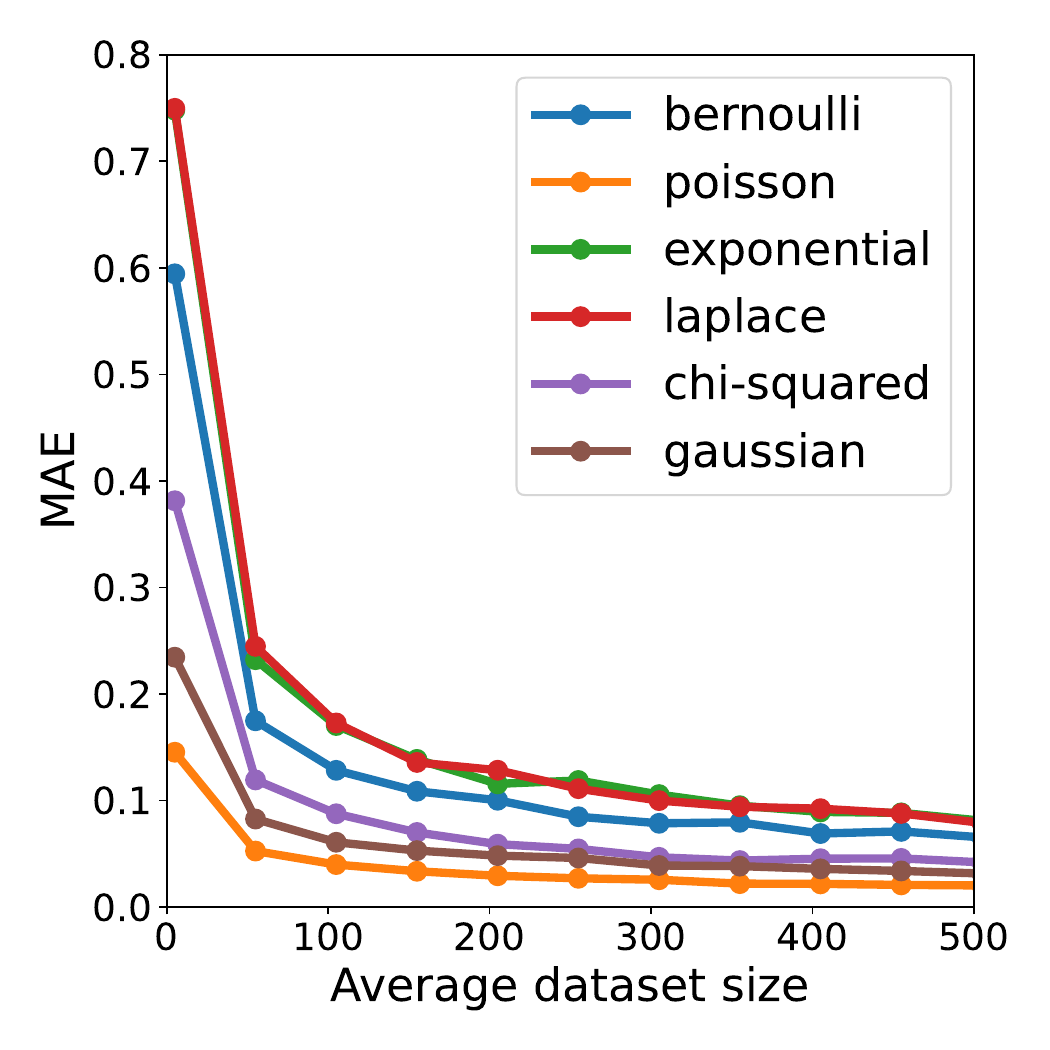}
    \caption{$\tau=0.1$}
\end{subfigure}
\hfill
\begin{subfigure}[b]{0.32\textwidth}
    \centering
    \includegraphics[width=\textwidth]{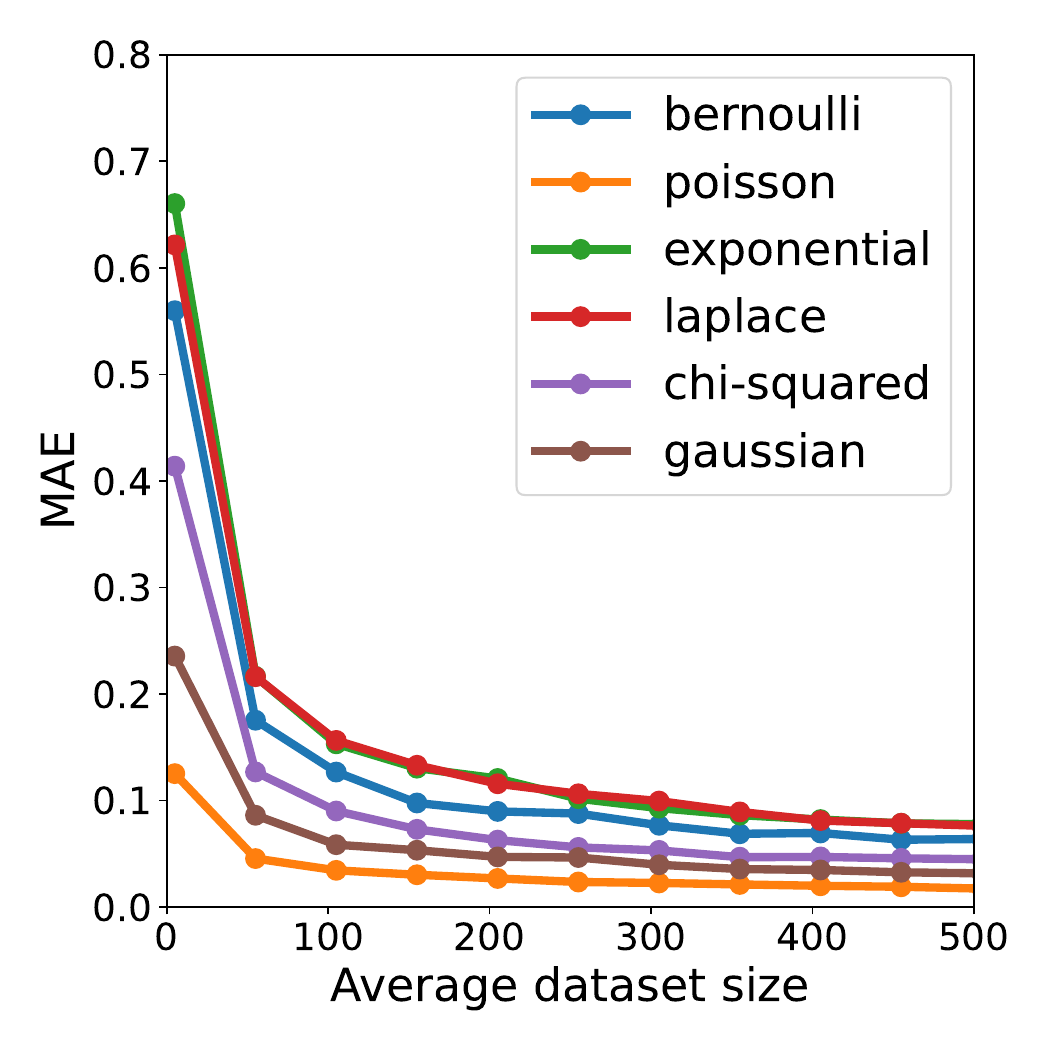}
    \caption{$\tau=0.5$}
\end{subfigure}
\hfill
\begin{subfigure}[b]{0.32\textwidth}
    \centering
    \includegraphics[width=\textwidth]{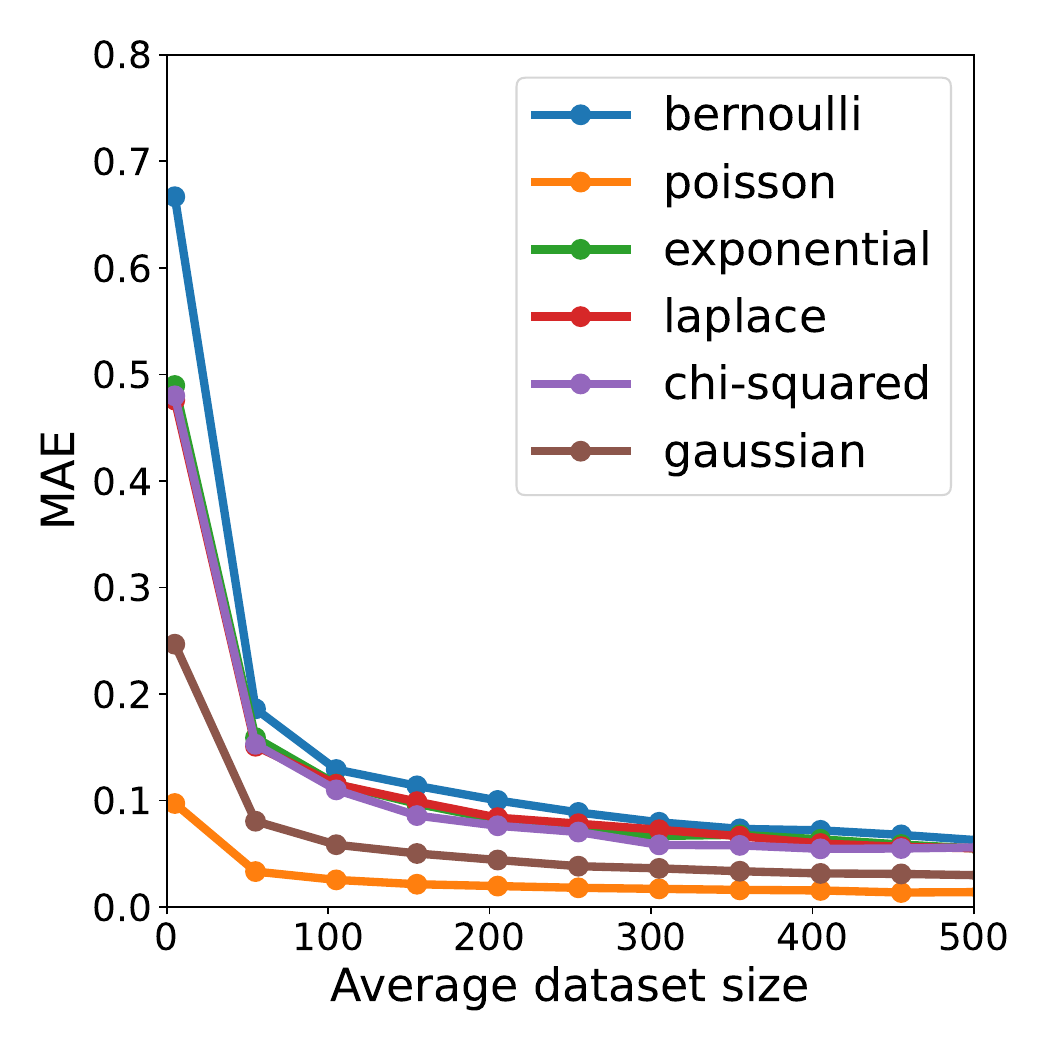}
    \caption{$\tau=2.0$}
\end{subfigure}
\caption{MAE for recovering the true exponential tilt $\tau$ across six one-dimensional EFs as the average number of observations per panel cell increases. The panel dimensions are fixed at $N=32$ and $T=64$. Recovery improves for all models and effect strengths $\tau \in \{0.1,0.5,2.0\}$ as the dataset size increases. Results are averaged over 20 replications.}
\label{fig:tilt_recovery_sparsity}
\end{figure}
In the first experiment, we investigate the ability of EFSC to recover the effect of an exponential-tilt intervention across six univariate exponential families: Bernoulli, Poisson, exponential, Laplace (known mean), Chi-Squared, and Gaussian (known variance). For a panel of size $N=32$ and $T=64$, we generate $m_{ij}\sim 1+\mathrm{Poisson}(\lambda_m)$ independent observations at each cell from $y_{ijk}\sim \mathrm{EF}(\eta_{ij})$, and then simulate an intervention $\tilde{\eta}_{ij}=\eta_{ij}+\tau$ for a subset of treated units from $t_0=52$. We fit EFSC using the BBVI procedure of Section~\ref{subsec:bbvi} to approximate the posterior distributions of the treated and counterfactual data-generating processes, and compute the posterior effect measure $\mathrm{ECE}_{ij}$ in Equation~\eqref{eq:causal_effect_posterior_diff}. In this experiment, we approximate both the ECE and ECD using plug-in natural-parameter estimates reconstructed from the variational posterior means of the factorization
parameters. The resulting ECE is expected to recover the true intervention magnitude $\tau$ for every treated cell in the panel.

Figure~\ref{fig:tilt_recovery_sparsity} reports the resulting MAE between the estimated ECE and the true tilt parameter for $\tau\in\{0.1,0.5,2\}$ as the average dataset size $\E{m_{ij}}=1+\lambda_m$ increases. The proposed model and inference procedure recover the intervention effect more accurately as the sampling density increases. This pattern is consistent across all exponential-family models, indicating that additional within-cell observations yield more informative sufficient statistics and, consequently, more accurate estimates of the latent natural parameters. Models parameterized by their mean (e.g., Poisson and Gaussian) perform particularly well, even when data are scarce (e.g., $\lambda_m=5$). In contrast, the estimation of proportions, rates, or scale parameters is more challenging in this regime. One possible explanation is the nonlinear reparameterizations required for unconstrained optimization, e.g., $\eta=\exp(z)-1$ in the Chi-Squared family. See Appendix~\ref{app:exptilt} for additional details.

We also evaluated the ability of EFSC to estimate the true expected causal divergence (ECD) in Equation~\eqref{eq:causal_effect_posterior_kl}. Across all exponential families, the estimated ECD generally tracked the true divergence, with accuracy improving as the number of observations per panel cell increased. Further details and ECD recovery plots are reported in Appendix~\ref{app:exp_tilt_ecd}.

\subsection{Intervention Benchmarks on Gaussian Panels}\label{subsec:gaussian_intervention_benchmarks}
We now focus on synthetic Gaussian panels and evaluate EFSC under the three types of interventions defined in Section~\ref{subsec:types_interventions}: exponential tilts of the natural parameters, structured latent interventions, and heavy-tailed contamination of treated outcomes. We simulate the data from the generating process in Appendix~\ref{app:generative_model} for a univariate Gaussian, which is a two-parameter EF with $\mbeta_{ij}=(\mu_{ij}/\sigma_{ij}^2,
-1/(2\sigma_{ij}^2))^\top\in\mathbb{R}\times(-\infty,0)$; see also Appendix~\ref{app:benchmark_gaussian_panels}. The unconstrained predictors corresponding to $\mbeta_{ij}$ are generated from a shared latent factorization and then mapped into the Gaussian natural-parameter space. Then we draw $m_{ij}$ samples for each cell of the panel from the corresponding distribution $y_{ijk}|\mbeta_{ij}$. For the exponential-tilt experiments, the dataset size is drawn independently
as $m_{ij}\sim 1+\operatorname{Poisson}(\lambda_m)$ for each panel cell. The structured-intervention and Student-$t$ experiments instead use the fixed cell sizes reported in the corresponding tables. The dimension of the latent factors, $\mbtheta_i$ and $\mbbeta_j$, is set to $r=2$; further details on the experimental setup are in Appendix~\ref{app:benchmark_gaussian_panels}.

We benchmark three models: i) the classic synthetic controls from \citet{abadie2010synthetic} for multiple treated units on the sample mean of each cell, $\bar{Y}_{ij}$. This model is expected to perform well in recovering the true average effects from linear interventions in $\mbeta$ like an exponential tilt. ii) SC on the cell-wise maximum likelihood estimates from $y_{ijk}$, i.e., on the matrix with elements $\hat{\mbeta}_{ij} = \arg\max_{\mbeta_{ij}} p(\mby_{ij} \mid \mbeta_{ij})$. This is an exponential family SC approach that works directly on the natural parameters and estimates counterfactuals using the available information from the MLE estimates. We denote this method as EFSC-MLE, and expect it to outperform the outcome-based version of SC on nonlinear interventions. iii) Our factorization-based model, labeled EFSC-PMF. Causal effects are calculated for the different models using the estimands proposed in Section~\ref{subsec:method_overview}. In the reported EFSC-PMF results, posterior quantities are approximated using either Monte Carlo draws or plug-in reconstructions evaluated at the variational posterior means, as detailed in Appendix~\ref{app:benchmark_gaussian_panels}.

\begin{table}[t]
\centering
\caption{MAE for recovering the true exponential tilts $\tau$ from estimated counterfactuals in Gaussian panels. EFSC-PMF, which leverages the latent factorization, improves over traditional synthetic control and MLE-based counterfactuals at the response level, and over the MLE-based estimator at the natural-parameter level. Results are averaged over 20 independent replications. Bold indicates the best-performing value in each setting. }\label{tab:exponential_tilt_results}
\begin{tabular}{llccccc}
\toprule
Model & Estimand & $\tau=0.1$ & $\tau=0.25$ & $\tau=0.5$ & $\tau=1$ & $\tau=2$ \\
\midrule
Synthetic Control & $\bar{Y}_{ij}-\sum_{l\in\mathcal{C}} w_{il}\bar{Y}_{lj}$
  & 0.169 & 0.169 & 0.169 & 0.169 & 0.169 \\
EFSC - MLE & $\bar{Y}_{ij}-\hat{\mu}^{\text{ctrl}}_{ij}$
  & 0.169 & 0.169 & 0.169 & 0.169 & 0.169 \\
EFSC - PMF & $\bar{Y}_{ij}-\hat{\mu}^{\text{ctrl}}_{ij}$
  & \textbf{0.161} & \textbf{0.161} & \textbf{0.161} & \textbf{0.161} & \textbf{0.161} \\
\addlinespace
EFSC - MLE & $\hat{\eta}_{ij}^{\text{treat}}-\hat{\eta}_{ij}^{\text{ctrl}}$
  & 0.180 & 0.184 & 0.195 & 0.239 & 0.367 \\
EFSC - PMF & $\mathbb{E}[{\eta}_{ij}^{\text{treat}}-{\eta}_{ij}^{\text{ctrl}}|\mby]$
  & \textbf{0.092} & \textbf{0.095} & \textbf{0.101} & \textbf{0.122} & \textbf{0.194} \\
\bottomrule
\end{tabular}
\end{table}

Unless stated otherwise, mean absolute errors (MAE) are computed across treated post-treatment cells within each replication and then averaged over replications; for vector-valued effects, the average also includes natural-parameter components.

Table~\ref{tab:exponential_tilt_results} presents the MAE between the estimated effects and the true effect from an exponential tilt, $\tilde{\mbeta}_{ij}=\mbeta_{ij}+(\tau,0)^T$, with increasing values of $\tau$ on a panel with $N=32$ units, $T=128$ time steps, and ragged cell sizes $m_{ij}\sim 1+\mathrm{Poisson}(\lambda_m)$ with $\lambda_m=55$. When the estimand is defined at the outcome level, all methods perform similarly, with the PMF-based estimator achieving a small improvement over both the synthetic control and the MLE-based approach. To place all estimators on the same natural-parameter tilt scale, the response-level effects in the first three rows of
the table are divided cellwise by the true simulated variance \(\sigma_{ij}^2\). The final two rows report recovery of the first natural-parameter component. Further details are provided in Appendix~\ref{app:benchmark_gaussian_panels}.

The outcome-level MAEs are insensitive to the strength of the intervention in the natural parameter space, but when it comes to recovering natural-parameter tilt, the differences are substantial. The MLE-based estimator consistently exhibits larger errors than EFSC-PMF across all values of $\tau$, indicating that leveraging latent structure is critical for accurately recovering distributional effects.

Overall, these results highlight that while outcome-level estimands may mask differences between methods, inference at the level of natural parameters reveals clear advantages of the factorization approach for lower dataset sizes. Figure~\ref{fig:exp_tilt_panel} further illustrates this behavior for the vector tilt $\mbtau=(0.4,-0.6)^\top$, showing that the PMF advantage is largest at smaller average cell sizes and decreases as local MLE estimates become more stable.

\begin{figure}[t]
    \centering
    \includegraphics[width=0.98\linewidth]{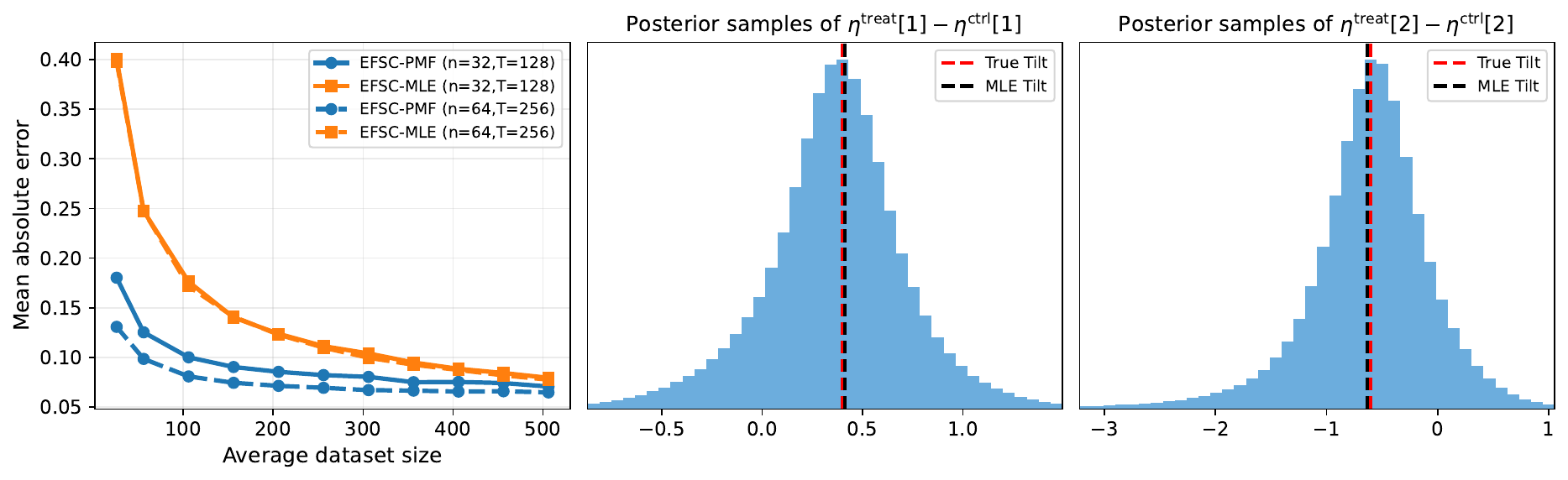}
    \caption{EFSC-PMF and EFSC-MLE recovery of a Gaussian exponential tilt $\mbtau=(0.4,-0.6)^\top$. Left: MAE versus average observations per cell for two panel sizes, averaged over 20 replications. Center and right: pooled EFSC-PMF posterior draws of $\mbeta^{\mathrm{treat}}_{ij}-\mbeta^{\mathrm{ctrl}}_{ij}$ for treated post-treatment cells in one representative panel; vertical lines mark the true tilt and EFSC-MLE estimate. Both estimators are centered near the true tilt.}\label{fig:exp_tilt_panel}
\end{figure}

We also evaluate the ability of the EFSC-PMF and EFSC-MLE models to recover the true KL divergence from the treated distribution to the counterfactual untreated distribution under a structured intervention with interaction between the latent factors:
\begin{align}
\tilde{\eta}_{ij}[1] &= \eta_{ij}[1] + \kappa_1 \, \sigma(\mbtheta_i^\top \mbw_1) \, \sigma(\mbbeta_j^\top \mbw_2), \\
\tilde{\eta}_{ij}[2] &= \eta_{ij}[2] - \kappa_2 \, \sigma(\mbtheta_i^\top \mbw_1) \, \sigma(\mbbeta_j^\top \mbw_2), \quad \tilde{\eta}_{ij}[2] < 0,
\end{align}
where $\sigma(\cdot)$ denotes the sigmoid function, $\kappa_1,\kappa_2\geq 0$ are constants, and $\mbw_1,\mbw_2\in\mathbb{R}^r$ are unit-norm vectors. 

Table~\ref{tab:kl_mae} reports the cell-wise MAE in recovering the true KL divergence across different intervention strengths $(\kappa_1,\kappa_2)$ and fixed sample sizes per cell $m$. EFSC-PMF substantially outperforms the MLE-based approach in low-data regimes, highlighting the advantage of leveraging latent structure to learn across units and time. As the number of observations per cell increases, the performance gap narrows, and MLE becomes competitive. These results corroborate our intuition that PMF provides significant gains in recovering distributional effects under structured latent interventions in sparse and moderate-data settings, while MLE requires larger sample sizes to achieve comparable performance.

\begin{table}[t]
\centering
\caption{MAE for recovering the true KL divergence under structured latent interventions on Gaussian panels. EFSC-PMF achieves lower errors than the MLE-based estimator, especially in sparse-data regimes. Results are averages over 20 replications.}\label{tab:kl_mae}
\begin{tabular}{llccccc}
\toprule
Model & $(\kappa_1,\kappa_2)$ & $m=5$ & $m=25$ & $m=50$ & $m=100$ & $m=200$ \\
\midrule
EFSC - MLE & (0.2, 0.1) & 0.812 & 0.052 & 0.024 & 0.012 & 0.006 \\
EFSC - PMF & (0.2, 0.1) & \textbf{0.061} & \textbf{0.013} & \textbf{0.008} & \textbf{0.004} & \textbf{0.003} \\
\addlinespace
EFSC - MLE & (0.6, 0.5) & 0.648 & 0.046 & 0.024 & 0.014 & 0.009 \\
EFSC - PMF & (0.6, 0.5) & \textbf{0.056} & \textbf{0.014} & \textbf{0.011} & \textbf{0.007} & \textbf{0.005} \\
\addlinespace
EFSC - MLE & (1.6, 1.5) & 0.525 & 0.055 & 0.037 & 0.024 & 0.017 \\
EFSC - PMF & (1.6, 1.5) & \textbf{0.062} & \textbf{0.025} & \textbf{0.017} & \textbf{0.013} & \textbf{0.010} \\
\bottomrule
\end{tabular}
\end{table}

Finally, we consider a simple misspecification experiment in which the treated post-treatment Gaussian observations are replaced by heavy-tailed Student-$t$ observations. The counterfactual distribution remains Gaussian, but the treated distribution is no longer in the exponential family. We compute the true cell-wise KL divergence from the treated Student-$t$ distribution to the counterfactual Gaussian distribution by Monte Carlo and
compare it with the KL induced by the Gaussian EFSC-PMF and EFSC-MLE reconstructions. Full details are provided in Appendix~\ref{app:benchmark_gaussian_panels}. Table~\ref{tab:student_t_corruption_kl} shows that EFSC-PMF is beneficial in sparse panels, where borrowing information across units and time stabilizes the reconstruction. As the number of observations per cell increases, the local MLE becomes
competitive under stronger misspecification, reflecting the increasing stability of local moment estimates despite the greater sampling variability
induced by heavy-tailed observations.

\begin{table}[t]
\centering
\caption{MAE for recovering the true KL divergence under heavy-tailed Student-$t$ corruption on Gaussian panels. EFSC-PMF is most beneficial in sparse panels, whereas the MLE-based estimator becomes competitive as the number of observations per cell increases. Results are averaged over 20 replications; the true divergence is approximated via MC.}\label{tab:student_t_corruption_kl}
\begin{tabular}{llccccc}
\toprule
Model & d.f. & $m=5$ & $m=25$ & $m=50$ & $m=100$ & $m=200$ \\
\midrule
EFSC - MLE & $80$ & 0.871 & 0.057 & 0.027 & 0.013 & 0.007 \\
EFSC - PMF & $80$ & \textbf{0.053} & \textbf{0.015} & \textbf{0.014} & \textbf{0.010} & {0.007} \\
\addlinespace
EFSC - MLE & $40$ & 0.874 & 0.059 & 0.028 & 0.013 & 0.006 \\
EFSC - PMF & $40$ & \textbf{0.057} & \textbf{0.014} & \textbf{0.013} & \textbf{0.010} & {0.006} \\
\addlinespace
EFSC - MLE & $20$ & 0.773 & 0.057 & 0.026 & 0.012 & \textbf{0.005} \\
EFSC - PMF & $20$ & \textbf{0.055} & \textbf{0.015} & \textbf{0.011} & \textbf{0.010} & 0.006 \\
\addlinespace
EFSC - MLE & $10$ & 0.928 & 0.059 & 0.024 & \textbf{0.011} & \textbf{0.006} \\
EFSC - PMF & $10$ & \textbf{0.057} & \textbf{0.016} & \textbf{0.012} & 0.012 & 0.008 \\
\addlinespace
EFSC - MLE & $5$ & 1.078 & 0.066 & 0.038 & \textbf{0.033} & \textbf{0.038} \\
EFSC - PMF & $5$ & \textbf{0.059} & \textbf{0.035} & \textbf{0.036} & 0.038 & 0.040 \\
\addlinespace
EFSC - MLE & $3$ & 1.025 & 0.219 & 0.196 & 0.185 & 0.187 \\
EFSC - PMF & $3$ & \textbf{0.151} & \textbf{0.163} & \textbf{0.176} & \textbf{0.173} & \textbf{0.178} \\
\bottomrule
\end{tabular}
\end{table}

\subsection{Distributional Placebo Tests}\label{subsec:placebo_tests}
We validate our framework using placebo tests based on the change in the KL divergences induced by the model. Define the increment in distributional discrepancy after the intervention as
\begin{align}
  \Delta_{\mathrm{KL}}=\mathrm{ECD}^{\mathrm{post}}-\mathrm{ECD}^{\mathrm{pre}},\label{eq:placebo_statistic}
\end{align}
where
\begin{align}
\mathrm{ECD}^{\mathrm{pre}}=
\frac{1}{|\mathcal{T}||\mathcal{P}|}
\sum_{i\in\mathcal{T}}
\sum_{j\in\mathcal{P}}
\mathrm{ECD}_{ij},
\end{align}
and similarly for the post-treatment period. These quantities represent the average expected causal divergence, as defined in Equation~\eqref{eq:causal_effect_posterior_kl}, before and after the intervention. In this experiment, we approximate the ECD 
terms by evaluating the KL divergence at natural parameters reconstructed from the variational posterior mean of the factorization.

For inference, we generate $B$ placebo assignments by randomly selecting subsets of control units with the same cardinality as the treated set
$\mathcal T$. In the synthetic experiments below, the control pool is sufficiently small that we enumerate all
$B=\tbinom{|\mathcal C|}{|\mathcal T|}$ same-cardinality subsets rather than sampling them, so the reference distribution contains every admissible placebo block. For each assignment, $b=1,\ldots,B$, we recompute the corresponding placebo statistic
\begin{align}
\Delta_{\mathrm{KL}}^{(b)}=\mathrm{ECD}^{\mathrm{post},(b)}-
\mathrm{ECD}^{\mathrm{pre},(b)}.
\end{align}
The placebo distribution is motivated by the randomization-style argument that, without a distributional treatment effect, the observed treated block should not produce systematically larger values of $\Delta_{\mathrm{KL}}$ than same-cardinality blocks of control units \citep{abadie2010synthetic,good2005permutation}. In short, the placebo statistics provide an empirical reference distribution for the observed statistic in Equation~\eqref{eq:placebo_statistic}. In the experiments below, we use the no-leakage implementation in Algorithm~\ref{alg:efsc-placebo}, which excludes the treated post-treatment cells from all conditioning sets used to construct the placebo distribution.

\begin{algorithm}[t]
\SetAlgoNoLine
\DontPrintSemicolon
\caption{No-leakage EFSC placebo test}\label{alg:efsc-placebo}
\KwIn{Panel datasets $\mby$, treated and control unit sets $\mathcal{T}$, $\mathcal{C}$, pre-treatment periods $\mathcal{P}$, post-treatment periods $\mathcal{Q}$, number of placebo assignments $B$}
\KwOut{Observed statistic $\Delta_{\mathrm{KL}}^{\mathrm{obs}}$, placebo statistics $\{\Delta_{\mathrm{KL}}^{(b)}\}_{b=1}^B$, empirical $p$-value}
Compute the observed statistic $\Delta_{\mathrm{KL}}^{\mathrm{obs}}$ from Equation~\eqref{eq:placebo_statistic} using EFSC fits for the pre- and post-treatment ECDs\;
\For{$b=1,\ldots,B$}{
Draw a placebo treated set $\mathcal{T}^{(b)}\subseteq\mathcal{C}$ with $|\mathcal{T}^{(b)}|=|\mathcal{T}|$\;

Fit the EFSC models required to recompute \(\mathrm{ECD}^{\mathrm{pre},(b)}\) and
\(\mathrm{ECD}^{\mathrm{post},(b)}\), excluding actual treated post-treatment
cells from all placebo conditioning sets and excluding the placebo
post-treatment block from its corresponding counterfactual conditioning set\;
Compute $\Delta_{\mathrm{KL}}^{(b)}=\mathrm{ECD}^{\mathrm{post},(b)}-
\mathrm{ECD}^{\mathrm{pre},(b)}$
}
Compute the right-tail empirical probability:
$\hat p=(1+\sum_{b=1}^B
\mathbb{I}\{\Delta_{\mathrm{KL}}^{(b)}
\geq \Delta_{\mathrm{KL}}^{\mathrm{obs}}\})/(B+1)$
\end{algorithm}

Figure~\ref{fig:placebo_tests} illustrates the placebo procedure in two synthetic settings. In the left panel, we consider a Poisson panel under exponential tilts of increasing magnitude. The placebo distribution is computed once from assignments of control units with the same cardinality as the treated set, following Algorithm~\ref{alg:efsc-placebo}, along with the observed statistics corresponding to each tilt. As the intervention becomes stronger, the observed $\Delta_{\mathrm{KL}}$ tends to move farther into the right tail of the placebo distribution. Appendix~\ref{app:placebo_tests} reports analogous placebo distributions for the six one-parameter exponential families used in Section~\ref{subsec:tilting_across_efs}. 

In the right panel, we consider a two-parameter Gaussian response and compare our three intervention mechanisms: an exponential tilt in the natural parameter, a structured latent intervention depending on the unit and time factors, and a Student-$t$ replacement of the treated post-treatment observations. This experiment evaluates whether the same statistic can detect distributional changes arising from different sources: a direct shift in the EF natural parameter, a structured latent perturbation, and a misspecified heavy-tailed response distribution. In all cases, the observed values lie in the right tail of the placebo distribution, indicating that the ECD statistic captures distributional discrepancies induced by these types of interventions. For the Student-$t$ replacement, the Gaussian plug-in statistic is compared formally with the placebo distribution. We additionally show the MC-evaluated KL under the known Student-$t$ distribution as an oracle diagnostic; because it is not computed using the same estimation rule as the placebo statistics, we do not assign it a placebo \(p\)-value. The oracle value is larger than the Gaussian plug-in KL, reflecting the limited sensitivity of the Gaussian approximation to tail-shape changes.

\begin{figure}[t]
    \centering
    \begin{subfigure}{0.48\linewidth}
        \centering
        \includegraphics[height=0.22\textheight]{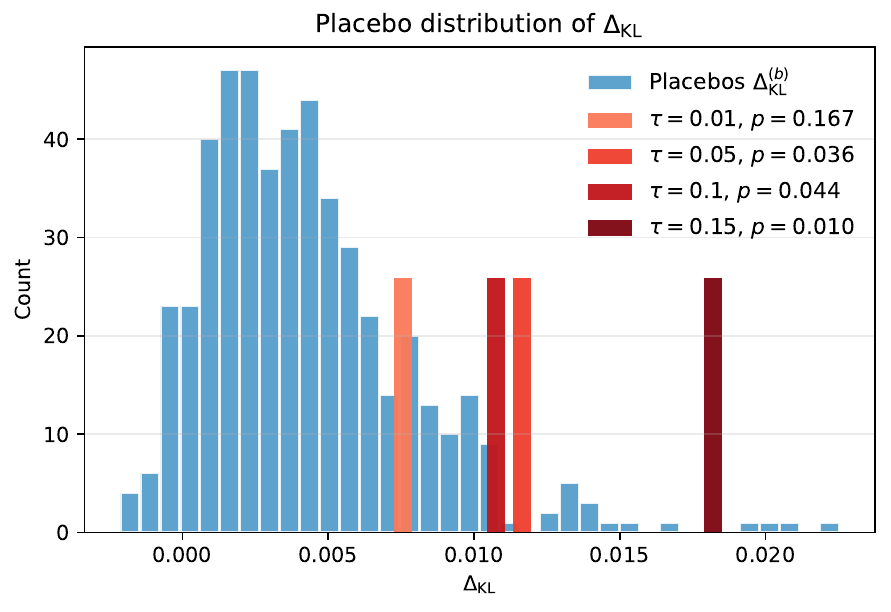}
        \caption{Poisson exponential tilting.}
    \end{subfigure}
    \hfill
    \begin{subfigure}{0.48\linewidth}
        \centering
        \includegraphics[height=0.22\textheight]{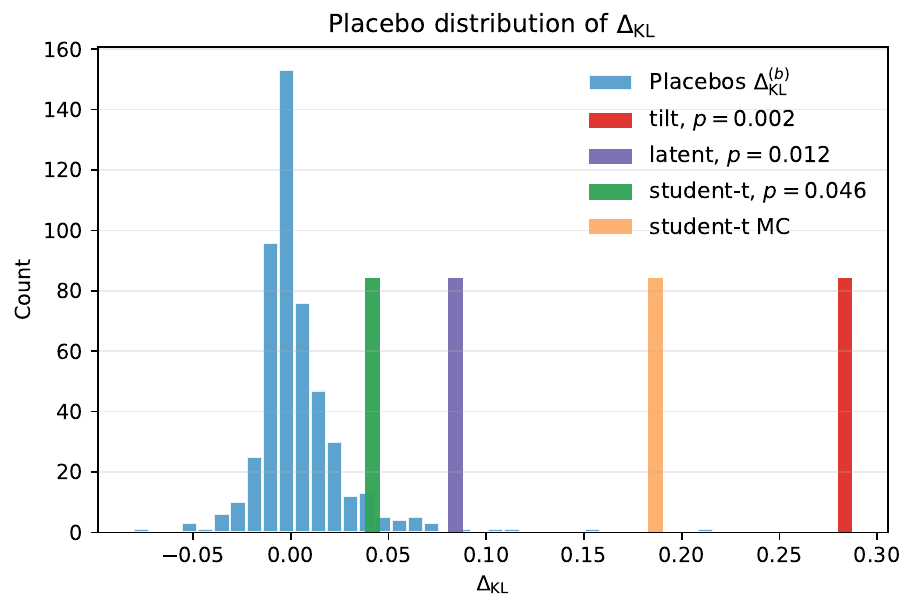}
        \caption{Gaussian intervention benchmark.}
    \end{subfigure}
    \caption{Distributional placebo tests on synthetic panels. Left: placebo distribution of $\Delta_{\mathrm{KL}}$ for a Poisson panel, with observed statistics for exponential tilts of increasing magnitude. Right: placebo distribution for a Gaussian panel, with observed statistics for
an exponential tilt, a structured latent intervention, and a Student-$t$ replacement evaluated using both Gaussian plug-in and MC-based KL. All interventions produce statistics in the right tail of the empirical placebo distribution.}
    \label{fig:placebo_tests}
\end{figure}

\subsection{Distributional Effects of the Medicaid Expansion}\label{subsec:medicaid}
Finally, we apply EFSC to evaluate the Medicaid expansion introduced under the
Affordable Care Act (ACA; \citeauthor{patientprotection2010},
\citeyear{patientprotection2010}). The expansion generally extended Medicaid
eligibility to nonelderly adults with incomes up to 138\% of the federal poverty
level in states that adopted the policy. States implemented the expansion at
different times, while others did not adopt it during our sample period,
producing a staggered treatment setting, see Figure~\ref{fig:medicaid_expansion}. Our goal is to estimate how the policy
changed the full distribution of health insurance coverage.

We use individual-level microdata from the American Community Survey (ACS),
obtained through IPUMS USA\footnote{IPUMS USA, University of Minnesota,
\url{https://usa.ipums.org/usa/}.} \citep{ruggles2025ipumsusa}, restrict the sample to adults aged 19--64
from low-income families, and classify each respondent into one of five mutually
exclusive insurance categories: uninsured, Medicaid, employer-sponsored
insurance, direct-purchase private insurance, or other coverage. Further details
on sample construction and insurance-category assignment are provided in
Appendix~\ref{app:medicaid_real_data}.

Within each state-year cell, we use the ACS person-level survey weights to aggregate respondents into counts over the five insurance categories. Thus each panel observation is a state-year multinomial response. The resulting dataset is a complete state-year panel covering the 50 states and the District of Columbia over 2008--2019. Treatment adoption is staggered: states enter treatment according to their Medicaid expansion implementation year, while states that do not expand during the 2008--2019 window serve as untreated donor units. States that expanded after 2019 are therefore considered untreated for the purposes of this study.  

\begin{figure}
    \centering
    \includegraphics[width=0.9\linewidth]{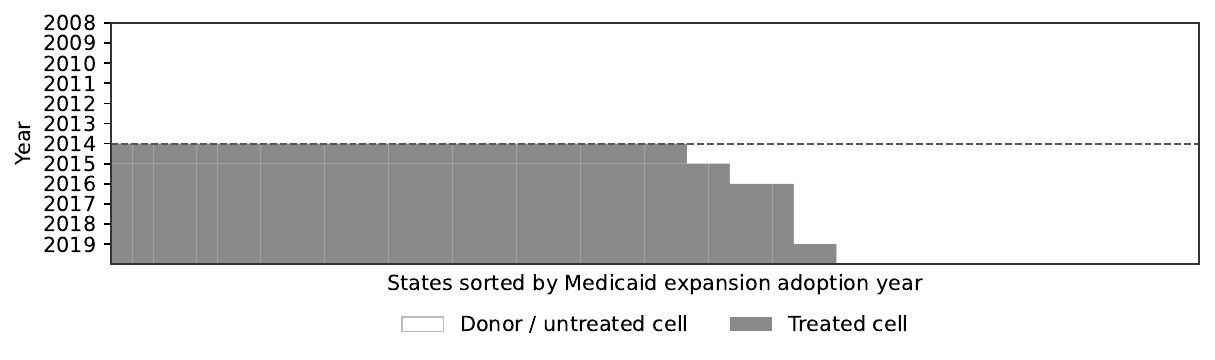}
    \caption{Medicaid expansion treatment timing for the 51 state-level units over
the 2008--2019 sample period. Approximately 70\% of the \(51\times12=612\)
state-year cells are donor/untreated cells. Most adopting states expanded
Medicaid in 2014, while 17 units did not adopt before 2020 and therefore remain
untreated throughout the estimation window.}\label{fig:medicaid_expansion}
\end{figure}

For each state $i$ and year $j$, let
$\mathbf y_{ij}=(y_{ij1},\ldots,y_{ijC})$ denote the weighted counts across
the $C=5$ insurance categories, and let $m_{ij}=\sum_{c=1}^{C}y_{ijc}$ be 
the survey-weighted total count in the cell. We model the category composition through a multinomial observation model,
\begin{align}
    \mathbf y_{ij} \mid m_{ij}, \boldsymbol{\pi}_{ij}
    &\sim\operatorname{Multinomial}(m_{ij}, \boldsymbol{\pi}_{ij}).
\end{align}
Because the ACS weights produce population-representative weighted totals rather
than literal independent sample sizes, we do not fit the model using the raw
weighted totals directly. Instead, for estimation we rescale each state-year
count vector to a common effective cell size of 1,000 while preserving the
survey-weighted category proportions. This keeps the likelihood on a comparable
scale across states and prevents large-population states from dominating the
fit solely because of their weighted cell totals; see Appendix~\ref{app:medicaid_real_data} for further details.

We use category $C$ ('other') as the reference category. The natural parameters are the
category logits
\begin{align}
    \eta_{ij}[c]&=\log\frac{\pi_{ijc}}{\pi_{ijC}},\quad c=1,\ldots,C-1.
\end{align}
For each non-reference category $c=1,\ldots,C-1$, we model the corresponding
natural parameter using an additive state effect, an additive time effect, and a
low-rank interaction term:
\begin{align}
    \eta_{ij}[c]&=\alpha_{i}[c]+\gamma_{j}[c] +
    \boldsymbol{\theta}_{i}[c]^{\top}\boldsymbol{\beta}_{j}[c].
\end{align}
Here $\alpha_{i}[c]$ captures category-specific state heterogeneity,
$\gamma_{j}[c]$ captures category-specific time variation, and
$\boldsymbol{\theta}_{i}[c]^{\top}\boldsymbol{\beta}_{j}[c]$ captures residual
state-time dependence through a low-rank factorization. We use a separate latent factorization for each logit, and fit the
multinomial EFSC model using BBVI with a mean-field Gaussian approximation over
the latent parameters and fixed independent Gaussian priors.

We train the counterfactual model on all donor/untreated cells, excluding
treated post-expansion state-year cells. We fit a second model on the
treated post-expansion cells. We then compute treatment effects by comparing 
the probability vectors implied by the variational posterior-mean logits 
under the treated and counterfactual fitted models.

\begin{table}[t]
\centering
\caption{Average estimated treated-post insurance distributions. Medicaid
coverage among low-income adults in treated states is $14.8$ percentage points higher under the treated model than under the counterfactual. The table reports the probabilities implied by the variational posterior-mean logits under both models, averaged over treated post-expansion state-year cells, and their difference.}\label{tab:acs_probability_shifts}
\begin{tabular}{lccc}
\toprule
Insurance category & Counterfactual & Treated & Difference \\
\midrule
Uninsured & 0.263 & 0.173 & -0.090 \\
Medicaid  & 0.355 & 0.503 &  0.148 \\
Employer  & 0.237 & 0.201 & -0.037 \\
Private   & 0.100 & 0.079 & -0.020 \\
Other     & 0.045 & 0.044 & -0.001 \\
\bottomrule
\end{tabular}
\end{table}

\begin{figure}
    \centering
    \includegraphics[width=1.0\linewidth]{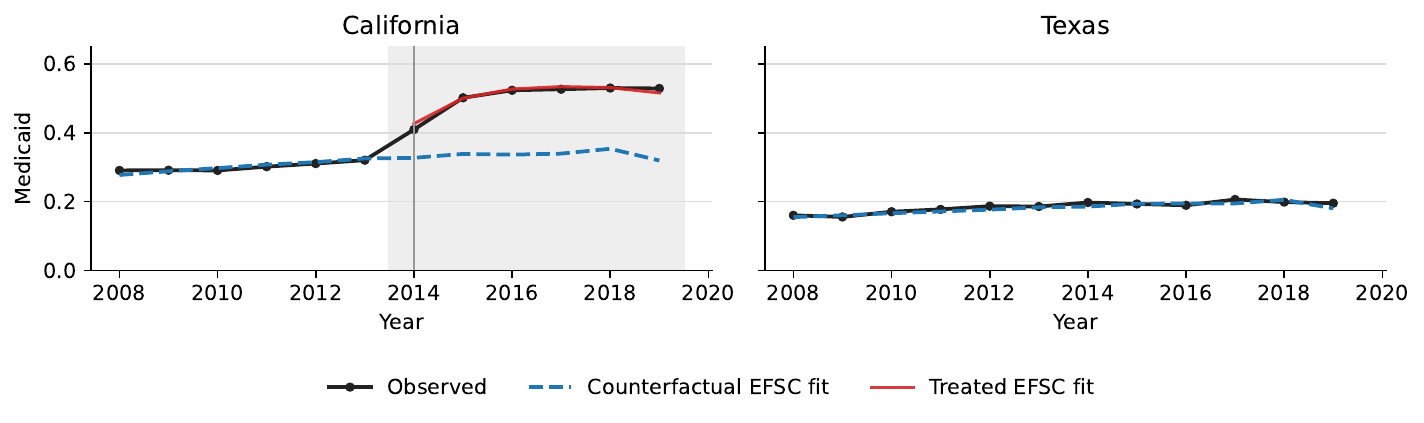}
    \caption{Estimated Medicaid probabilities for selected states over
2008--2019. The plot compares observed survey-weighted Medicaid proportions with the
probabilities implied by the variational posterior-mean logits under the
counterfactual model for both states and under the treated model for
California's post-expansion years. California illustrates a treated state with a sharp post-2014 increase
in Medicaid coverage relative to its counterfactual fit, while Texas is a
never-treated state during the analysis window.}\label{fig:medicaid_probability_paths}
\end{figure}

Table~\ref{tab:acs_probability_shifts} shows a clear redistribution of probability mass from uninsured coverage to Medicaid. The average treated-post Medicaid coverage probability increases from 0.355 under the counterfactual model to 0.503 under the treated model, while the uninsured probability decreases by 0.09. The remaining categories show smaller declines, and the ``Other'' category is essentially unchanged. This aggregate result is consistent with the primary policy mechanism of Medicaid expansion: coverage shifts away from uninsured status and toward Medicaid enrollment. Figure~\ref{fig:medicaid_probability_paths} illustrates this pattern for two specific states: California and Texas. California exhibits an increase in Medicaid coverage after the 2014 expansion relative to its counterfactual trajectory. In contrast, Texas, which does not expand Medicaid during the sample period, remains stable and serves as an untreated reference trajectory.

The previous aggregate table and selected trajectories summarize the main direction of the effect, but they do not reveal how distributional changes vary across states. To highlight this heterogeneity, we compute two state-level summaries. First, for each treated state $i$, we compute its average Medicaid probability effect over treated post-expansion years $\mathcal Q_i$,
\begin{align}
\Delta^{\mathrm{Medicaid}}_i=
\frac{1}{|\mathcal Q_i|}
\sum_{j\in\mathcal Q_i}
\left(
\hat{\pi}^{\mathrm{treat}}_{ij,\mathrm{Medicaid}}-
\hat{\pi}^{\mathrm{ctrl}}_{ij,\mathrm{Medicaid}}
\right).
\end{align}
Second, we compute a posterior-mean approximation to the corresponding average expected causal divergence,
\begin{align}
\mathrm{ECD}_i=
\frac{1}{|\mathcal Q_i|}
\sum_{j\in\mathcal Q_i}
\operatorname{KL}
\left(
\hat{\boldsymbol{\pi}}^{\mathrm{treat}}_{ij}
\,\|\, 
\hat{\boldsymbol{\pi}}^{\mathrm{ctrl}}_{ij}
\right).
\end{align}
This approximation evaluates the KL divergence between the treated and
counterfactual probability vectors implied by their respective variational
posterior-mean logits. We report the probability-vector KL, rather than the
multinomial-cell KL multiplied by the effective cell size, so that the quantity
is on the scale of distributional change in category probabilities.

\begin{figure}
    \centering
    \includegraphics[width=0.67\linewidth]{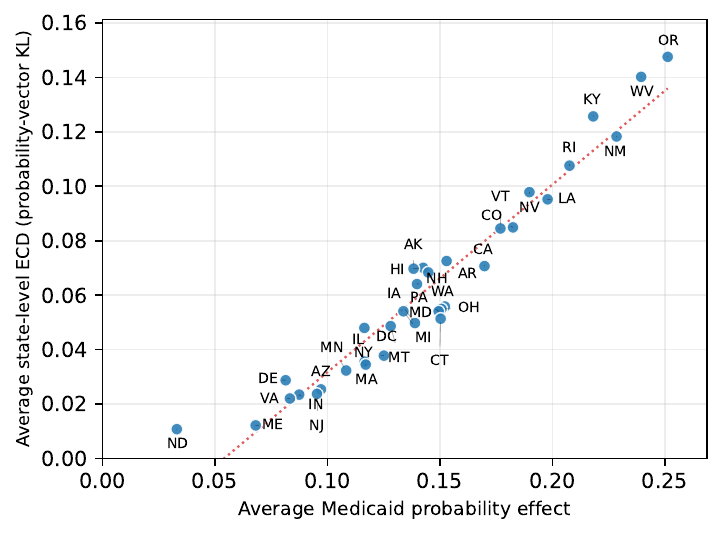}
    \caption{State-level Medicaid probability effect versus the posterior-mean approximation
to state-level expected causal divergence (ECD), averaged over treated post-expansion years. States with
large Medicaid probability effects tend to have large distributional effects.}\label{fig:medicaid_state_ecd_scatter}
\end{figure}

Figure~\ref{fig:medicaid_state_ecd_scatter} compares these two summaries across
treated states. The strong positive association shows that larger Medicaid
enrollment effects are generally accompanied by larger distributional effects.
The relationship is not exact because ECD summarizes changes in the full
insurance distribution rather than Medicaid enrollment alone.

To identify which categories drive the state-level distributional effects, we
also compute the average post-treatment probability shift for every category,
\begin{align}
\Delta_{ic}=
\frac{1}{|\mathcal Q_i|}
\sum_{j\in\mathcal Q_i}
\left(
\hat{\pi}^{\mathrm{treat}}_{ijc}-
\hat{\pi}^{\mathrm{ctrl}}_{ijc}
\right).
\end{align}

\begin{figure}
    \centering
    \includegraphics[width=1.0\linewidth]{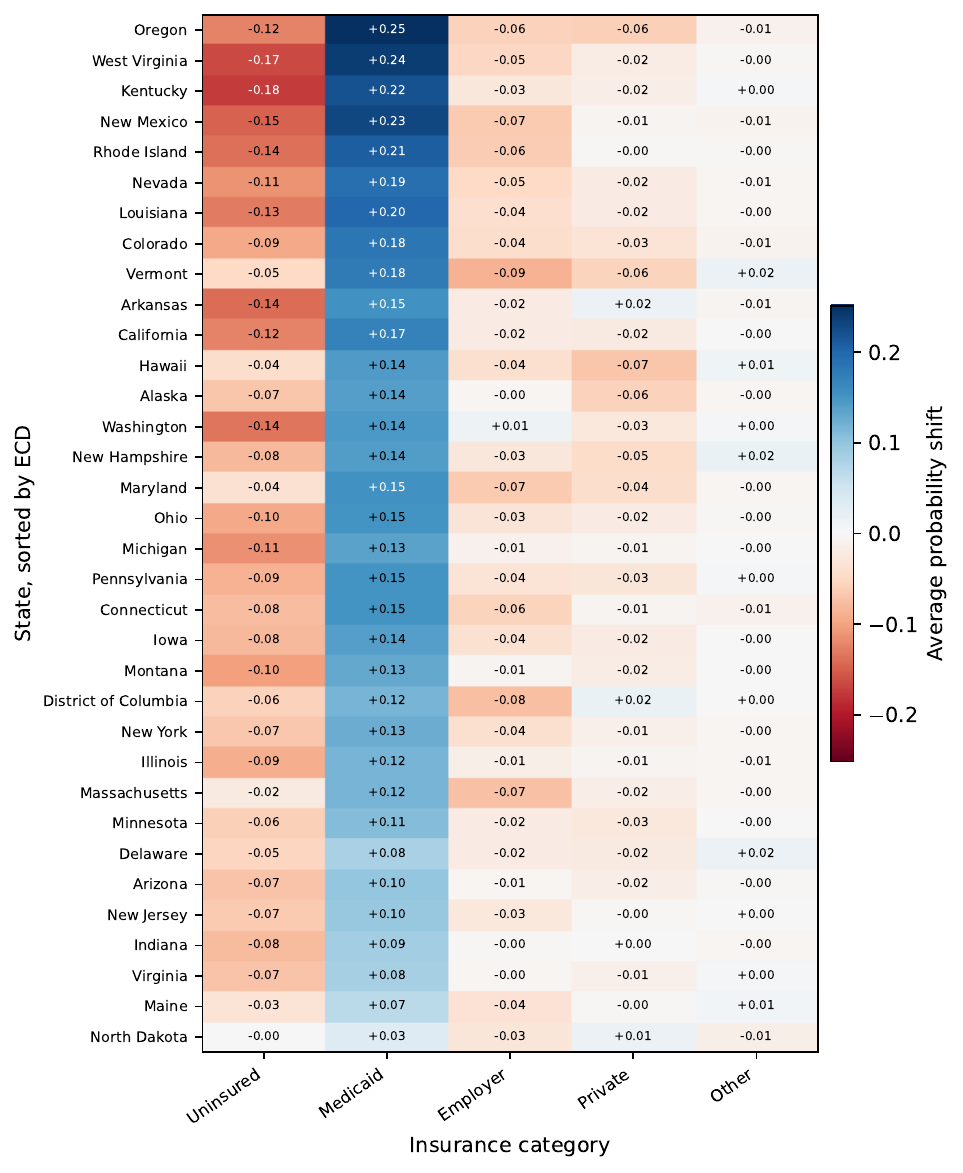}
    \caption{Average post-treatment probability shifts by state and insurance category. The dominant pattern is a shift from uninsured to Medicaid coverage. States are sorted by their average ECD over treated post-expansion years. Positive values indicate categories whose probability increased relative to the counterfactual, while negative values indicate decreases.}\label{fig:medicaid_heatmap}
\end{figure}

Figure~\ref{fig:medicaid_heatmap} shows how the insurance distribution changes across treated states. The dominant pattern is a movement from uninsured coverage to Medicaid coverage, consistent with the averaged results in Table~\ref{tab:acs_probability_shifts}. States such as West Virginia, Kentucky, New Mexico, and Oregon exhibit large reductions in uninsured coverage together with large increases in Medicaid coverage. Other states show smaller Medicaid effects or redistribute probability mass across several categories.

Although state-level ECD is strongly correlated with the Medicaid probability
effect, the heatmap shows that similar Medicaid increases can arise through
different redistributions across the remaining insurance categories. These
state-level differences illustrate one advantage of the proposed distributional
framework. An analysis based on a single outcome, such as the uninsured rate or
Medicaid enrollment alone, would summarize only one component of the treatment
effect and would not distinguish these redistribution patterns. EFSC instead
models the entire multinomial distribution of insurance coverage and therefore
captures how probability mass is redistributed across categories. This
distributional view reveals both the average policy pattern and the
heterogeneity in how different state insurance markets respond to Medicaid
expansion.

Finally, we assess whether the observed distributional change is large relative to a placebo reference distribution. In the synthetic experiments in Section~\ref{subsec:placebo_tests}, placebo assignments were generated by selecting control groups with the same cardinality as the treated group. In the
Medicaid data this direct permutation is not available, since the number of treated expansion states exceeds the number of controls. We therefore use a posterior-predictive placebo procedure that preserves the same
randomization logic while avoiding duplicate control units. We first fit an EFSC generator model only on cells that are not exposed to Medicaid expansion, generate untreated synthetic panels from the fitted latent factorization, and then apply the same no-leakage placebo algorithm to synthetic treated blocks
with the same cardinality and staggered adoption-year schedule as the observed treated states. 

Under the ideal posterior-predictive construction, the predictive mean satisfies
\begin{align}
    \E{\mby^{\text{rep}}|\mby^{\text{ctrl}}}=\EE{q(\Theta|\mby^{\text{ctrl}})}
    {\E{\mby^{\text{rep}}|\Theta}},
\end{align}
where $\mby^{\text{rep}}$ and $\mby^{\text{ctrl}}$ are the synthetic panel and the untreated data, respectively. In the empirical implementation, we use a plug-in approximation that evaluates the fitted factorization at the variational posterior mean, resamples source-state profiles uniformly with
replacement, and draws new multinomial outcomes from the resulting probability vectors. Informally, this predictive procedure generates plausible untreated scenarios, comprising synthetic states and their trajectories of insurance coverage distributions, from which we construct a placebo reference distribution for the observed statistic. In each replication $b$, we independently draw one synthetic untreated panel and one placebo assignment and store the resulting placebo statistic directly, thereby preserving both predictive panel variation and placebo-assignment variation. To the extent that the fitted plug-in predictive distribution approximates the untreated data-generating process, this empirical reference distribution approximates the corresponding placebo null. Full algorithmic details are given in
Appendix~\ref{app:medicaid_posterior_predictive_placebo}.

The test statistic is the change in the posterior-mean approximation to the average ECD,
\begin{align}
\Delta_{\mathrm{KL}}=
\mathrm{ECD}^{\mathrm{post}}-
\mathrm{ECD}^{\mathrm{pre}},
\end{align}
where both terms average the probability-vector KL divergence over the relevant state-year cells. For the observed treated states, the post period consists of treated post-expansion cells and the pre period consists of the corresponding treated pre-expansion cells. Each statistic is constructed from four separate EFSC fits---target and observed fits for the post block and target and observed fits for the pre block---using the no-leakage construction described in Appendix~\ref{app:medicaid_posterior_predictive_placebo}. 

For each placebo replication \(b\), the same calculation is performed after independently generating one synthetic untreated panel and assigning the observed staggered treatment schedule to a same-cardinality block of synthetic untreated states. The resulting placebo statistic is stored directly,
\begin{align}
\Delta_{\mathrm{KL}}^{(b)}=
\mathrm{ECD}^{\mathrm{post},(b)}-
\mathrm{ECD}^{\mathrm{pre},(b)}.
\end{align}
The collection $\{\Delta_{\mathrm{KL}}^{(b)}\}_{b=1}^{5000}$ forms the approximate posterior-predictive placebo distribution.

\begin{figure}
    \centering
    \includegraphics[width=0.6\linewidth]{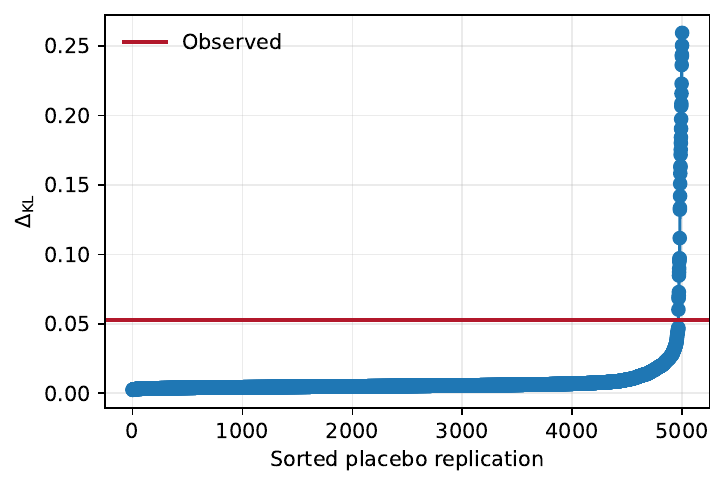}
    \caption{Posterior-predictive placebo distribution for the Medicaid application. The observed statistic lies in the upper-tail of the distribution, with $\hat{p}=0.0072$. The ordered points report $\Delta_{\mathrm{KL}}$ across $5,000$ joint placebo replications generated from the untreated EFSC predictive model, with each replication combining one independently generated synthetic panel and one independently drawn same-cardinality placebo assignment.}\label{fig:medicaid_posterior_predictive_placebo}
\end{figure}

Figure~\ref{fig:medicaid_posterior_predictive_placebo} shows that the observed Medicaid-expansion statistic,
$\Delta_{\mathrm{KL}}^{\mathrm{obs}}=0.0525$, lies in the upper tail of the posterior-predictive placebo distribution. The corresponding upper-tail placebo probability is approximately $\hat p=0.0072$. Even after accounting for predictive variation under the fitted untreated latent factorization and for the staggered adoption structure, the distributional change observed after Medicaid expansion is large relative to the changes produced by joint synthetic-panel and same-cardinality placebo-assignment draws under the untreated model. Together with the category-specific shifts in Figure~\ref{fig:medicaid_heatmap}, this result is consistent with Medicaid expansion being associated with a broad change in the distribution of
insurance coverage, with increased Medicaid enrollment representing the dominant margin of that change.

\section{Conclusion}
We introduced exponential family synthetic controls (EFSC), a probabilistic extension of synthetic control methods for disaggregated panel data. Instead of aggregating each unit-time cell to a scalar outcome, EFSC treats each cell as a dataset generated from an exponential-family distribution and uses a latent factorization to structure the corresponding natural parameters. This allows the model to reconstruct counterfactual data-generating distributions and to define causal estimands beyond mean effects, including posterior natural-parameter effects and KL-based expected causal divergences. We estimate the model with black-box variational inference, yielding a flexible implementation that can be applied across exponential-family observation models and scale to large datasets. 

In synthetic experiments, EFSC recovers exponential tilts across binary, count, continuous, and non-Gaussian panels, and performs well under structured latent interventions and heavy-tailed perturbations. In the Medicaid analysis, increased Medicaid enrollment is the dominant margin of the estimated distributional change, and state-level ECD is consequently strongly associated with the Medicaid probability effect. The full-category analysis reveals heterogeneity across states in how probability mass is redistributed among uninsured, employer, private, and other coverage. This illustrates how distributional analysis complements scalar treatment-effect summaries even when one outcome accounts for most of the observed variation.

There are several avenues for further research. The experiments cover a range of canonical exponential families, but broader benchmarks are needed for richer observation models, including additional multivariate responses, overdispersion, zero inflation, censoring, and other features common in applied panel data. The factorization used here is simple, combining additive unit and time effects with low-rank interactions; future work could incorporate dynamic latent factors, hierarchical priors, nonlinear embeddings, covariates, or graph-based structure. On the causal side, KL divergence is only one way to compare treated and counterfactual distributions. Other discrepancies, such as Wasserstein distances, may be preferable when tail behavior or support changes are central. Finally, EFSC has potential applications in areas ranging from biomedicine to
economics and public policy, where interventions alter complex distributions rather than only average outcomes.

\acks{This work was partially supported by the Wallenberg AI, Autonomous Systems and Software Program (WASP) funded by the Knut and Alice Wallenberg Foundation, Sweden.}

\appendix
\section{Notation}
\begin{table}[H]
\centering
\begin{tabular}{cll}
\hline
Notation & Description & Dimension/Domain \\
\hline
$i$ & Unit index & $i\in\{1,\ldots,N\}$ \\
$j$ & Time index & $j\in\{1,\ldots,T\}$ \\
$k$ & Observation index & $k\in\{1,\ldots,m_{ij}\}$ \\
$m_{ij}$ & Number of observations in cell $(i,j)$ & $m_{ij}\in\mathbb N$ \\
$P$ & Number of natural parameters & $P\in\mathbb N$ \\
$\mbeta_{ij}$ & Natural parameter vector for unit $i$ and time $j$ &
$\mbeta_{ij}\in\mathcal H$ \\
$\mathcal H$ & Natural parameter space of the EF &
$\mathcal H\subseteq\mathbb R^P$ \\
$\mbz_{ij}$ & Unconstrained predictor vector for unit $i$ and time $j$ &
$\mbz_{ij}\in\mathbb R^P$ \\
$Z_p$ & Unconstrained predictor matrix for component $p$ &
$Z_p\in\mathbb R^{N\times T}$ \\
$h$ & Constraint bijection & $h:\mathbb R^P\rightarrow\mathcal H$ \\
$y_{ijk}$ & $k$-th observation for unit $i$ and time $j$ &
$y_{ijk}\in\mathrm{supp}\;\mathrm{EF}(\mbeta_{ij})$ \\
$\mbalpha_i$ & Unit-specific PMF intercept &
$\mathbb R^P$ \\
$\mbgamma_j$ & Time-specific PMF intercept &
$\mathbb R^P$ \\
$\mbtheta_i$ & Unit-specific latent factors &
$\mathbb R^{P\times r}$ \\
$\mbbeta_j$ & Time-specific latent factors &
$\mathbb R^{P\times r}$ \\
$r$ & Latent-factor rank & $r\in\mathbb N$ \\
$\Theta$ & Collection of model parameters &
$\mathbb R^D$ \\
$D$ & Number of model parameters &
$\mathbb N$ \\
$\mbnu$ & Variational parameters &
$\mathbb R^D\times\mathbb R_{+}^{D}$ \\
$q(\Theta;\mbnu)$ & Variational approximation &
Distribution over $\Theta$ \\
$\Omega$ & Set of panel cells used in a model fit &
$\Omega\subseteq\{1,\ldots,N\}\times\{1,\ldots,T\}$ \\
$\mbtau$ & Intervention effect in natural-parameter space &
$\mathbb R^P$ \\
$\mathcal C$ & Set of control units &
$\mathcal C\subseteq\{1,\ldots,N\}$ \\
$\mathcal T$ & Set of treated units &
$\mathcal T\subseteq\{1,\ldots,N\}$ \\
$\mathcal P$ & Pretreatment periods &
$\mathcal P\subseteq\{1,\ldots,T\}$ \\
$\mathcal Q$ & Post-treatment periods &
$\mathcal Q\subseteq\{1,\ldots,T\}$ \\
$\mathcal Q_i$ & Post-treatment periods for unit $i$ &
$\mathcal Q_i\subseteq\{1,\ldots,T\}$ \\
\hline
\end{tabular}
\caption{Notation used throughout the paper.}
\end{table}

\newpage
\section{Methodological Details}
\subsection{Multi-parameter and Constrained PMF Factorization}\label{app:multi-pmf}
We first describe the factorization for an unrestricted natural-parameter
space. For a one-parameter EF with $\mathcal{H}=\mathbb{R}$, collect the
natural parameters in Equation~\eqref{eq:pmf} into the matrix:
\begin{align}
    \mbeta=\mbalpha\otimes\mathbf{1}_T^T+\mathbf{1}_N\otimes\mbgamma^T+\mbtheta^T\mbbeta,
\end{align}
where $\mbeta\in \mathbb{R}^{N\times T}$, with effect matrices $\mbtheta\in \mathbb{R}^{r\times N}$ and $\mbbeta\in \mathbb{R}^{r\times T}$, and intercept vectors $\mbalpha\in \mathbb{R}^{N}$ and $\mbgamma\in \mathbb{R}^{T}$. The column vectors of ones, $\mathbf{1}_N$ and $\mathbf{1}_T$, are of dimension $N$ and $T$, respectively, and $\otimes$ is the Kronecker product. For an EF with $P$ natural parameters and unrestricted
natural-parameter space $\mathcal{H}=\mathbb{R}^P$, we stack the
factorizations vertically, resulting in an $NP\times T$ matrix of
natural parameters:
\begin{equation}
\begin{pmatrix}
  \mbeta[1]\\ 
  \mbeta[2]\\
  \vdots\\
  \mbeta[P]
\end{pmatrix}=
\begin{pmatrix}
  \mbalpha[1]\otimes\mathbf{1}_T^T+\mathbf{1}_N\otimes\mbgamma[1]^T+\mbtheta[1]^T\mbbeta[1]\\ 
  \mbalpha[2]\otimes\mathbf{1}_T^T+\mathbf{1}_N\otimes\mbgamma[2]^T+\mbtheta[2]^T\mbbeta[2]\\
  \vdots\\
   \mbalpha[P]\otimes\mathbf{1}_T^T+\mathbf{1}_N\otimes\mbgamma[P]^T+\mbtheta[P]^T\mbbeta[P]
\end{pmatrix}.
\end{equation}
For a constrained natural-parameter space
$\mathcal{H}\subsetneq\mathbb{R}^P$, directly applying an unconstrained
factorization to $\mbeta$ need not produce valid natural parameters.
We therefore place the factorization on an unconstrained predictor
scale. For each component $p=1,\ldots,P$, define
\begin{equation}
    Z_p
    =
    \mbalpha[p]\otimes\mathbf{1}_T^T
    +\mathbf{1}_N\otimes\mbgamma[p]^T
    +\mbtheta[p]^T\mbbeta[p]
    \in\mathbb{R}^{N\times T},
\end{equation}
and let
\begin{equation}
    \mbz_{ij}
    =
    \bigl(z_{ij}[1],\ldots,z_{ij}[P]\bigr)^T,
    \quad
    z_{ij}[p]=Z_p[i,j].
\end{equation}
The natural parameter in cell $(i,j)$ is then
\begin{equation}
    \mbeta_{ij}=h(\mbz_{ij})\in\mathcal{H},
    \quad
    h:\mathbb{R}^P\rightarrow\mathcal{H},
\end{equation}
where $h$ is a fixed, known bijection. When
$\mathcal{H}=\mathbb{R}^P$, we take $h$ to be the identity, and the
construction reduces to the direct natural-parameter factorization
above.

Thus, the low-rank structure is imposed on each unconstrained predictor
matrix $Z_p$. When $h$ is nonlinear, the corresponding natural-parameter
matrix need not itself be low rank; EFSC completes the predictor
matrices and then applies $h$ cellwise.

The component-specific construction contains $D=P(N+T)(r+1)$ scalar
model parameters and allows each unconstrained predictor component to
exhibit a distinct latent structure. A shared factorization can instead
be obtained by reusing effects and latent factors across components.
The unrestricted one-parameter factorization in
Equation~\eqref{eq:pmf} is recovered when $P=1$ and $h$ is the identity.

Table~\ref{tab:constraint-transformations} lists the transformations
used for the EFs considered in this paper.
\begin{table}[H]
\centering
\small
\begin{tabular}{cccc}
\toprule
Distribution & $h(z)$ & $\mathcal H$ & $h^{-1}(\eta)$ \\
\midrule
Bernoulli
& $z$ & $\mathbb R$ & $\eta$ \\
Poisson
& $z$ & $\mathbb R$ & $\eta$ \\
Exponential
& $-\exp(z)$ & $(-\infty,0)$ & $\log(-\eta)$ \\
Laplace, known $\mu$
& $-\exp(z)$ & $(-\infty,0)$ & $\log(-\eta)$ \\
Chi-Squared
& $\exp(z)-1$ & $(-1,\infty)$ & $\log(1+\eta)$ \\
Gaussian, known $\sigma^2$
& $z$ & $\mathbb R$ & $\eta$ \\
Gaussian, unknown $\mu,\sigma^2$
& $\bigl(z[1],-\exp\{z[2]\}\bigr)^T$
& $\mathbb R\times(-\infty,0)$
& $\bigl(\eta[1],\log\{-\eta[2]\}\bigr)^T$ \\
Multinomial, $C$ categories
& $\mbz$ & $\mathbb R^{C-1}$ & $\mbeta$ \\
\bottomrule
\end{tabular}
\caption{Unconstrained-to-natural-parameter transformations used by EFSC.
For the multinomial family, $\mbz\in\mathbb R^{C-1}$ contains the
reference-category logits, with the $C$th logit fixed at zero. Each
transformation is a bijection onto the stated natural-parameter space.}
\label{tab:constraint-transformations}
\end{table}

\subsection{Generative Model}\label{app:generative_model}
The EFSC model defines a generative process over a panel of datasets indexed by
units $i=1,\ldots,N$ and time periods $j=1,\ldots,T$. For an exponential
family with $P$ natural parameters, the model generates the data as follows:

\begin{enumerate}
\item For each parameter component $p=1,\ldots,P$, draw unit-specific effects
\begin{align}
\alpha_i[p]\sim\mathcal N(0,\sigma_\alpha^2),
\quad i=1,\ldots,N.
\end{align}

\item For each parameter component $p=1,\ldots,P$, draw time-specific effects
\begin{align}
\gamma_j[p]\sim\mathcal N(0,\sigma_\gamma^2),
\quad j=1,\ldots,T.
\end{align}

\item For each parameter component $p=1,\ldots,P$, draw latent factors
\begin{align}
\mbtheta_i[p]
&\sim\mathcal N(\mathbf 0,\sigma_\theta^2 I_r),
\quad i=1,\ldots,N,\\
\mbbeta_j[p]
&\sim\mathcal N(\mathbf 0,\sigma_\beta^2 I_r),
\quad j=1,\ldots,T.
\end{align}

\item Construct the unconstrained predictors
\begin{align}
z_{ij}[p]
=
\alpha_i[p]+\gamma_j[p]+\mbtheta_i[p]^T\mbbeta_j[p],
\quad
(i,j)\in\{1,\ldots,N\}\times\{1,\ldots,T\},
\end{align}
and let
\begin{align}
\mbz_{ij}
&=
\left(
z_{ij}[1],\ldots,z_{ij}[P]
\right)^T,\\
\mbeta_{ij}
&=
h(\mbz_{ij})\in\mathcal H,
\end{align}
where $h:\mathbb R^P\rightarrow\mathcal H$ is the appropriate transformation
into the natural-parameter space and is the identity when the space is
unconstrained.

\item For each cell $(i,j)$, fix or externally generate the number of
observations $m_{ij}$. In the synthetic experiments, cell sizes are either
fixed or generated from a specified distribution, such as
\begin{align}
m_{ij}\sim 1+\operatorname{Poisson}(\lambda_m).
\end{align}
EFSC conditions on the realized cell sizes during estimation.

\item For each observation $k=1,\ldots,m_{ij}$, generate
\begin{align}
y_{ijk}
\mid
\mbeta_{ij}
\stackrel{\mathrm{iid}}{\sim}
\mathrm{EF}(\mbeta_{ij}).
\end{align}
\end{enumerate}

The construction above allows component-specific factorizations. A shared
factorization is obtained by reusing the same effects and latent factors across
components before applying the constraint transformation; this is the
construction used to generate the Gaussian benchmark panels. The quantities
$\sigma_\alpha$, $\sigma_\gamma$, $\sigma_\theta$, and $\sigma_\beta$ are
experiment-specific data-generating scales and need not equal the prior scales
used during estimation.

The resulting collection of datasets
\begin{align}
\mby=
\left\{
y_{ijk}:
i=1,\ldots,N,\;
j=1,\ldots,T,\;
k=1,\ldots,m_{ij}
\right\}
\end{align}
forms a panel of disaggregated observations whose distribution is governed by a low-rank probabilistic matrix factorization on the unconstrained predictor scale.

\subsection{Exponential Tilting}\label{app:exptilt}
Let $p(y;\mbtheta)$ be the distribution of a univariate random variable $Y$, parameterized by $\mbtheta$. The $\mbtau$-exponentially tilted 
distribution of $Y$, $p_{\mbtau}(y;\mbtheta)$, is defined as
\begin{equation}
    p_{\mbtau}(y;\mbtheta)=\frac{\exp\{\mbtau^T\mathbf{t}(y)\}p(y;\mbtheta)}
    {\psi_{\mbtheta}(\mbtau)},
\end{equation}
where $\psi_{\mbtheta}(\mbtau) = \mathbb E_{Y\sim p(\cdot;\mbtheta)} [\exp\{\mbtau^T\mathbf{t}(Y)\}]$ is the moment-generating function of $\mathbf{t}(Y)$ evaluated at $\mbtau$, and $\mbtau$ is chosen such that the MGF exists.

For an exponential-family distribution,
\begin{align}
p(y;\mbeta)
&=
\exp\left\{
\mbeta^T \mathbf{t}(y)
-a(\mbeta)
+c(y)
\right\},
\end{align}
where $a(\mbeta)$ is the log-partition function and $c(y)$ is the
log-carrier term. The normalizing constant of the exponential tilt is
\begin{align}
\psi_{\mbeta}(\mbtau)
&=
\mathbb E_{\mbeta}
\left[
\exp\left\{\mbtau^T \mathbf{t}(Y)\right\}
\right] \\
&=
\exp\left\{
a(\mbeta+\mbtau)-a(\mbeta)
\right\}.
\end{align}
Consequently, the tilted distribution is
\begin{align}
p_{\mbtau}(y;\mbeta)
&=
\frac{
\exp\left\{\mbtau^T \mathbf{t}(y)\right\}
p(y;\mbeta)
}{
\psi_{\mbeta}(\mbtau)
} \\
&=
\exp\left\{
(\mbeta+\mbtau)^T \mathbf{t}(y)
-a(\mbeta+\mbtau)
+c(y)
\right\}.
\end{align}
Therefore, provided that $\mbeta+\mbtau\in\mathcal H$, exponential tilting
produces another member of the same exponential family with natural parameter $\tilde{\mbeta}=\mbeta+\mbtau$. Table~\ref{tab:expfam_tilted} presents examples of exponential tilts considered in this paper for one-parameter EFs.

For a Gaussian distribution with unknown mean and variance ($P=2$), the natural parameters are $\eta[1]=\mu/\sigma^2$ and $\eta[2]=-1/(2\sigma^2)$.
Applying a two-dimensional exponential tilt $\mbtau=(\tau_1,\tau_2)^T$ yields $\tilde{\eta}[1]=\eta[1]+\tau_1$ and $\tilde{\eta}[2]=\eta[2]+\tau_2$.
The corresponding model parameters are $\tilde{\mu}=-(\mu+\sigma^2\tau_1)/(2\sigma^2\tau_2-1)$ and $\tilde{\sigma}^2=-\sigma^2/(2\sigma^2\tau_2-1),$
provided that $\tilde{\eta}[2]<0$.

\renewcommand{\arraystretch}{1}
\begin{table}[t]
    \centering
    \begin{tabular}{cccccc}
        \toprule
        Distribution & $t(y)$ & \raisebox{-0.1ex}{$\theta$}& \raisebox{-0.05ex}{$\tilde{\eta}$}& \raisebox{-0.4ex}{$\tilde{\theta}$} & $\tilde{\eta}\;\text{domain}$ \\
       \midrule
        Bernoulli & $y$ & $\pi$ & $\tau+\operatorname{logit}\pi$ & $(\pi\exp\tau)/(1-\pi+\pi\exp\tau)$& $\mathbb{R}$\\
        Poisson & $y$ & $\lambda$ & $\tau+\log\lambda$ & $\lambda \exp\tau$& $\mathbb{R}$\\
        Exponential & $y$ & $\beta$ & $\tau-\beta$ & $\beta-\tau$& $(-\infty,0)$\\
        Laplace, known $\mu$ & $|y-\mu|$ & $\beta$ & $\tau-1/\beta$ & $\beta/(1-\beta\tau)$& $(-\infty,0)$ \\
        Chi-Squared & $\log y$ & $\alpha$  & $\tau+\alpha/2-1$ & $\alpha+2\tau$& $(-1,\infty)$\\
        Gaussian, known $\sigma^2$ & $y/\sigma$ & $\mu$ & $\tau+\mu/\sigma$& $\mu+\sigma\tau$& $\mathbb{R}$\\
\bottomrule
    \end{tabular}
    \caption{Univariate exponential-family distributions used in our experiments. The sufficient statistic is $t(y)$, the original parameter is $\theta$, and the tilted natural parameter is $\tilde{\eta}=\eta(\theta)+\tau$, with corresponding inverse $\tilde{\theta}=\theta(\tilde{\eta})$. The tilt $\tau$ is assumed to preserve the natural-parameter space.}\label{tab:expfam_tilted}
\end{table}

Under exponential tilting, the expected causal divergence (ECD) in Equation~\eqref{eq:causal_effect_posterior_kl} is closely related to the
geometry of the log-partition function. Figure~\ref{fig:kl_tilt} provides a graphical interpretation of this relationship. We orient the KL from the treated distribution to the counterfactual distribution because it averages the log-density ratio with respect to the treated data-generating process, which is directly informed by observed post-treatment outcomes, while $\eta$ defines the reconstructed no-treatment reference. Geometrically, for $\tilde\eta=\eta+\tau$, this divergence is the gap between $a(\eta)$ and the first-order Taylor approximation of $a$ around $\tilde\eta$, evaluated at $\eta$. Reversing the KL arguments instead anchors the tangent at $\eta$, corresponding to $\tau=0$,
and evaluates the gap at $\tilde\eta$.
\begin{figure}[t]
    \centering
    \includegraphics[width=0.45\linewidth]{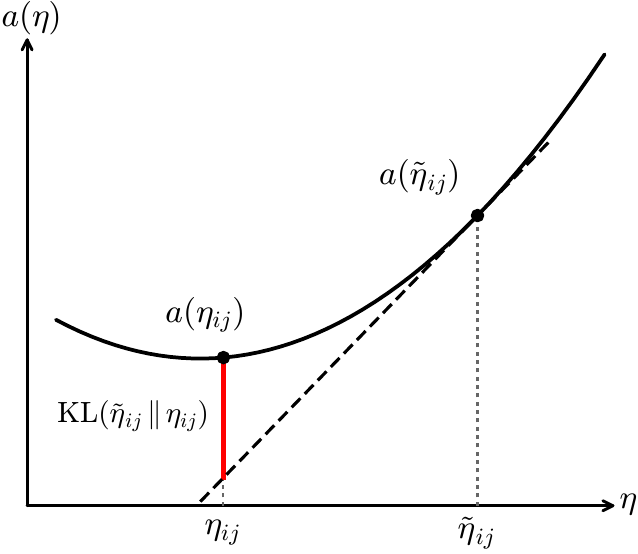}
    \caption{A linear perturbation shifts the natural parameter from $\eta_{ij}$ to $\tilde{\eta}_{ij}$ and induces a KL divergence between the
counterfactual and treated distributions. For the treated-to-counterfactual direction used in the ECD, the dashed line is tangent to the log-partition function at the treated parameter $\tilde{\eta}_{ij}$, and the red segment shows the gap between this tangent and $a(\eta_{ij})$ at the reconstructed counterfactual. This gap equals $\operatorname{KL}\!\left(p(y;\tilde{\eta}_{ij})\,\|\,p(y;\eta_{ij})\right)$. Adapted from  \cite{wainwright2008graphical} and \cite{efron2022exponential}.}
    \label{fig:kl_tilt}
\end{figure}

\subsection{Counterfactual Reconstruction}
The EFSC model can be interpreted as a probabilistic matrix-completion
procedure operating on unconstrained predictors that are mapped into the
natural parameter space of an EF. Let $\mathcal T$ and $\mathcal C$ denote the treated and control units, respectively, and let $\mathcal P$ and $\mathcal Q$ denote the pre- and post-treatment periods. We define the counterfactual training set
\begin{align}
\Omega_{\mathrm{obs}}
=
\bigl(\mathcal C\times(\mathcal P\cup\mathcal Q)\bigr)
\cup
\bigl(\mathcal T\times\mathcal P\bigr),
\end{align}
and the target set
\begin{align}
\Omega_{\mathrm{tgt}}
=
\mathcal T\times\mathcal Q.
\end{align}
Let $\mby_{\Omega}$ denote the collection of datasets indexed by a set
$\Omega$. For clarity, we present the reconstruction using scalar notation for
a one-parameter EF; the multi-parameter case follows the stacked construction
in Appendix~\ref{app:multi-pmf}.

After fitting the PMF model using the datasets
$\mby_{\Omega_{\mathrm{obs}}}$, draws
$\Theta^{(s)}\sim
q_{\mathrm{ctrl}}(\Theta;\hat{\mbnu}_{\mathrm{ctrl}})$
from the fitted variational posterior induce an approximation to the posterior
distribution over the untreated natural parameters in the target set,
\begin{align}
q_{\mathrm{ctrl}}(\eta_{\Omega_{\mathrm{tgt}}}^{\mathrm{ctrl}}
\mid\mby_{\Omega_{\mathrm{obs}}})
\approx
p(\eta_{\Omega_{\mathrm{tgt}}}^{\mathrm{ctrl}}
\mid\mby_{\Omega_{\mathrm{obs}}}).
\end{align}
For a given variational draw $\Theta^{(s)}$, the corresponding counterfactual
natural parameters are reconstructed through the PMF model,
\begin{align}
z_{ij}^{(s)}
&=
\alpha_i^{(s)}+\gamma_j^{(s)}
+
(\mbtheta_i^{(s)})^\top\mbbeta_j^{(s)},\\
\eta_{ij}^{\mathrm{ctrl},(s)}
&=
h(z_{ij}^{(s)}),
\;
(i,j)\in\Omega_{\mathrm{tgt}},
\end{align}
where $h$ maps the unconstrained predictor into the natural-parameter space
and is the identity when no constraint transformation is required.

The collection of draws
$\{\eta_{ij}^{\mathrm{ctrl},(s)}\}_{s=1}^S$ approximates the posterior distribution of the untreated natural parameter and
therefore defines the induced counterfactual predictive distribution
\begin{align}
p(y_{ij}^{\mathrm{ctrl}}
\mid
\mby_{\Omega_{\mathrm{obs}}}
)
\approx
\int p(y_{ij}^{\mathrm{ctrl}}
\mid
\eta_{ij}^{\mathrm{ctrl}})
q_{\mathrm{ctrl}}(\eta_{ij}^{\mathrm{ctrl}}
\mid\mby_{\Omega_{\mathrm{obs}}})
\,\mathrm d\eta_{ij}^{\mathrm{ctrl}}.
\end{align}

For treated datasets $(i,j)\in\Omega_{\mathrm{tgt}}$, causal inference is
based on comparing the fitted variational distributions
\begin{align}
q_{\mathrm{ctrl}}
(\eta_{ij}^{\mathrm{ctrl}}
\mid
\mby_{\Omega_{\mathrm{obs}}})
\end{align}
and
\begin{align}
q_{\mathrm{treat}}
(\eta_{ij}^{\mathrm{treat}}
\mid
\mby_{\Omega_{\mathrm{tgt}}}).
\end{align}
The expected causal effect (ECE) and expected causal divergence (ECD)
introduced in Section~\ref{subsec:method_overview} are the posterior expectations
of $\eta_{ij}^{\mathrm{treat}}-\eta_{ij}^{\mathrm{ctrl}}$
and $\mathrm{KL}\bigl(
p(y;\eta_{ij}^{\mathrm{treat}})
\|p(y;\eta_{ij}^{\mathrm{ctrl}})
\bigr)$, respectively. When posterior sampling is used, each MC iteration independently draws the complete treated and counterfactual factorization parameters from their respective variational distributions and reconstructs the corresponding natural-parameter surfaces. All reported ECD calculations instead use the plug-in KL divergence evaluated at natural parameters reconstructed from the respective variational posterior means.

\subsection{Proof of the Identification Theorem}\label{app:proof_theorem}
This subsection provides the additional notation and assumptions required for the identification result for a reparameterized $P$-dimensional EF, followed by the formal definition of identification and the proof of the general Theorem~\ref{theo:identification_two}. We then state several corollaries and remarks concerning the identified counterfactual distributions and the population targets of the ECE and ECD.

For each unconstrained-predictor component $p\in\{1,\ldots,P\}$, define the
complete untreated predictor matrix
\begin{align}
    \mbz_{ij}=h^{-1}(\mbeta_{ij}),\quad Z_p=\mbz[p]\in\mathbb{R}^{N\times T}.
\end{align}
After ordering control units before treated units and pretreatment periods
before post-treatment periods, partition $Z_p$ as
\begin{align}
    Z_p=\begin{pmatrix}
A_p & B_p\\
C_p & D_p
\end{pmatrix},
\end{align}
where
\begin{align*}
A_p&=Z_p[\mathcal{C},\mathcal{P}]
\in\mathbb{R}^{|\mathcal{C}|\times|\mathcal{P}|},
&
B_p&=Z_p[\mathcal{C},\mathcal{Q}]
\in\mathbb{R}^{|\mathcal{C}|\times|\mathcal{Q}|},\\
C_p&=Z_p[\mathcal{T},\mathcal{P}]
\in\mathbb{R}^{|\mathcal{T}|\times|\mathcal{P}|},
&
D_p&=Z_p[\mathcal{T},\mathcal{Q}]
\in\mathbb{R}^{|\mathcal{T}|\times|\mathcal{Q}|}.
\end{align*}
Thus, $A_p$, $B_p$, and $C_p$ contain untreated predictors associated with
observed cells, whereas $D_p$ is the missing treated--post-treatment
counterfactual predictor block.

\begin{assumption}[Exponential-family observation model]
\label{ass:ef}
Fix a known exponential family whose support and
quantities $(c,\mathbf t,a,\mathcal H)$ are common to every cell and every
admissible specification. Throughout, the natural parameters and factor
quantities are fixed unknown constants, and the observed-data distribution is their
sampling distribution conditional on the treatment design and cell sizes. Assume
$m_{ij}\geq 1$ for every panel cell. The untreated potential outcomes satisfy
\begin{align}
  Y_{ijk}
\overset{\mathrm{iid}}\sim\textsc{expfam}(\mbeta_{ij}),
\quad k=1,\ldots,m_{ij},
\end{align}
for every cell $(i,j)$. For treated post-treatment cells
$(i,j)\in\Omega_{\text{tgt}}$, the treated potential outcomes belong to the
same family and satisfy
\begin{align}
    \tilde Y_{ijk}
\overset{\mathrm{iid}}\sim
\textsc{expfam}(\tilde{\mbeta}_{ij}),
\quad
k=1,\ldots,m_{ij}.
\end{align}
The density or probability mass function is
\begin{align}
    p(y\s\mbeta)=\exp\left\{
\mbeta^\top\mathbf t(y)-a(\mbeta)+c(y)
\right\},
\quad
\mbeta\in\mathcal H\subseteq\mathbb{R}^P,
\end{align}
where $\mathbf t(y)$ are the sufficient statistics, $a(\mbeta)$ is the
log-partition function, and $c(y)$ is the log-carrier term.
\end{assumption}

This assumption defines the distributional observation layer of EFSC: each
unit--time cell contains repeated observations from one exponential-family
distribution, rather than a single aggregated outcome. The complete untreated
surface is indexed by $\mbeta_{ij}$, while $\tilde{\mbeta}_{ij}$ indexes
only the observed perturbation in treated post-treatment cells.

\begin{assumption}[Identifiable natural parameterization]
\label{ass:ef-identifiable}
The fixed exponential-family representation is identifiable: the map from
natural parameters to distributions is injective,
\begin{align}
    p(\,\cdot\s\mbeta)=p(\,\cdot\s\mbeta')
\quad\Longrightarrow\quad
\mbeta=\mbeta'.
\end{align}
A standard sufficient condition is minimality, meaning that the sufficient
statistics contain no nontrivial affine dependence. Therefore, the population
distribution of an observed cell uniquely determines its natural parameter.
\end{assumption}
Minimal canonical exponential-family representations have the required
injectivity property \citep{wainwright2008graphical}. All the EFs used in our
synthetic experiments are minimal. The Medicaid multinomial model uses $C-1$
reference-category logits, removing the common-shift redundancy of $C$
unrestricted logits.

\begin{assumption}[Stable untreated predictor factorization]
\label{ass:factorization}
There is a fixed, known bijection $h:\mathbb{R}^P\rightarrow\mathcal H$ such that, for every cell,
\begin{align}
    \mbeta_{ij}=h(\mbz_{ij}),
    \quad
    \mbz_{ij}=\left(z_{ij}[1],\ldots,z_{ij}[P]\right)^\top.
\end{align}
For each component $p\in\{1,\ldots,P\}$, the complete untreated predictor
surface satisfies
\begin{align}
    z_{ij}[p]=\alpha_i[p]+\gamma_j[p]+
\mbtheta_i[p]^\top\mbbeta_j[p],
\quad
\mbtheta_i[p],\mbbeta_j[p]\in\mathbb{R}^r,
\end{align}
for every unit $i$ and period $j$, including the unobserved target cells
$(i,j)\in\mathcal{T}\times\mathcal{Q}$.

Equivalently,
\begin{align}
    Z_p=\mbalpha[p]\otimes\mathbf{1}_T^T
    +\mathbf{1}_N\otimes\mbgamma[p]^T
    +\mbtheta[p]^T\mbbeta[p],
\end{align}
where $\mbtheta[p]\in\mathbb{R}^{r\times N}$ and
$\mbbeta[p]\in\mathbb{R}^{r\times T}$, and therefore
$\rank(Z_p)\le r+2$.
\end{assumption}

This is a structural restriction: the untreated unconstrained predictors,
not the aggregated outcomes, share stable unit, time, and low-rank interaction
structure. When $h$ is nonlinear, the corresponding natural-parameter matrix
need not be low rank; completion is therefore performed on $Z_p$ and $h$ is
applied afterward. In the Medicaid model, $h$ is the identity and each
reference-category logit has its own state effects, year effects, and latent
state--year interaction, whose untreated structure is assumed to continue
after expansion.

\begin{assumption}[Anchor-block rank condition]
\label{ass:anchor}
For each component $p$, let
\begin{align}
    k_p=\rank(Z_p).
\end{align}
The control--pretreatment anchor block contains all effective directions
of the complete untreated predictor matrix:
\begin{align}
    \rank(A_p)=\rank(Z_p)=k_p.
\end{align}
This condition implies the necessary dimensional inequalities
\begin{align}
    k_p\le |\mathcal{C}|,
    \quad
    k_p\le |\mathcal{P}|,
\end{align}
but it does not require $|\mathcal{C}|\ge|\mathcal{T}|$.
\end{assumption}

\begin{definition}[Population identification from the observed-data distribution]\label{def:identification}
Let $\mathfrak M$ denote the class of full-data specifications satisfying
Assumptions~\ref{ass:sutva}--\ref{ass:anchor}. A specification
$\mathcal M\in\mathfrak M$ includes the untreated cell distributions
$P_{ij}$, the treated post-treatment cell distributions
$\tilde P_{ij}$, their natural parameters $\mbeta_{ij}$ and
$\tilde{\mbeta}_{ij}$, the untreated predictors
$\mbz_{ij}=h^{-1}(\mbeta_{ij})$, and the joint distribution of the complete
observed panel. Let $\mathbb P_{\mathcal M}^{\text{obs}}$ denote the joint
observed-data distribution under specification $\mathcal M$.
The untreated counterfactual block $\mbeta_{\mathcal{T},\mathcal{Q}}$
is identified if, for any two specifications
$\mathcal M_1,\mathcal M_2\in\mathfrak M$,
\begin{align}
  \mathbb P_{\mathcal M_1}^{\text{obs}}
=
\mathbb P_{\mathcal M_2}^{\text{obs}}
\quad\Longrightarrow\quad
\mbeta_{\mathcal{T},\mathcal{Q}}(\mathcal M_1)
=
\mbeta_{\mathcal{T},\mathcal{Q}}(\mathcal M_2).
\end{align}
\end{definition}
The definition states that any two admissible data-generating
specifications that are observationally indistinguishable must agree on the
missing untreated parameters, i.e. they have the same counterfactual. Because
$h$ is a fixed bijection, this is equivalent to identification of the
corresponding untreated predictor block $\mbz_{\mathcal{T},\mathcal{Q}}$.

\begin{theorem}[Unique identification of the counterfactual natural parameters]
\label{theo:identification_two}
Suppose Assumptions~\ref{ass:sutva}--\ref{ass:anchor} hold. Then, for each
predictor component $p\in\{1,\ldots,P\}$, the missing counterfactual
predictor block
\begin{align}
    D_p=Z_p[\mathcal{T},\mathcal{Q}]
\end{align}
is identified from the observed-data distribution. In particular,
\begin{align}
    \boxed{D_p=C_pA_p^\dagger B_p,}
\end{align}
where $A_p^\dagger$ denotes the Moore--Penrose pseudoinverse of $A_p$.
Equivalently,
\begin{align}
    \boxed{Z_p[\mathcal{T},\mathcal{Q}]=
Z_p[\mathcal{T},\mathcal{P}]
\left(Z_p[\mathcal{C},\mathcal{P}]\right)^\dagger
Z_p[\mathcal{C},\mathcal{Q}].}
\end{align}
Hence $\mbz_{ij}$, and therefore the vector-valued counterfactual natural
parameter $\mbeta_{ij}=h(\mbz_{ij})\in\mathcal H$, is identified for every
$(i,j)\in\mathcal{T}\times\mathcal{Q}$.
\end{theorem}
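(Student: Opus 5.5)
The argument has two layers: a probabilistic step showing that the observed-data distribution pins down the observed predictor blocks $A_p,B_p,C_p$, and a purely linear-algebraic step showing that the anchor-block rank condition forces $D_p=C_pA_p^\dagger B_p$ for any matrix $Z_p$ with those blocks.

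\emph{Step 1: the observed blocks are functionals of the observed-data law.} Take $\mathcal M_1,\mathcal M_2\in\mathfrak M$ with $\mathbb P^{\text{obs}}_{\mathcal M_1}=\mathbb P^{\text{obs}}_{\mathcal M_2}$. By Assumptions~\ref{ass:sutva}--\ref{ass:controls}, each observation $Y^{\text{obs}}_{ijk}$ with $(i,j)\in\Omega_{\text{unt}}$ has law $P_{ij}$. Since $m_{ij}\ge 1$, the marginal of the observed panel at any such cell recovers $P_{ij}=p(\cdot\s\mbeta_{ij})$. Assumption~\ref{ass:ef} fixes the family, and Assumption~\ref{ass:ef-identifiable} then gives $\mbeta_{ij}(\mathcal M_1)=\mbeta_{ij}(\mathcal M_2)$. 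Because $h$ is a fixed bijection (Assumption~\ref{ass:factorization}), the same holds for $\mbz_{ij}=h^{-1}(\mbeta_{ij})$. Hence, for every component $p$, the blocks $A_p,B_p,C_p$ coincide across the two specifications. The treated cells carry $\tilde P_{ij}$ and contribute nothing about $D_p$; that is exactly why the structural rank condition is needed.

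\emph{Step 2: completion lemma.} Fix $p$ and write $Z=Z_p$ with blocks $A,B,C,D$ and $k=\rank(Z)=\rank(A)$. I would show $D=CA^\dagger B$ for any such $Z$. The rank condition forces $\rank\begin{pmatrix}A & B\end{pmatrix}=k$, and $\operatorname{col}(A)\subseteq\operatorname{col}\begin{pmatrix}A & B\end{pmatrix}$ with equal dimensions, so the column spaces are equal. Thus every column of $B$ lies in $\operatorname{col}(A)$, i.e.\ $B=AX$ for some $X$. Symmetrically, applying the same argument to $\begin{pmatrix}A\\ C\end{pmatrix}$ shows the row space of the first block-column equals $\operatorname{row}(A)$, so $C=YA$ for some $Y$.

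Next, the rows of $Z$ indexed by $\mathcal T$ must lie in the row space of $Z$. That row space is spanned by the control rows $\begin{pmatrix}A & B\end{pmatrix}=A\begin{pmatrix}I & X\end{pmatrix}$, which have rank $k$. Hence $\begin{pmatrix}C & D\end{pmatrix}=W\begin{pmatrix}A & B\end{pmatrix}$ for some $W$. This yields $C=WA$ and $D=WB=WAX=CX$. Using $AA^\dagger A=A$ and $A^\dagger A X$:
\begin{align*}
CA^\dagger B=WAA^\dagger AX=WAX=D.
\end{align*}
The right-hand side depends only on $(A,B,C)$, so the formula also shows $D$ does not depend on the particular choice of $W$ or $X$.

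\emph{Step 3: conclude.} By Step 1, $C_pA_p^\dagger B_p$ agrees across observationally equivalent specifications. By Step 2, it equals $D_p$ in each, so $D_p(\mathcal M_1)=D_p(\mathcal M_2)$ for every $p$. Applying $h$ cellwise transfers this to $\mbeta_{ij}$ on $\mathcal T\times\mathcal Q$. Theorem~\ref{theo:identification_one} is the special case $P=1$ with $h$ the identity.

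The main obstacle is the row-space step in Step 2: justifying that the treated rows lie in the span of the control rows. This is where $\rank(A)=\rank(Z)$ is essential and is used a second time, beyond deriving $B=AX$ and $C=YA$. Everything in Step 1 is bookkeeping under the stated assumptions.
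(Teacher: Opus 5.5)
Your proof is correct. Step 1 is essentially the paper's Part I, but your Step 2 takes a different route.

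The paper writes a rank factorization $Z=UV^\top$ with $U,V$ of full column rank $k$. From $\rank(A)=k$ it deduces that $U_{\mathcal C}$ has full column rank and $V_{\mathcal P}^\top$ has full row rank. It then uses the reverse-order law $(U_{\mathcal C}V_{\mathcal P}^\top)^\dagger=(V_{\mathcal P}^\top)^\dagger U_{\mathcal C}^\dagger$, together with $U_{\mathcal C}^\dagger U_{\mathcal C}=I_k=V_{\mathcal P}^\top(V_{\mathcal P}^\top)^\dagger$, to collapse $CA^\dagger B$ to $U_{\mathcal T}V_{\mathcal Q}^\top=D$.

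You instead use the rank condition as two subspace equalities. First, $\operatorname{col}(B)\subseteq\operatorname{col}(A)$, so $B=AX$. Second, the rows of $\begin{pmatrix}C & D\end{pmatrix}$ lie in the row span of $\begin{pmatrix}A & B\end{pmatrix}$, so $\begin{pmatrix}C & D\end{pmatrix}=W\begin{pmatrix}A & B\end{pmatrix}$. After that you need only the defining identity $AA^\dagger A=A$.

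Your route has three advantages. It needs no reverse-order law. Because it uses only $AA^\dagger A=A$, it shows the completion formula holds with any generalized inverse of $A$, not only the Moore--Penrose one. And the matrix $W$ makes the synthetic-control reading explicit: treated rows are exact linear combinations of control rows, and $D=WB$ applies those same weights post-treatment.

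The paper's route has its own advantage. The factors $U_{\mathcal C}$ and $V_{\mathcal P}$ mirror the model's latent unit and time factors. That is what supports the later remark that the control units' factor vectors must span all effective directions.

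Two small edits. Your derivation of $C=YA$ is never used, since $C=WA$ already follows from the row-space step. The fragment ``and $A^\dagger AX$'' should be dropped, because only $AA^\dagger A=A$ enters the computation.
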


\begin{proof}
Fix a component $p$ and suppress the subscript to simplify notation:
\begin{align}
    Z=
\begin{pmatrix}
A&B\\
C&D
\end{pmatrix},
\quad
\rank(Z)=\rank(A)=k.
\end{align}
The proof has two parts. The first identifies the observed untreated predictor
blocks $A$, $B$, and $C$ from the observed-data distribution. The second
proves that these three blocks uniquely determine the counterfactual predictor
block $D$.

\medskip
\noindent
\textbf{Part I: the observed-data distribution identifies $A$, $B$, and $C$.}

By Assumptions~\ref{ass:sutva}--\ref{ass:ef}, every cell
$(i,j)\in\Omega_{\text{unt}}$ has observed marginal distribution
\begin{align}
    Y_{ijk}^{\text{obs}}\sim P_{ij},
\quad
P_{ij}=p(\,\cdot\s\mbeta_{ij}).
\end{align}
Because $m_{ij}\geq1$, every such cell contributes an observed marginal.
Because a joint distribution determines each of its marginals, the population
observed-data distribution determines these cell distributions. By
Assumption~\ref{ass:ef-identifiable}, equality of exponential-family cell
distributions implies equality of their natural parameters. Since $h$ is a
known bijection under Assumption~\ref{ass:factorization}, each identified
natural parameter uniquely determines
\begin{align}
    \mbz_{ij}=h^{-1}(\mbeta_{ij}).
\end{align}
Consequently, the observed-data distribution uniquely determines
\begin{align}
    A=Z[\mathcal{C},\mathcal{P}],
\quad
B=Z[\mathcal{C},\mathcal{Q}],
\quad
C=Z[\mathcal{T},\mathcal{P}].
\end{align}

\medskip
\noindent
\textbf{Part II: the low-rank predictor structure uniquely determines $D$.}

Because $\rank(Z)=k$, there exists a rank factorization
\begin{align}
    Z=UV^\top,
\end{align}
where $U\in\mathbb{R}^{N\times k}$ and $V\in\mathbb{R}^{T\times k}$ both
have full column rank $k$. Partition these factors conformably with $Z$:
\begin{align}
    U=
\begin{pmatrix}
U_{\mathcal{C}}\\
U_{\mathcal{T}}
\end{pmatrix},
\quad
V=
\begin{pmatrix}
V_{\mathcal{P}}\\
V_{\mathcal{Q}}
\end{pmatrix}.
\end{align}
Then
\begin{align}
    A=U_{\mathcal{C}}V_{\mathcal{P}}^\top,
\quad
B=U_{\mathcal{C}}V_{\mathcal{Q}}^\top,
\quad
C=U_{\mathcal{T}}V_{\mathcal{P}}^\top,
\quad
D=U_{\mathcal{T}}V_{\mathcal{Q}}^\top.
\end{align}

By Assumption~\ref{ass:anchor}, $\rank(A)=k$. Since
$A=U_{\mathcal{C}}V_{\mathcal{P}}^\top$ and each factor has rank at most
$k$, it follows that
\begin{align}
    \rank(U_{\mathcal{C}})=k,
    \quad
    \rank(V_{\mathcal{P}})=k.
\end{align}
Thus, $U_{\mathcal{C}}$ has full column rank and
$V_{\mathcal{P}}^\top$ has full row rank. For a product of a
full-column-rank matrix and a full-row-rank matrix, the reverse-order identity
holds \citep{strang2019linear}:
\begin{align}
    A^\dagger
=\left(U_{\mathcal{C}}V_{\mathcal{P}}^\top\right)^\dagger
=\left(V_{\mathcal{P}}^\top\right)^\dagger
(U_{\mathcal{C}})^\dagger.
\end{align}
Moreover,
\begin{align}
    (U_{\mathcal{C}})^\dagger U_{\mathcal{C}}=I_k,
\quad
V_{\mathcal{P}}^\top
\left(V_{\mathcal{P}}^\top\right)^\dagger=I_k.
\end{align}
Using the expressions for $B$ and $C$, we obtain
\begin{align}
CA^\dagger B
&=
\left(U_{\mathcal{T}}V_{\mathcal{P}}^\top\right)
\left(V_{\mathcal{P}}^\top\right)^\dagger
(U_{\mathcal{C}})^\dagger
(U_{\mathcal{C}}V_{\mathcal{Q}}^\top)\\
&=
U_{\mathcal{T}}
[V_{\mathcal{P}}^\top
\left(V_{\mathcal{P}}^\top\right)^\dagger]
\left[(U_{\mathcal{C}})^\dagger U_{\mathcal{C}}\right]
V_{\mathcal{Q}}^\top\\
&=
U_{\mathcal{T}}I_kI_kV_{\mathcal{Q}}^\top\\
&=
U_{\mathcal{T}}V_{\mathcal{Q}}^\top\\
&=D.
\end{align}
Therefore,
\begin{align}
    D=CA^\dagger B.
\end{align}
The right-hand side depends only on the identified blocks $A$, $B$, and $C$,
and the preceding argument applies to every admissible completion satisfying
$\rank(Z)=\rank(A)$. Hence every admissible completion has the same block
$D=CA^\dagger B$. More explicitly, if two admissible specifications induce
the same observed-data distribution, Part I implies that they have the same
blocks $A$, $B$, and $C$, and the formula above then implies that they have
the same counterfactual predictor block $D$.

Because the argument applies to every component $p=1,\ldots,P$, the full
vector $\mbz_{ij}$ is identified on
$\mathcal{T}\times\mathcal{Q}$. The fixed map $h$ then uniquely determines
$\mbeta_{ij}=h(\mbz_{ij})$. Therefore, any two admissible specifications
that induce the same observed-data distribution agree on
$\mbeta_{\mathcal{T},\mathcal{Q}}$, which is identification according to
Definition~\ref{def:identification}.
\end{proof}

\begin{corollary}[Identification of counterfactual cell distributions]
\label{cor:distribution}
Under the assumptions of Theorem~\ref{theo:identification_two}, the untreated
counterfactual distribution $p(\,\cdot\s\mbeta_{ij})$ is identified for every
$(i,j)\in\mathcal{T}\times\mathcal{Q}$.
\end{corollary}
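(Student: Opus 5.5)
The plan is to derive Corollary~\ref{cor:distribution} directly from Theorem~\ref{theo:identification_two}, using Definition~\ref{def:identification} as the target notion. We need to show the following. Take any two admissible specifications $\mathcal M_1,\mathcal M_2\in\mathfrak M$ with $\mathbb P^{\text{obs}}_{\mathcal M_1}=\mathbb P^{\text{obs}}_{\mathcal M_2}$. Then for every target cell $(i,j)\in\mathcal T\times\mathcal Q$, the untreated cell distributions $P_{ij}(\mathcal M_1)$ and $P_{ij}(\mathcal M_2)$ coincide.

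\textbf{Step 1 (parameters agree).} Fix such a pair of specifications. By Theorem~\ref{theo:identification_two}, each predictor block satisfies $D_p=C_pA_p^\dagger B_p$ under both specifications. Part~I of that proof shows $A_p,B_p,C_p$ agree across the two. Hence $\mbz_{ij}(\mathcal M_1)=\mbz_{ij}(\mathcal M_2)$ for all target cells. Since $h$ is the same fixed bijection under Assumption~\ref{ass:factorization}, we also get $\mbeta_{ij}(\mathcal M_1)=\mbeta_{ij}(\mathcal M_2)$.

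\textbf{Step 2 (distributions agree).} By Assumption~\ref{ass:ef}, every admissible specification shares the same known family $(c,\mathbf t,a,\mathcal H)$. Under the potential-outcome definition, $P_{ij}=p(\,\cdot\s\mbeta_{ij})$. The map $\mbeta\mapsto p(\,\cdot\s\mbeta)$ is therefore a fixed function, not specification-dependent. Applying it to equal arguments gives equal distributions. This also holds for $m_{ij}$-fold products, if the cell-level sample law is wanted, since the cell sizes are conditioned on.

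The only delicate point, and the main obstacle, is making sure no specification-dependent ingredient enters the map from parameters to distributions. This is exactly why Assumption~\ref{ass:ef} fixes the family and carrier across all specifications in $\mathfrak M$. I would state this explicitly.

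Injectivity (Assumption~\ref{ass:ef-identifiable}) is not needed in this direction. It is needed only in Part~I of the theorem, which we invoke as given.
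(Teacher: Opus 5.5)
Your proposal is correct and follows essentially the same route as the paper: Theorem~\ref{theo:identification_two} identifies $\mbeta_{ij}$ on $\mathcal T\times\mathcal Q$, and Assumption~\ref{ass:ef} makes $p(\,\cdot\s\mbeta_{ij})$ a fixed, specification-independent function of that parameter, so observationally equivalent specifications share the same counterfactual distribution. Re-deriving Step~1 through Part~I and $D_p=C_pA_p^\dagger B_p$ is redundant, since the theorem's conclusion already gives equality of $\mbeta_{ij}$. Your remark that injectivity is used only inside the theorem, not in this direction, is accurate.
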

\begin{proof}
Theorem~\ref{theo:identification_two} identifies $\mbeta_{ij}$. By
Assumption~\ref{ass:ef}, the counterfactual distribution is the known function
$p(\,\cdot\s\mbeta_{ij})$ of this identified natural parameter.
\end{proof}

\begin{corollary}[Identification of the population target underlying ECE]
\label{cor:ece}
Under the assumptions of Theorem~\ref{theo:identification_two}, the
vector-valued population natural-parameter contrast
$\mbtau_{ij}=\tilde{\mbeta}_{ij}-\mbeta_{ij}$ is identified for every
$(i,j)\in\mathcal{T}\times\mathcal{Q}$.
\end{corollary}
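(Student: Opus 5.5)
The contrast $\mbtau_{ij}=\tilde{\mbeta}_{ij}-\mbeta_{ij}$ is the difference of two quantities. The plan is to identify each one separately from the observed-data distribution and then note that a difference of identified quantities is identified, in the sense of Definition~\ref{def:identification}. Concretely, we take two admissible specifications $\mathcal M_1,\mathcal M_2\in\mathfrak M$ with $\mathbb P_{\mathcal M_1}^{\text{obs}}=\mathbb P_{\mathcal M_2}^{\text{obs}}$ and show that they agree on both $\tilde{\mbeta}_{ij}$ and $\mbeta_{ij}$ for each $(i,j)\in\mathcal T\times\mathcal Q$.

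The counterfactual term $\mbeta_{ij}$ follows directly from Theorem~\ref{theo:identification_two}, which gives $\mbeta_{ij}(\mathcal M_1)=\mbeta_{ij}(\mathcal M_2)$ through $D_p=C_pA_p^\dagger B_p$ and the fixed bijection $h$.

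For the treated term $\tilde{\mbeta}_{ij}$ we repeat the argument of Part~I of that proof, now applied to the target cells.
\begin{enumerate}
\item Assumptions~\ref{ass:sutva}, \ref{ass:consistency} and the definition of $\tilde Y_{ijk}$ give $Y^{\text{obs}}_{ijk}\sim\tilde P_{ij}$ on $\Omega_{\text{tgt}}$, as already recorded in the display of cell-wise observed distributions.
\item Since $m_{ij}\ge 1$ (Assumption~\ref{ass:ef}), each target cell contributes at least one observed marginal. The joint observed-data law determines that marginal, so $\tilde P_{ij}$ is the same under $\mathcal M_1$ and $\mathcal M_2$.
\item Assumption~\ref{ass:ef} states that $\tilde P_{ij}=p(\cdot\s\tilde{\mbeta}_{ij})$ lies in the same fixed family. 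Injectivity (Assumption~\ref{ass:ef-identifiable}) then yields $\tilde{\mbeta}_{ij}(\mathcal M_1)=\tilde{\mbeta}_{ij}(\mathcal M_2)$.
\end{enumerate}
Note that no factorization or rank condition is needed for the treated parameters; only the observation model and identifiability are used.

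Subtracting the two equalities gives $\mbtau_{ij}(\mathcal M_1)=\mbtau_{ij}(\mathcal M_2)$, componentwise in $\mathbb R^P$. The only point requiring care is that the subtraction happens in natural-parameter space rather than on the predictor scale. This is legitimate because both terms are identified as elements of $\mathcal H\subseteq\mathbb R^P$: the counterfactual via $h$ applied to the identified predictors, the treated one directly. No step is a genuine obstacle; the main check is that the treated cells' marginals really are part of the observed-data law, which step~2 above confirms.
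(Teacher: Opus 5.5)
Your proposal is correct and takes the same approach as the paper. The paper identifies $\tilde{\mbeta}_{ij}$ from the observed treated--post-treatment cell distribution (via the definition of $\tilde Y_{ijk}$, consistency, and Assumptions~\ref{ass:ef}--\ref{ass:ef-identifiable}), takes $\mbeta_{ij}$ from Theorem~\ref{theo:identification_two}, and subtracts; you additionally spell out the $m_{ij}\ge 1$ and marginal-of-the-joint step, which the paper leaves implicit here and states only in Part~I of the theorem's proof.
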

\begin{proof}
By the definition of $\tilde Y_{ijk}$, consistency, and
Assumptions~\ref{ass:ef}--\ref{ass:ef-identifiable}, the observed
treated--post-treatment cell distribution identifies
$\tilde{\mbeta}_{ij}$. Theorem~\ref{theo:identification_two} identifies the
untreated counterfactual parameter $\mbeta_{ij}$. Their difference is
therefore identified.
\end{proof}

\begin{corollary}[Identification of the population target underlying ECD]
\label{cor:ecd}
Suppose that the relevant Kullback--Leibler divergence is finite.
Then $\delta_{ij}=\text{KL}(
 p(\,\cdot\s\tilde{\mbeta}_{ij})\,\|\,
 p(\,\cdot\s\mbeta_{ij}))$ is identified for every
$(i,j)\in\mathcal{T}\times\mathcal{Q}$. Any fixed deterministic aggregate of
these cell-level divergences is also identified.
\end{corollary}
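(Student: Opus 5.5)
The approach is to show that $\delta_{ij}$ is a deterministic functional of two identified natural parameters. Identification then follows from the definition of identification in Definition~\ref{def:identification}: any two admissible specifications with the same observed-data distribution must yield the same value.

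\emph{Step 1: the two parameters.} For each $(i,j)\in\mathcal T\times\mathcal Q$, I invoke Corollary~\ref{cor:ece}. Its proof shows that the treated parameter $\tilde{\mbeta}_{ij}$ is identified: the observed treated--post-treatment cell marginal equals $\tilde P_{ij}=p(\cdot\s\tilde{\mbeta}_{ij})$ by consistency and the definition of $\tilde Y_{ijk}$, and Assumption~\ref{ass:ef-identifiable} makes the map from $\tilde P_{ij}$ back to $\tilde{\mbeta}_{ij}$ injective. The counterfactual parameter $\mbeta_{ij}$ is identified by Theorem~\ref{theo:identification_two}. Corollary~\ref{cor:distribution} then identifies the distribution $p(\cdot\s\mbeta_{ij})$ itself.

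\emph{Step 2: the divergence as a fixed map.} Under Assumption~\ref{ass:ef}, the quantities $(c,\mathbf t,a,\mathcal H)$ are fixed and common to all specifications. Hence
\begin{align*}
\delta_{ij}=F(\tilde{\mbeta}_{ij},\mbeta_{ij}),\qquad
F(\tilde{\mbeta},\mbeta)=(\tilde{\mbeta}-\mbeta)^\top\nabla a(\tilde{\mbeta})-a(\tilde{\mbeta})+a(\mbeta),
\end{align*}
which is the vector version of Equation~\eqref{eq:causal_effect_kl_bregman}. To justify this formula, I take the log-density ratio $(\tilde{\mbeta}-\mbeta)^\top\mathbf t(y)-a(\tilde{\mbeta})+a(\mbeta)$ and integrate it under $p(\cdot\s\tilde{\mbeta})$. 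The carrier term $c(y)$ cancels, and I use $\mathbb E_{\tilde{\mbeta}}[\mathbf t]=\nabla a(\tilde{\mbeta})$.

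\emph{The main obstacle} is regularity: differentiability of $a$ at $\tilde{\mbeta}$ and finiteness of the divergence. The hypothesis that the KL divergence is finite is what handles this. To avoid relying on differentiability at all, I would argue abstractly instead. $\delta_{ij}$ is the KL divergence between two distributions, each identified by Step 1, and KL is a fixed (extended-real) functional of a pair of distributions. So given equal observed-data distributions, both specifications yield the same pair of distributions and hence the same finite value. The Bregman formula is then offered only as an explicit representation, valid when $\tilde{\mbeta}$ lies in the interior of $\mathcal H$.

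\emph{Step 3: aggregates.} Any fixed deterministic aggregate $g(\{\delta_{ij}\}_{(i,j)\in\mathcal T\times\mathcal Q})$ is identified by composition: equal observed-data distributions give equal inputs $\delta_{ij}$, hence equal outputs. Examples include the block averages $\mathrm{ECD}^{\mathrm{post}}$ and $\mathrm{ECD}^{\mathrm{pre}}$ with fixed index sets and weights. The one caveat is that $g$ must not depend on the specification, for example through weights estimated from $\mathcal M$.
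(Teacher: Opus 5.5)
Your proposal is correct and matches the paper's proof. Both arguments identify $\tilde{\mbeta}_{ij}$ from the observed treated cell marginal via consistency and Assumption~\ref{ass:ef-identifiable}, identify $\mbeta_{ij}$ via Theorem~\ref{theo:identification_two}, and conclude because KL is a fixed functional of the identified ordered pair of distributions. Your Bregman representation is an optional extra, valid only when $\tilde{\mbeta}_{ij}$ lies in the interior of $\mathcal H$, as you note; your composition argument for aggregates just makes explicit what the paper leaves implicit.
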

\begin{proof}
By the definition of $\tilde Y_{ijk}$, consistency, and
Assumptions~\ref{ass:ef}--\ref{ass:ef-identifiable}, the observed
treated--post-treatment cell distribution identifies
$\tilde{\mbeta}_{ij}$. Theorem~\ref{theo:identification_two} identifies the
untreated counterfactual parameter $\mbeta_{ij}$. Hence any two admissible
specifications inducing the same observed-data distribution agree on both
distributions entering the divergence. Since the KL divergence is a fixed
functional of this ordered pair of distributions, they also agree on
$\delta_{ij}$.
\end{proof}

\begin{corollary}[Identification under staggered adoption]
\label{cor:staggered-identification}
Let $\mathcal G$ denote the set of finite adoption times and, for each
$g\in\mathcal G$, define
\begin{align}
\mathcal T_g&=\{i:A_i=g\},&
\mathcal P_g&=\{j:j<g\},&
\mathcal Q_g&=\{j:j\ge g\}.
\end{align}
Let $\mathcal C=\{i:A_i=\infty\}$ denote the never-treated units and define
\begin{align}
Z_{p,g}
=
Z_p[\mathcal C\cup\mathcal T_g,\{1,\ldots,T\}].
\end{align}
Suppose that, for every $g\in\mathcal G$, Assumptions~1--7 hold on the
subpanel containing $\mathcal C\cup\mathcal T_g$, with
$(\mathcal T,\mathcal P,\mathcal Q)$ replaced by
$(\mathcal T_g,\mathcal P_g,\mathcal Q_g)$, and that, for every component $p$,
\begin{align}
\rank\!\left(Z_p[\mathcal C,\mathcal P_g]\right)
=
\rank(Z_{p,g}).
\end{align}
Then, for every $g\in\mathcal G$,
\begin{align}
Z_p[\mathcal T_g,\mathcal Q_g]
=
Z_p[\mathcal T_g,\mathcal P_g]
\left(
Z_p[\mathcal C,\mathcal P_g]
\right)^\dagger
Z_p[\mathcal C,\mathcal Q_g],
\end{align}
and hence the untreated counterfactual natural parameters are identified on
the staggered region
\begin{align}
\Omega_{\mathrm{tgt}}^{\mathrm{stag}}
=
\bigcup_{g\in\mathcal G}
\left(\mathcal T_g\times\mathcal Q_g\right).
\end{align}
Consequently, the corresponding counterfactual cell distributions and the
population targets underlying the ECE and ECD are identified whenever the
relevant KL divergences are finite.
\end{corollary}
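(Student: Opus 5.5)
The plan is to reduce the staggered case to a family of single-adoption problems, one for each cohort $g\in\mathcal G$, and apply Theorem~\ref{theo:identification_two} to each. Fix $g$ and restrict attention to the subpanel with rows $\mathcal C\cup\mathcal T_g$ and all columns $\{1,\ldots,T\}$. On it, the never-treated units play the role of $\mathcal C$, the cohort $\mathcal T_g$ plays the role of $\mathcal T$, and $(\mathcal P_g,\mathcal Q_g)$ replaces $(\mathcal P,\mathcal Q)$. Ordering rows and columns accordingly partitions $Z_{p,g}$ into blocks
\begin{align*}
A_{p,g}&=Z_p[\mathcal C,\mathcal P_g], & B_{p,g}&=Z_p[\mathcal C,\mathcal Q_g],\\
C_{p,g}&=Z_p[\mathcal T_g,\mathcal P_g], & D_{p,g}&=Z_p[\mathcal T_g,\mathcal Q_g].
\end{align*}
The assumed rank condition $\rank(A_{p,g})=\rank(Z_{p,g})$ is exactly Assumption~\ref{ass:anchor} for this subpanel.

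I will first check that the observed blocks of each subpanel really carry the untreated distribution. Cells in $\mathcal C\times\{1,\ldots,T\}$ are untreated by Assumptions~\ref{ass:sutva}, \ref{ass:consistency} and \ref{ass:controls}. Cells in $\mathcal T_g\times\mathcal P_g$ satisfy $j<g=A_i$, so consistency and no anticipation give $Y^{\mathrm{obs}}_{ijk}\sim P_{ij}$ there. No interference matters here: units in other cohorts $\mathcal T_{g'}$ may be treated, but by Assumption~\ref{ass:sutva} that does not change the marginal distributions in the subpanel. So Part~I of the proof of Theorem~\ref{theo:identification_two} carries over. The joint observed-data law determines the marginal law of each of these cells, Assumption~\ref{ass:ef-identifiable} recovers $\mbeta_{ij}$, and $h^{-1}$ recovers $\mbz_{ij}$. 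Hence $A_{p,g}$, $B_{p,g}$ and $C_{p,g}$ are identified.

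Part~II is purely algebraic, so I will invoke it directly on $Z_{p,g}$. Take a rank factorization $Z_{p,g}=UV^\top$. The anchor condition forces the control rows of $U$ and the pre-period rows of $V$ to have full column rank. The reverse-order pseudoinverse identity then gives $D_{p,g}=C_{p,g}A_{p,g}^\dagger B_{p,g}$, which is the displayed formula. Two admissible specifications with the same observed law agree on these three blocks, so they agree on $D_{p,g}$. Taking the union over $g$ identifies $\mbz_{ij}$, and hence $\mbeta_{ij}=h(\mbz_{ij})$, on $\Omega^{\mathrm{stag}}_{\mathrm{tgt}}$.

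The remaining claims follow the pattern of Corollaries~\ref{cor:distribution}--\ref{cor:ecd}. The counterfactual distribution is the known function $p(\cdot\s\mbeta_{ij})$ of the identified parameter. The treated parameter $\tilde{\mbeta}_{ij}$ is identified from the observed cell law for $j\ge A_i$, using consistency and Assumption~\ref{ass:ef-identifiable}. The difference $\tilde{\mbeta}_{ij}-\mbeta_{ij}$ and, when finite, the KL divergence are then functionals of identified objects. The main obstacle is not the algebra but the bookkeeping in the first step: I need to confirm that the treated-post cells of other cohorts never enter the conditioning blocks of cohort $g$. This holds because the rows of the subpanel are restricted to $\mathcal C\cup\mathcal T_g$, and because SUTVA makes the untreated marginals of those rows independent of other cohorts' adoption.
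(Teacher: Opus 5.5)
Your proposal is correct and follows the paper's proof: restrict to the subpanel $\mathcal C\cup\mathcal T_g$, observe that equal observed-data laws on the full panel restrict to equal laws on each subpanel, apply Theorem~\ref{theo:identification_two} cohort by cohort, and take the union over $g\in\mathcal G$. The paper invokes the theorem as a black box, whereas you re-run Parts~I and~II explicitly, use SUTVA to set aside the other cohorts' treatment, and spell out the ECE/ECD consequences via Corollaries~\ref{cor:distribution}--\ref{cor:ecd}; this adds detail but is the same argument.
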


\begin{proof}
Suppose two admissible full-data specifications induce the same observed-data
distribution for the complete staggered panel. Their restrictions to
$\mathcal C\cup\mathcal T_g$ therefore induce the same observed-data
distribution for every $g\in\mathcal G$. For each $g$, the restricted
subpanel has common adoption time $g$ and rectangular target block
$\mathcal T_g\times\mathcal Q_g$. Theorem~\ref{theo:identification_two}
therefore identifies $Z_p[\mathcal T_g,\mathcal Q_g]$ through the displayed
completion formula. Taking the union over adoption cohorts yields
$\Omega_{\mathrm{tgt}}^{\mathrm{stag}}$.
\end{proof}
The cohort decomposition is only an identification device and does not
require the model to be estimated separately for each cohort.

\begin{remark}[Latent-factor non-identifiability]
The latent factors themselves need not be uniquely identified. For example,
rotations or other invertible changes of coordinates may leave
$\mbtheta[p]^\top\mbbeta[p]$ unchanged.
Theorem~\ref{theo:identification_two} requires uniqueness of the completed
predictor matrices $Z_p$, and hence of the induced natural-parameter surface,
not uniqueness of a particular factor parameterization.
\end{remark}

\begin{remark}[Number of controls]
The rank condition requires $|\mathcal{C}|\ge k_p$, not
$|\mathcal{C}|\ge|\mathcal{T}|$. Thus, a small control group can identify the
counterfactuals of many treated units if the control-unit factor vectors span
all effective unit-factor directions of the complete untreated predictor
matrix. Under the full additive predictor factorization, $k_p\le r+2$.
\end{remark}

\section{Experimental Details}
\subsection{BBVI Implementation}\label{app:bbvi_implementation}
We implement BBVI using a mean-field Gaussian variational family. The variational family is parameterized by the means and variances of the latent parameters,
\begin{align}
\mbnu=\{\mbmu_{\nu},\mbsigma^2_{\nu}\}.
\end{align}
There are $D_{\nu}=2D$ variational parameters in total, where
$D=P(N+T)(r+1)$ is the number of scalar model parameters in $\Theta$ and
$P$ is the dimension of $\mbeta_{ij}$. When calculating the ELBO, we use the
reparameterization trick:
\begin{align}
\Theta^{(s)}
=
\mbmu_{\nu}+L\mbz^{(s)},
\quad
\mbz^{(s)}\sim\mathcal N(\mathbf 0,I_D),
\quad
s=1,\ldots,S,
\end{align}
where $L=\operatorname{diag}(\mbsigma_{\nu})$ under the mean-field assumption.
The ELBO is estimated via Monte Carlo as
\begin{align}
\mathcal L(\mbnu)
\approx
\frac{1}{S}
\sum_{s=1}^{S}
\Big[
\log p(\mby,\Theta^{(s)})
-
\log q(\Theta^{(s)};\mbnu)
\Big].
\end{align}
In the implementation, the variance parameters are stored and optimized on the
log-variance scale. Also, whenever natural parameters are constrained, we
reparameterize the model in terms of unconstrained latent variables and apply
appropriate transformations prior to evaluating the log-joint. For example,
in the Gaussian exponential family, the second natural parameter must satisfy
\begin{align}
\eta_{ij}[2]<0.
\end{align}
We therefore construct an unconstrained predictor $z_{ij}[2]\in\mathbb R$
from the PMF and define
\begin{align}
\eta_{ij}[2]
=
-\exp\left(z_{ij}[2]\right),
\end{align}
which guarantees that the natural-parameter space is respected throughout
optimization.

The negative ELBO is minimized using the Adam optimizer with a full-batch
objective. Unless otherwise stated, we use $S=2$ Monte Carlo samples per ELBO
evaluation. The learning rate and number of optimization epochs are specified
separately for each experiment. All gradients are computed using automatic
differentiation in PyTorch. All experiments were executed on a laptop equipped
with an Intel Core i7-13620H CPU, 16\,GB RAM, and an NVIDIA RTX 3050 GPU
(6\,GB VRAM).

\subsection{Empirical Bayes Hyperparameter Learning}\label{app:eb}
The default implementation of EFSC assumes fixed Gaussian prior
hyperparameters for the unit effects, time effects, and latent factors. As an
alternative, we consider a simple empirical Bayes (EB) extension that estimates
the prior hyperparameters directly from the observed panel.

Specifically, using hyperparameters shared across parameter dimensions, for
each parameter dimension $p=1,\ldots,P$, we assume
\begin{align}
\alpha_i[p]
&\sim
\mathcal N(\mu_\alpha,\sigma_\alpha^2),\\
\gamma_j[p]
&\sim
\mathcal N(0,\sigma_\gamma^2),\\
\mbtheta_i[p]
&\sim
\mathcal N(\mathbf 0,\sigma_\theta^2 I_r),\\
\mbbeta_j[p]
&\sim
\mathcal N(\mathbf 0,\sigma_\beta^2 I_r).
\end{align}

The hyperparameters are collected in
\begin{align}
\mbphi=
(\mu_\alpha,\sigma_\alpha,\sigma_\gamma,\sigma_\theta,\sigma_\beta).
\end{align}

Rather than fixing $\mbphi$ a priori, we estimate it jointly with the
variational parameters by maximizing the ELBO. The Monte Carlo approximation
becomes
\begin{align}
\mathcal L(\mbnu,\mbphi)
\approx
\frac{1}{S}
\sum_{s=1}^{S}
\Big[
\log p(\mby,\Theta^{(s)};\mbphi)
-
\log q(\Theta^{(s)};\mbnu)
\Big].
\end{align}

The resulting optimization problem is
\begin{align}
(\hat{\mbnu},\hat{\mbphi})
=
\arg\max_{\mbnu,\mbphi}
\mathcal L(\mbnu,\mbphi).
\end{align}

The gradients of the ELBO with respect to both $\mbnu$ and $\mbphi$ are computed using automatic differentiation, and optimization is performed jointly using Adam. The EB
extension adds only five scalar hyperparameters to the optimization problem.

\subsection{Exponential tilts across EFs}\label{app:exp_tilt_experiments}
The experiments in Section~\ref{subsec:tilting_across_efs} evaluate the ability
of EFSC to recover the effects of exponential-tilt interventions across
multiple one-parameter exponential-family distributions. The data-generation, model-fitting and evaluation procedure is as follows:
\begin{enumerate}
\item Select an exponential-family distribution from: Bernoulli,
Poisson, Exponential, Laplace (known mean), Chi-squared, and Gaussian (known
variance). The known Laplace mean and Gaussian variance are set to $0$ and $1$,
respectively.

\item Generate a panel of natural parameters using the generative model in
Appendix~\ref{app:generative_model}, with $N=32$ units, $T=64$ time steps, and
latent-factor dimension $r=2$. The unit-intercept mean on the unconstrained PMF
predictor scale is set to $-1$ for Bernoulli, $1$ for Poisson, $\log 3$ for
Exponential, Laplace, and Chi-squared, and $0$ for Gaussian. All unit effects,
time effects, and latent-factor entries are generated with standard deviation
$0.05$, with the remaining means set to zero.

\item For each cell $(i,j)$, draw the number of observations as
$m_{ij}\sim 1+\operatorname{Poisson}(\lambda_m)$, where
$\lambda_m\in\{5,55,\ldots,505\}$ and
$\mathbb E[m_{ij}]=1+\lambda_m$.

\item Define the intervention time at $t_0=52$ and select the final six units
for treatment, $\mathcal T=\{N-5,\ldots,N\}$.

\item For every treated unit and post-treatment period $j\geq t_0$, apply an
exponential tilt
\begin{align}
\tilde{\eta}_{ij}
=
\eta_{ij}+\tau,
\quad
\tau\in\{0.1,0.5,2.0\}.
\end{align}
See Figure~\ref{fig:poisson_tilt_example} for an illustration on a panel of
Poisson log-intensities.

\item Draw observations from the original distributions outside the
treated post-treatment block and from the intervened distributions inside it:
\begin{align}
y_{ijk}\mid\eta_{ij}
&\stackrel{\mathrm{iid}}{\sim}
\mathrm{EF}(\eta_{ij}),
&& (i,j)\notin\Omega_{\mathrm{tgt}},\\
\tilde{y}_{ijk}\mid\tilde{\eta}_{ij}
&\stackrel{\mathrm{iid}}{\sim}
\mathrm{EF}(\tilde{\eta}_{ij}),
&& (i,j)\in\Omega_{\mathrm{tgt}},
\end{align}
for $k=1,\ldots,m_{ij}$.

\item Fit EFSC twice using the BBVI procedure described in
Appendix~\ref{app:bbvi_implementation}: first on
$\mby_{\Omega_{\mathrm{obs}}}$, with the treated post-treatment block masked
out, and second on $\mby_{\Omega_{\mathrm{tgt}}}$, using only the treated
post-treatment block. Using the natural parameters reconstructed at the
respective variational posterior means, compute
\begin{align}
\widehat{\mathrm{ECE}}_{ij}
=
\hat{\eta}_{ij}^{\mathrm{treat}}
-
\hat{\eta}_{ij}^{\mathrm{ctrl}},
\quad
(i,j)\in\Omega_{\mathrm{tgt}}.
\end{align}

\item Evaluate the recovery accuracy using the mean absolute error
\begin{align}
\mathrm{MAE}
=
\frac{1}{|\Omega_{\mathrm{tgt}}|}
\sum_{(i,j)\in\Omega_{\mathrm{tgt}}}
\left|
\widehat{\mathrm{ECE}}_{ij}-\tau
\right|.
\end{align}
\end{enumerate}

The PMF parameters are assigned independent Gaussian priors with standard
deviation $3$. The unit-effect prior means are set to the corresponding
family-specific baseline values above, and all remaining prior means are zero.
Each BBVI fit uses $S=2$ Monte Carlo samples per ELBO evaluation and $2{,}000$
optimization epochs. The Adam learning rate is $0.05$ for Bernoulli, Poisson,
and Gaussian, $0.005$ for Exponential and Laplace, and $0.002$ for
Chi-squared. Unless otherwise stated, all reported results are averages over
$20$ independently generated panels.

\begin{figure}
    \centering
    \includegraphics[width=0.9\linewidth]{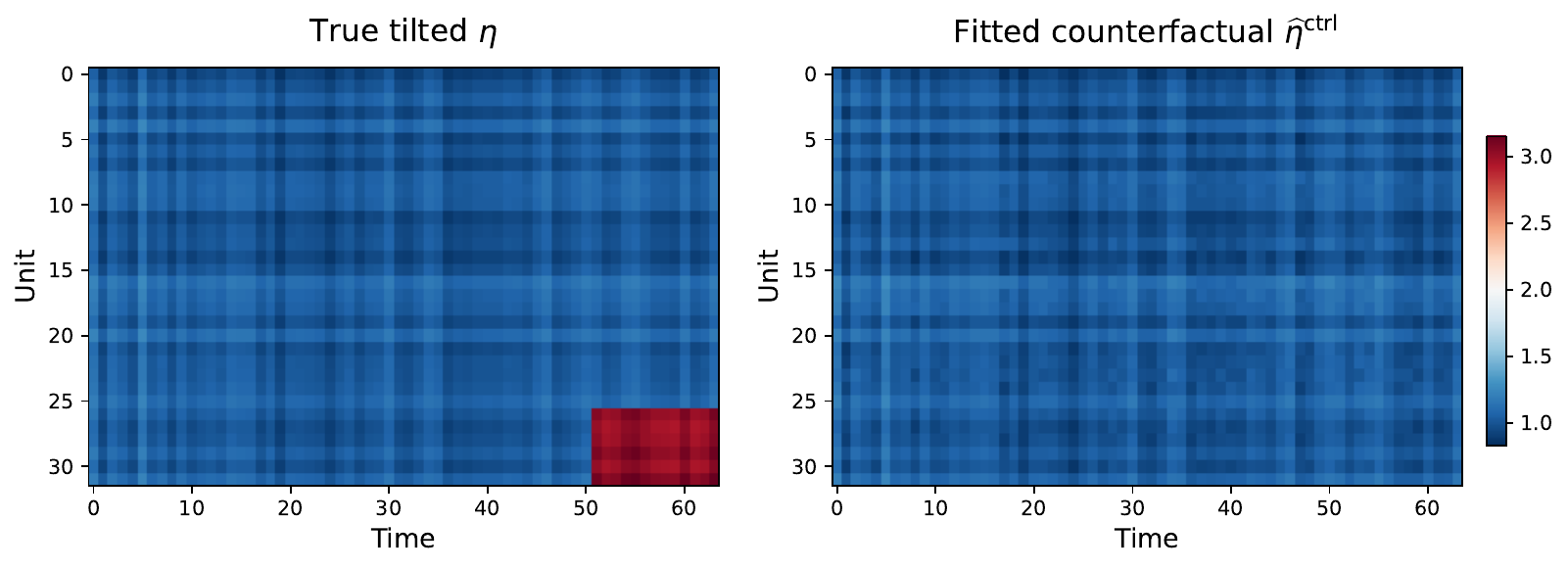}
    \caption{Illustration of the exponential-tilt experiment for a panel of
    Poisson datasets with expected cell size $\E{m_{ij}}=106$. Left: true natural-parameter matrix
    after applying an additive tilt $\tau=2$ to the treated post-intervention
    block, corresponding to the last six units and final 13 time periods.
    Right: EFSC reconstruction of the counterfactual natural-parameter matrix,
    obtained by fitting the model with the treated post-intervention block
    masked out.}
    \label{fig:poisson_tilt_example}
\end{figure}

\subsection{Expected causal divergence in the exponential-tilt experiment}
\label{app:exp_tilt_ecd}
Using the same simulated panels and EFSC fits described in
Appendix~\ref{app:exp_tilt_experiments}, we also evaluate recovery of the
expected causal divergence (ECD). This provides a complementary measure of the
effect of the intervention by quantifying the divergence between the tilted
and counterfactual distributions rather than the additive shift in the natural
parameter.

For each treated post-treatment cell
$(i,j)\in\Omega_{\mathrm{tgt}}$, the ground-truth cellwise divergence
corresponding to the ECD is computed from the known data-generating natural
parameters as
\begin{align}
\mathrm{ECD}_{ij}^{\mathrm{true}}
=
\mathrm{KL}\left(
p(y;\eta_{ij}+\tau)
\|p(y;\eta_{ij})
\right).
\end{align}
Following the identity in
Equation~\eqref{eq:causal_effect_kl_bregman}, for a one-parameter exponential
family with log-partition function $a(\cdot)$ and sufficient statistic $t(y)$,
this quantity can be written as
\begin{align}
\mathrm{ECD}_{ij}^{\mathrm{true}}
=
\tau\,\EE{\eta_{ij}+\tau}{t(Y)}
-
a(\eta_{ij}+\tau)
+
a(\eta_{ij}).
\end{align}

In the fitted model, we estimate this quantity using a plug-in approximation
based on natural-parameter estimates reconstructed at the variational
posterior means of the factorization parameters from the two EFSC fits. Let
$\hat{\eta}_{ij}^{\mathrm{ctrl}}$ denote the counterfactual natural parameter
reconstructed at the variational posterior mean of the fit with the treated
post-treatment block masked out, and let
$\hat{\eta}_{ij}^{\mathrm{treat}}$ denote the corresponding reconstruction
from the fit to the treated post-treatment block. We compute
\begin{align}
\widehat{\mathrm{ECD}}_{ij}
=
\mathrm{KL}\left(
p(y;\hat{\eta}_{ij}^{\mathrm{treat}})
\|p(y;\hat{\eta}_{ij}^{\mathrm{ctrl}})
\right).
\end{align}

Although ECD is defined at the cell level, we evaluate estimation accuracy
through the average ECD over the treated block rather than through a cellwise
mean absolute error. This choice avoids comparing raw cellwise KL errors
across families whose divergences have substantially different scales and
curvature. In particular, the KL divergence is nonlinear in the natural
parameter, so small errors in the baseline natural-parameter estimate can
induce large local KL errors in some families even when the average divergence
is recovered accurately.

For each family, tilt magnitude $\tau$, and dataset-size parameter $\lambda_m$,
we therefore compare
\begin{align}
\overline{\mathrm{ECD}}^{\mathrm{true}}
&=
\frac{1}{|\Omega_{\mathrm{tgt}}|}
\sum_{(i,j)\in\Omega_{\mathrm{tgt}}}
\mathrm{ECD}_{ij}^{\mathrm{true}},
\quad
\overline{\widehat{\mathrm{ECD}}}
=
\frac{1}{|\Omega_{\mathrm{tgt}}|}
\sum_{(i,j)\in\Omega_{\mathrm{tgt}}}
\widehat{\mathrm{ECD}}_{ij}.
\end{align}
Results for all families and $\tau=0.1$ and $\tau=2$ are presented in
Figure~\ref{fig:ecd_tilt_recovery}; the curves average these quantities over
the 20 independently generated panels. In general, we see that EFSC is able to
recover the family-specific scale of the divergence induced by the same
exponential-tilt intervention used in the ECE experiment.

\begin{figure}[!htbp]
\centering
\begin{subfigure}{0.9\textwidth}
\centering
\includegraphics[width=\textwidth]{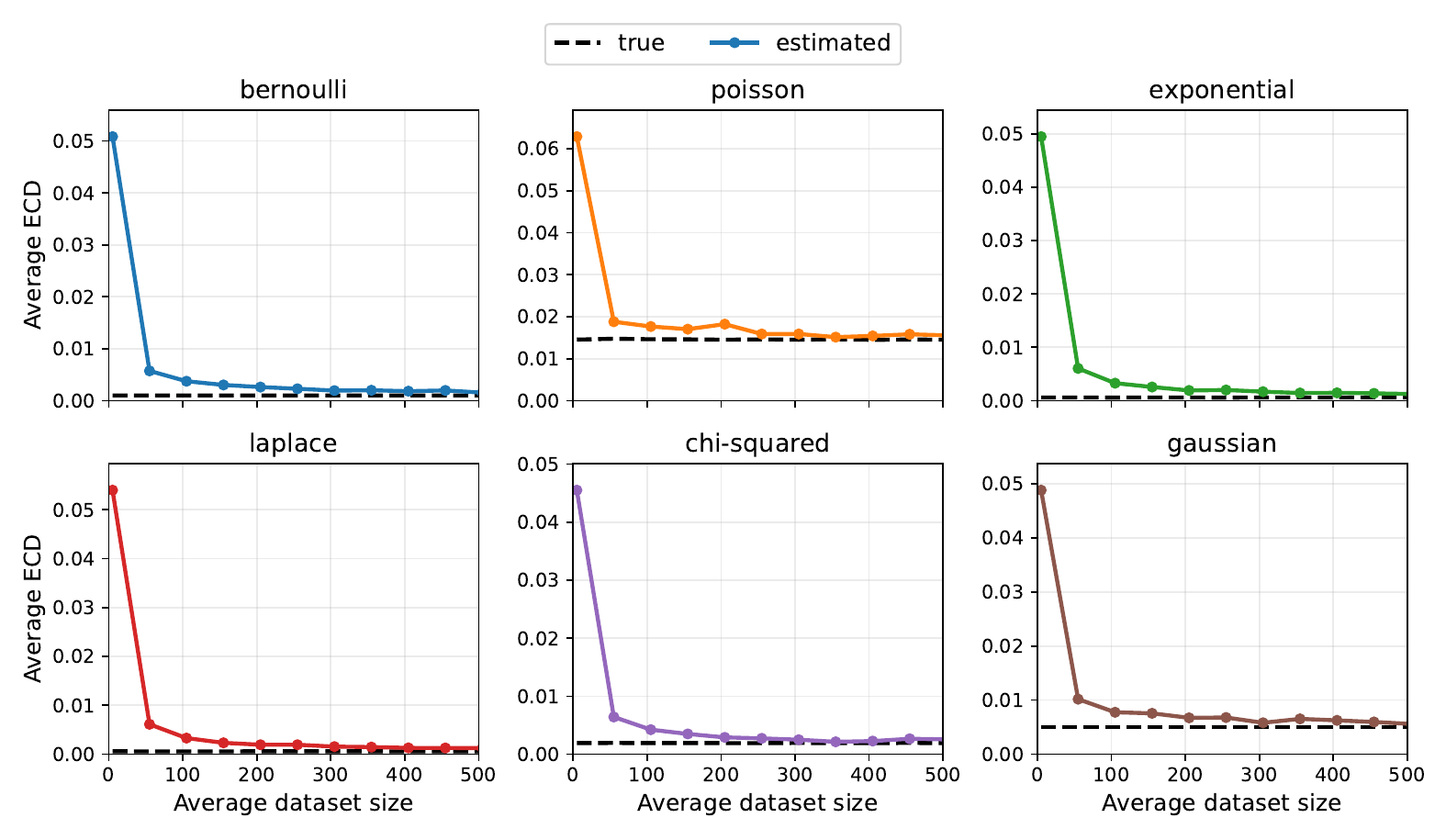}
\caption{$\tau=0.1$}
\end{subfigure}

\vspace{0.75em}

\begin{subfigure}{0.9\textwidth}
\centering
\includegraphics[width=\textwidth]{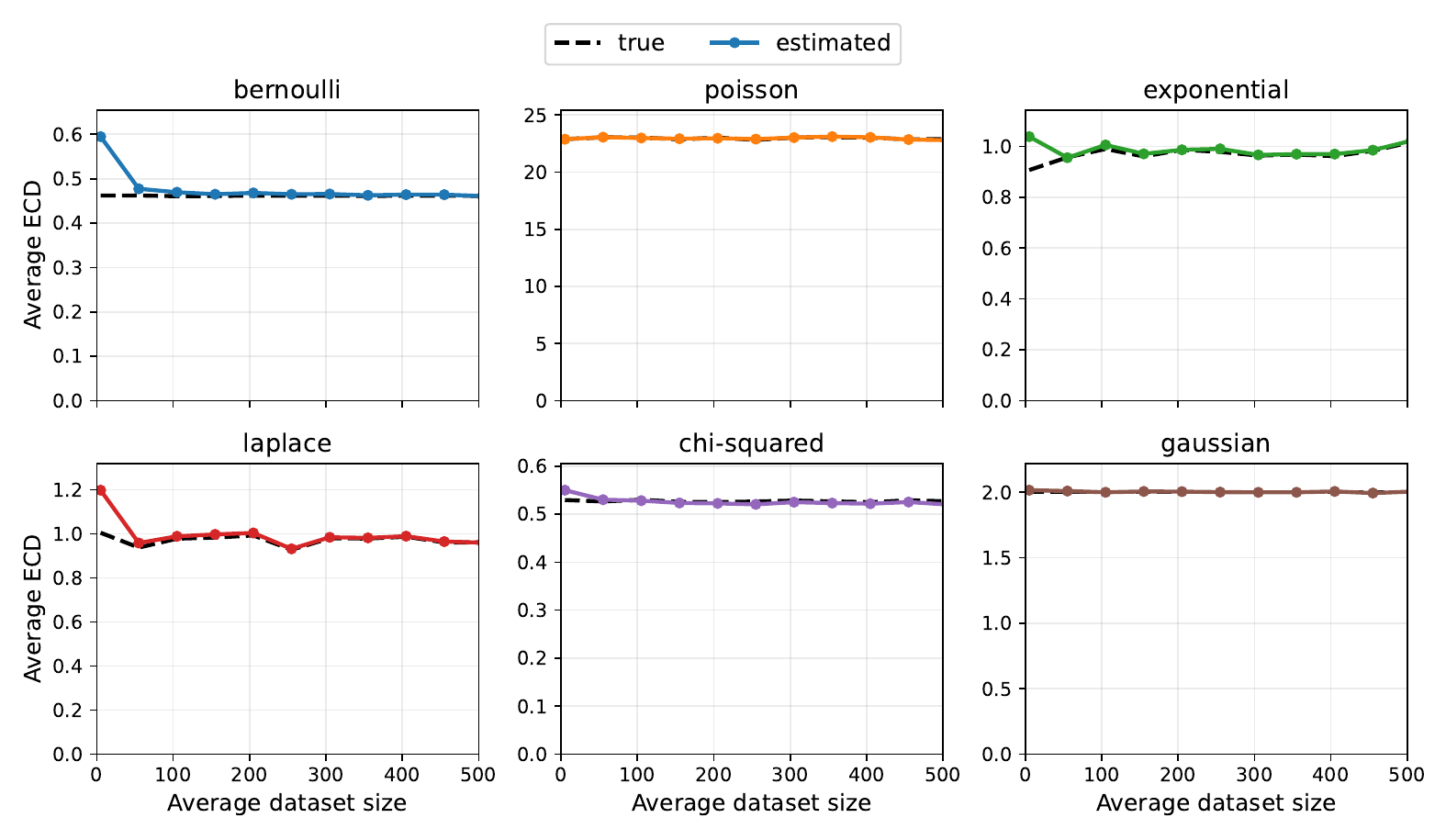}
\caption{$\tau=2.0$}
\end{subfigure}

\caption{
Estimated versus true expected causal divergence (ECD) in the
exponential-tilt experiment. Each panel compares the true treated-block
average ECD (dashed black line) with the plug-in EFSC estimate (solid colored
line) as the average dataset size increases. Results are shown separately for
each one-parameter exponential family and are averaged over 20 independently
generated panels. Top figure: small tilt, $\tau=0.1$; bottom figure: larger
tilt, $\tau=2.0$.
}
\label{fig:ecd_tilt_recovery}
\end{figure}

\subsection{Intervention Benchmarks on Gaussian Panels}
\label{app:benchmark_gaussian_panels}
The experiments in Section~\ref{subsec:gaussian_intervention_benchmarks} evaluate EFSC on panels of univariate Gaussian datasets. For each cell $(i,j)$, the observation model is
\begin{align}
y_{ijk}\mid \mbeta_{ij}
\stackrel{\mathrm{iid}}{\sim}
\mathcal N(\mu_{ij},\sigma_{ij}^2),
\quad\mbeta_{ij}=
\left(\eta_{ij}[1],\eta_{ij}[2]\right)^T,
\end{align}
with natural parameters
\begin{align}
\eta_{ij}[1]=\frac{\mu_{ij}}{\sigma_{ij}^2},
\quad\eta_{ij}[2]=
-\frac{1}{2\sigma_{ij}^2}.
\end{align}
For estimation, following the multi-parameter PMF representation in Appendix~\ref{app:multi-pmf}, we factorize the two corresponding unconstrained predictor components separately:
\begin{align}
\eta_{ij}[1]
&=\alpha_i[1]+\gamma_j[1]
+\mbtheta_i[1]^T\mbbeta_j[1],
\\
\log\left(-\eta_{ij}[2]\right)
&=\alpha_i[2]+\gamma_j[2]
+\mbtheta_i[2]^T\mbbeta_j[2].
\end{align}
Equivalently,
\begin{align}
\eta_{ij}[2]
=-\exp\left\{\alpha_i[2]
+\gamma_j[2]+\mbtheta_i[2]^T\mbbeta_j[2]
\right\}<0,
\end{align}
which preserves the natural-parameter space of the Gaussian exponential family.

For data generation, however, the two unconstrained predictor components are generated from a shared latent factorization, as described in the main text, using latent dimension $r=2$. In all experiments, the intervention is applied to approximately the last $20\%$ of units and time periods.

We compare three estimators. First, the standard synthetic-control estimator is
applied to the matrix of cell averages $\bar Y_{ij}$. Second, EFSC-MLE applies
synthetic control separately to the cell-wise maximum likelihood estimates
$\hat{\mbeta}_{ij}$. Third, EFSC-PMF fits the factorized model by BBVI, using
the observed cells $\Omega_{\mathrm{obs}}$ to reconstruct the untreated
counterfactual natural parameters in $\Omega_{\mathrm{tgt}}$. For EFSC-PMF,
treated-cell parameters are estimated from a second fit using only the treated
post-treatment block. Unless otherwise stated, all reported errors are averaged
over $20$ independently generated panels.

For the exponential-tilt experiment in Table~\ref{tab:exponential_tilt_results},
we use panels with $N=32$ units and $T=128$ time periods. Cell sizes are ragged,
with
\begin{align}
m_{ij}\sim 1+\mathrm{Poisson}(\lambda_m),\quad\lambda_m=55.
\end{align}
For treated post-treatment cells, we apply the natural-parameter intervention
\begin{align}
\tilde{\mbeta}_{ij}
=\mbeta_{ij}+(\tau,0)^T,\quad
\tau\in\{0.1,0.25,0.5,1,2\}.
\end{align}
For the response-level estimators, the estimated mean effects are divided
cellwise by the true simulated counterfactual variance $\sigma_{ij}^2$ so that
they are expressed on the natural-parameter tilt scale. The natural-parameter
estimators instead compare the estimated change in the first component,
$\eta_{ij}[1]$, directly with $\tau$. The recovery error is then the MAE between
the resulting estimate and the true tilt. The EFSC-PMF quantities reported in
Table~\ref{tab:exponential_tilt_results} are approximated using $2{,}000$
variational-posterior draws.

Figure~\ref{fig:exp_tilt_panel} uses the same construction with vector tilt
$\mbtau=(0.4,-0.6)^T$ and compares recovery for
$(N,T)\in\{(32,128),(64,256)\}$ and
$\lambda_m\in\{25,55\}\cup\{105,155,\ldots,505\}$. The MAE curves use plug-in
reconstructions evaluated at the variational posterior means, whereas the center
and right panels display pooled variational-posterior draws from one
representative panel. An illustration of this two-parameter tilt and the
corresponding EFSC counterfactual reconstruction is shown in
Figure~\ref{fig:gaussian_tilt_eta_example}.

\begin{figure}[t]
    \centering
    \includegraphics[width=0.95\linewidth]{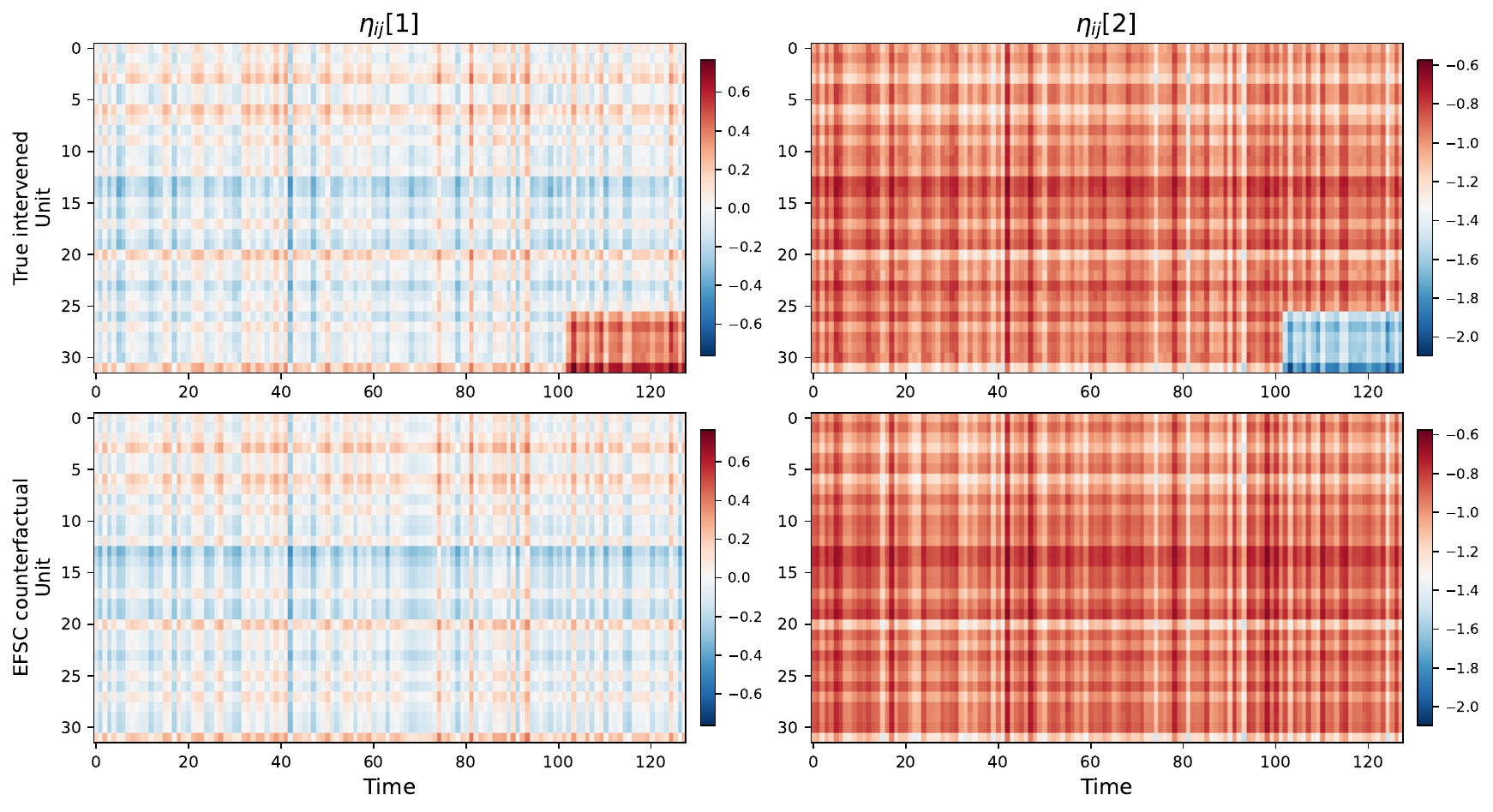}
    \caption{Illustration of the Gaussian exponential-tilt experiment used in Figure~\ref{fig:exp_tilt_panel}. Top row: simulated natural-parameter matrices after applying the two-parameter tilt $\mbtau=(0.4,-0.6)^T$ to the treated post-treatment block. Bottom row: EFSC-PMF reconstruction of the untreated counterfactual natural-parameter matrices, obtained by fitting the model with the treated post-treatment block masked out.}\label{fig:gaussian_tilt_eta_example}
\end{figure}

For the structured latent-intervention experiment in Table~\ref{tab:kl_mae}, we
use panels with $N=32$ units and $T=128$ time periods, and fixed
$m\in\{5,25,50,100,200\}$ in order to remove the variability due to ragged cell sizes. The
treated post-treatment natural parameters are perturbed by
\begin{align}
\tilde{\eta}_{ij}[1]&=\eta_{ij}[1]+\kappa_1
\sigma(\mbtheta_i^\top\mbw_1)
\sigma(\mbbeta_j^\top\mbw_2),
\\
\tilde{\eta}_{ij}[2]&=\eta_{ij}[2]-\kappa_2
\sigma(\mbtheta_i^\top\mbw_1)
\sigma(\mbbeta_j^\top\mbw_2),
\quad\tilde{\eta}_{ij}[2]<0,
\end{align}
where $\sigma(\cdot)$ denotes the sigmoid function,
$\mbw_1,\mbw_2\in\mathbb R^r$ are unit-norm vectors, and
$(\kappa_1,\kappa_2)\in\{(0.2,0.1),(0.6,0.5),(1.6,1.5)\}$. For each treated
post-treatment cell, the true KL divergence is computed from the known
untreated and intervened Gaussian parameters,
\begin{align}
\mathrm{KL}_{ij}^{\mathrm{true}}
=\mathrm{KL}\left(
\mathcal N(\tilde{\mu}_{ij},\tilde{\sigma}_{ij}^2)
\|\mathcal N(\mu_{ij},\sigma_{ij}^2)\right).
\end{align}
The EFSC-MLE and EFSC-PMF estimates are evaluated by the plug-in KL divergence
between the estimated treated and counterfactual Gaussian distributions. For
EFSC-PMF, these distributions are reconstructed at the variational posterior
means. The reported metric is the cell-wise MAE over
$\Omega_{\mathrm{tgt}}$.

Finally, for the heavy-tailed misspecification experiment in
Table~\ref{tab:student_t_corruption_kl}, we use smaller panels with $N=32$
units and $T=64$ time periods, fixed cell sizes
$m\in\{5,25,50,100,200\}$, and degrees of freedom
$\nu_{\mathrm{df}}\in\{80,40,20,10,5,3\}$. This experiment deliberately falls outside the treated-outcome condition in Assumption~\ref{ass:ef} and is included solely as a robustness exercise under model misspecification. The treated post-treatment Gaussian observations are replaced by variance-matched Student-$t$ draws,
\begin{align}
\tilde y_{ijk}=\mu_{ij}+s_{ij}t_{ijk},
\quad t_{ijk}\sim\mathrm{Student}\text{-}t(\nu_{\text{df}}),
\end{align}
where $s_{ij}$ is chosen so that
\begin{align}
s_{ij}^2\frac{\nu_{\mathrm{df}}}{\nu_{\mathrm{df}}-2}
=\sigma_{ij}^2.
\end{align}
Thus the treated distribution has the same mean and variance as the original
Gaussian cell distribution, but heavier tails when $\nu_{\mathrm{df}}$ is small.
Because the treated distribution is no longer Gaussian, the true cell-wise KL
divergence
\begin{align}
\mathrm{KL}_{ij}^{\mathrm{true}}
=
\mathrm{KL}\left(
t_{\nu_{\mathrm{df}}}(\mu_{ij},s_{ij})
\|
\mathcal N(\mu_{ij},\sigma_{ij}^2)
\right)
\end{align}
is approximated using $20{,}000$ Monte Carlo draws per treated post-treatment
cell. We then compare this baseline with the plug-in Gaussian KL divergences
induced by EFSC-MLE and EFSC-PMF, and report the cell-wise MAE over the treated
post-treatment block. For EFSC-PMF, the Gaussian parameters are reconstructed
at the variational posterior means.

All EFSC-PMF fits use independent standard-normal priors and $S=2$ Monte Carlo
samples per ELBO evaluation. The scalar-tilt and structured-intervention
experiments use $3{,}000$ optimization epochs and learning rate $0.1$, while
the vector-tilt experiment uses $2{,}000$ epochs and learning rate $0.1$. The
Student-$t$ experiment uses $8{,}000$ epochs and learning rate $0.01$.

\subsection{Distributional Placebo Tests}\label{app:placebo_tests}
Here we provide additional details for the placebo experiments in
Section~\ref{subsec:placebo_tests}. The experiments use the same generative
construction and EFSC fitting procedure described in the previous appendix
sections. In each case, we generate a synthetic panel of datasets, designate a
subset of treated units and post-treatment periods, and compute the placebo
statistic
\begin{align*}
\Delta_{\mathrm{KL}}=\mathrm{ECD}^{\mathrm{post}}
-\mathrm{ECD}^{\mathrm{pre}},
\end{align*}
as defined in Equation~\eqref{eq:placebo_statistic}. Each statistic uses four
fitted natural-parameter surfaces: a post-period counterfactual fit excluding
the target post-treatment block, a post-period fit using only that block, a
pre-period reference fit using all pretreatment cells, and a pre-period fit
using only the target units' pretreatment cells. The pretreatment term
therefore serves as a baseline discrepancy for the target units before the
intervention, while the post-treatment term measures the corresponding
discrepancy when the intervention may be present. Throughout this subsection,
the fitted natural parameters are reconstructed at the variational posterior
means.

Placebo assignments are formed by selecting subsets of control units with the
same cardinality as the treated set. Following
Algorithm~\ref{alg:efsc-placebo}, actual treated post-treatment cells are
excluded from all conditioning sets used to construct the placebo reference
distribution.

For the one-parameter exponential-family experiments, we use panels with
$N=16$ units, $T=32$ time periods, latent rank $r=2$, and fixed cell size
$m=100$. The first $12$ units are used as controls and the last $4$ units are
assigned to treatment. All $\binom{12}{4}=495$ admissible placebo assignments
are enumerated. The intervention time is set to $t_0=27$, so that the first
$26$ periods form the pretreatment block and the remaining periods form the
post-treatment block. For each family, treated post-treatment observations are
generated after applying exponential tilts of increasing magnitude to the
natural parameter,
\begin{align}
\tilde{\eta}_{ij}=\eta_{ij}+\tau,
\quad i\in\mathcal{T},\; j\in\mathcal{Q}.
\end{align}
The placebo distribution is computed once for each family under the untreated
panel, and the observed statistics corresponding to the different values of
$\tau$ are overlaid on the same empirical null distribution.

Figure~\ref{fig:placebo_univariate_efs} shows the resulting placebo distributions
for the one-parameter exponential families considered in the paper: Bernoulli,
Poisson, exponential, Laplace with known mean, chi-squared, and Gaussian with known variance. Across families, the placebo distributions are concentrated near zero, although
small finite-sample offsets can appear because both pre- and post-treatment ECDs
are estimated using fitted EFSC models. Inference is therefore calibrated
against the empirical placebo distribution rather than against a zero-centered
reference.

\begin{figure}[!htbp]
    \centering
    \includegraphics[width=0.95\linewidth]{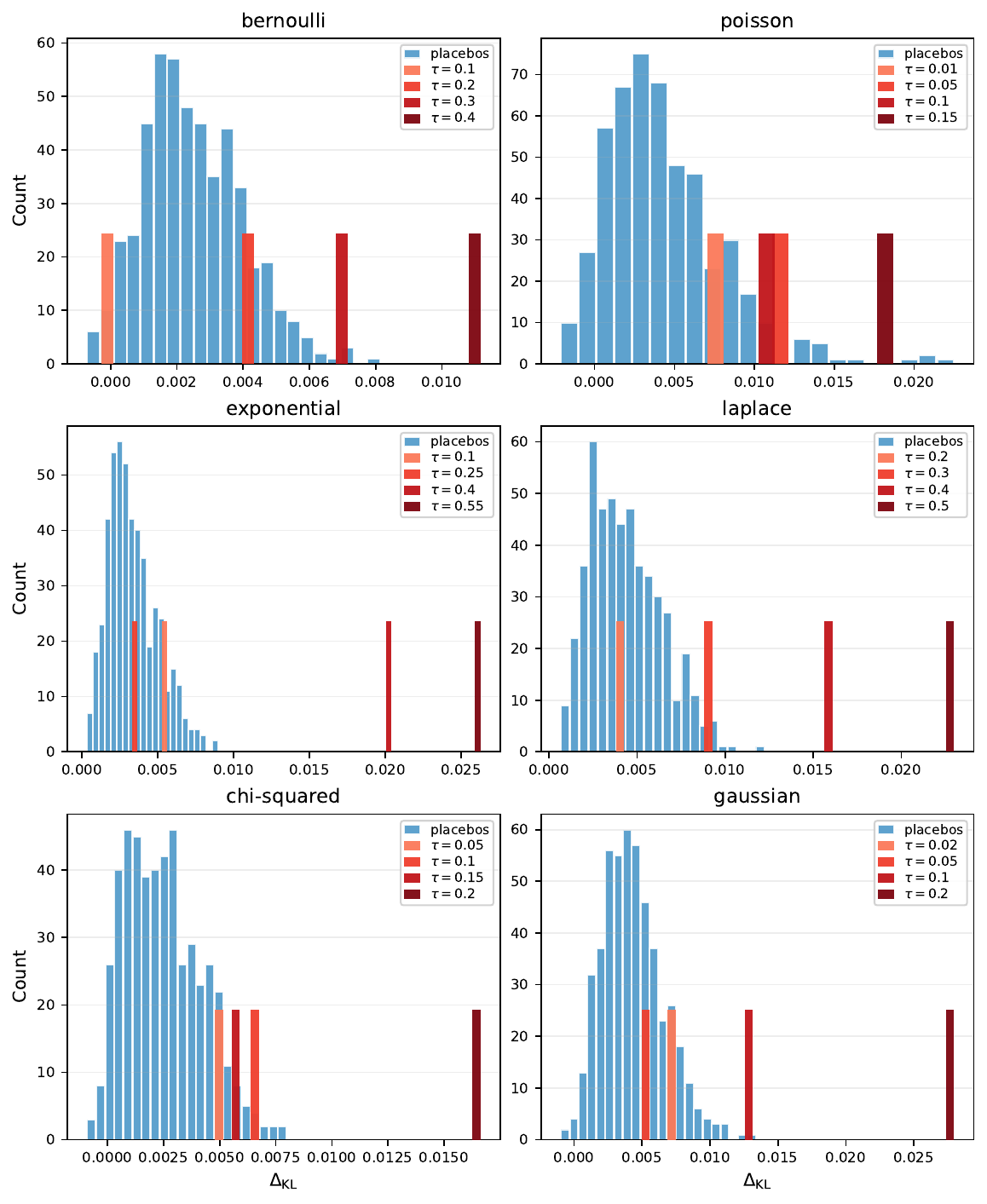}
    \caption{Distributional placebo tests across one-parameter exponential-family panels. Each panel shows the empirical placebo distribution of $\Delta_{\mathrm{KL}}$ under the no-leakage procedure in Algorithm~\ref{alg:efsc-placebo}, with observed statistics overlaid for exponential tilts of increasing magnitude. The examples illustrate how the same ECD-based placebo diagnostic can be applied across different observation models.}
    \label{fig:placebo_univariate_efs}
\end{figure}

For the Gaussian intervention experiment in
Section~\ref{subsec:placebo_tests}, we use a two-parameter Gaussian EF panel
with the same no-leakage placebo construction and settings
$N=16$, $T=32$, $r=2$, and $m=100$. The first $12$ units are controls, the
last $4$ units are treated, and the first $26$ periods form the pretreatment
block. We compare three interventions on the treated post-treatment block:
an exponential tilt with $\tau=1$, a structured latent intervention with
$(\kappa_1,\kappa_2)=(2,2)$, and a variance-matched Student-$t$ replacement
with $\nu_{\mathrm{df}}=3$. The first two interventions perturb the treated
distribution while remaining within the fitted Gaussian exponential family,
whereas the latter moves the treated response distribution outside the fitted
Gaussian family.

For the interventions that remain inside the fitted exponential family, the
observed statistic is computed using the plug-in KL divergence between the
fitted treated and counterfactual natural parameters. For the Student-$t$
replacement, we additionally approximate the KL divergence by Monte Carlo using
the known Student-$t$ density and the fitted Gaussian counterfactual density,
\begin{align}
\mathrm{KL}(p_t\|p_{\mathrm{cf}})
=\EE{Y\sim p_t}{\log p_t(Y)-\log p_{\mathrm{cf}}(Y)},
\end{align}
where $p_t$ denotes the Student-$t$ replacement distribution and
$p_{\mathrm{cf}}$ denotes the Gaussian counterfactual distribution induced by
EFSC. This post-treatment divergence is approximated using $20{,}000$ Monte
Carlo draws per treated post-treatment cell and combined with the same Gaussian
plug-in pretreatment discrepancy to form the displayed oracle
$\Delta_{\mathrm{KL}}$. Because this MC quantity is not computed using the same
estimation rule as the placebo statistics, it is shown only as an oracle
diagnostic and is not assigned a placebo $p$-value. The formal placebo test is
therefore calibrated to the fitted Gaussian observation model, while the oracle
diagnostic illustrates the additional discrepancy induced by changes in tail
shape.

All placebo fits use $S=2$ Monte Carlo samples per ELBO evaluation and $500$
optimization epochs. The Gaussian intervention experiment uses learning rate
$0.1$. For the one-parameter experiments, the learning rates are $0.05$ for
Bernoulli and Gaussian with known variance, $0.1$ for Poisson, $0.005$ for
exponential and Laplace, and $0.002$ for Chi-Squared.

\subsection{Details on the Medicaid Expansion Experiment}
\label{app:medicaid_real_data}
This appendix provides additional details for the Medicaid expansion application
in Section~\ref{subsec:medicaid}. We use the same EFSC principles as in the synthetic experiments, but in this case the panel observations are survey-weighted multinomial counts rather than simulated exponential-family datasets. Our goal is to estimate how Medicaid expansion changed the full distribution of health insurance coverage among low-income adults across the United States.

\subsubsection{ACS/IPUMS Sample Construction}
We use individual-level American Community Survey (ACS) microdata obtained
through IPUMS USA. The underlying ACS data were provided by the United States Census Bureau. The analysis sample is restricted to adults aged 19--64 whose family income is at or below 138\% of the poverty threshold.\footnote{The Affordable Care Act establishes a threshold of 133\% of the federal poverty level, while the five-percentage-point income disregard under the MAGI eligibility rules produces an effective threshold of 138\%; see
\url{https://www.medicaid.gov/faq/2020-04-13/92591}.}
The income restriction is chosen to focus on the population most directly
exposed to the Medicaid expansion eligibility margin. For each respondent, we
construct a mutually exclusive insurance category from the available health
insurance indicators:
\begin{align*}
\mathcal C_{\mathrm{ins}}=\{\text{Uninsured},
\text{Medicaid},\text{Employer},\text{Private},\text{Other}\},    
\end{align*}
such that
\begin{align*}
c=1:\text{Uninsured},\;c=2:\text{Medicaid},\;
c=3:\text{Employer},\;c=4:\text{Private},\;
c=5:\text{Other}.    
\end{align*}
When multiple forms of coverage are reported, the respondent is assigned to a
single category using the following precedence, from highest to lowest: uninsured, Medicaid, employer, private, and other coverage. Each respondent therefore contributes one categorical outcome. Here, ``employer'' denotes employer-sponsored insurance and ``private'' denotes direct-purchase private insurance.

Let $\mathcal I_{ij}$ denote the set of respondents in the analysis sample
for state $i$ and year $j$. For each $k\in\mathcal I_{ij}$, let $w_{ijk}$
denote the ACS person-level survey weight and let
$z_{ijk}\in\{1,\ldots,C\}$ denote the respondent's insurance category, with
$C=5$. The raw state-year category counts are constructed as survey-weighted
totals,
\begin{align}
y^{\mathrm{raw}}_{ijc}
=
\sum_{k\in\mathcal I_{ij}}w_{ijk}\,
\mathbb{I}\{z_{ijk}=c\},
\quad c=1,\ldots,C.
\end{align}
After restricting the processed data to 2008--2019, the analysis panel
contains $N=51$ units, corresponding to the 50 states and the District of
Columbia, and $T=12$ years. Thus each panel cell $(i,j)$ is a five-category vector of counts
\begin{align}
\mathbf y^{\mathrm{raw}}_{ij}
=(y^{\mathrm{raw}}_{ij1},\ldots,y^{\mathrm{raw}}_{ijC}).    
\end{align}
Treatment timing follows the state-level Medicaid expansion implementation
dates reported by KFF State Health Facts\footnote{KFF State Health Facts,
``Status of State Action on the Medicaid Expansion Decision,''
\url{https://www.kff.org/affordable-care-act/state-indicator/state-activity-around-expanding-medicaid-under-the-affordable-care-act/}.}.
Let $A_i\in\{1,\ldots,T\}$ denote the time index corresponding to the first
calendar year in which state $i$ is treated within the 2008--2019 analysis
window, where $j=1$ corresponds to 2008. If state $i$ does not expand during
the window, we set $A_i=\infty$. The treated-post set is
\begin{align}
\Omega_{\mathrm{post}}=
\{(i,j): A_i<\infty,\; j\ge A_i\},
\end{align}
and the untreated, or donor, set of cells is
\begin{align}
\Omega_{\mathrm{unt}}
=\{(i,j): A_i=\infty\}
\cup\{(i,j): A_i<\infty,\; j<A_i\}.
\end{align}
States that expand after 2019 are therefore untreated for this analysis window.
This convention is important because the counterfactual model is never trained
on treated post-expansion cells. Also, since the ACS outcomes are measured annually, expansions implemented from
September onward are coded as beginning in the following calendar year. Within
the 2008--2019 analysis window, this convention affects only Alaska, whose
September 2015 expansion is coded as beginning in 2016.

\subsubsection{Effective Multinomial Counts}
The ACS person weights make the raw totals $y^{\mathrm{raw}}_{ijc}$
population-representative, but these weighted totals are not literal independent
sample sizes. If the raw weighted totals were used directly as multinomial
counts, large-population states would dominate the likelihood primarily because
their survey-weighted totals are larger. To avoid this scale distortion, we
preserve the survey-weighted category proportions but rescale every state-year
cell to a common effective multinomial size.

Define the state-year survey-weighted category proportions
\begin{align}
\hat p^{\mathrm{raw}}_{ijc}=
\frac{y^{\mathrm{raw}}_{ijc}}
{\sum_{\ell=1}^{C}y^{\mathrm{raw}}_{ij\ell}},\quad c=1,\ldots,C.
\end{align}
For estimation, we use an effective cell size \(m_0=1,000\) and construct integer
effective counts
\begin{align}
y_{ijc}\approx m_0\,\hat p^{\mathrm{raw}}_{ijc},
\quad \sum_{c=1}^{C}y_{ijc}=m_0.
\end{align}
In the implementation, this is done by taking the floor of
$m_0\hat p^{\mathrm{raw}}_{ijc}$ and assigning the remaining counts to the
categories with the largest survey-weighted probabilities so that the total is exactly $m_0$
in every state-year cell. This produces the multinomial panel used by EFSC:
\begin{align}
\mathbf y_{ij}=
(y_{ij1},\ldots,y_{ijC}),
\quad m_{ij}=\sum_{c=1}^{C}y_{ijc}=m_0.    
\end{align}
The purpose of this rescaling is not to alter the empirical insurance
composition, but to put all state-year likelihood contributions on a comparable
scale.

\subsubsection{Multinomial EFSC Model}
\begin{figure}[t]
    \centering
    \includegraphics[width=0.9\linewidth]{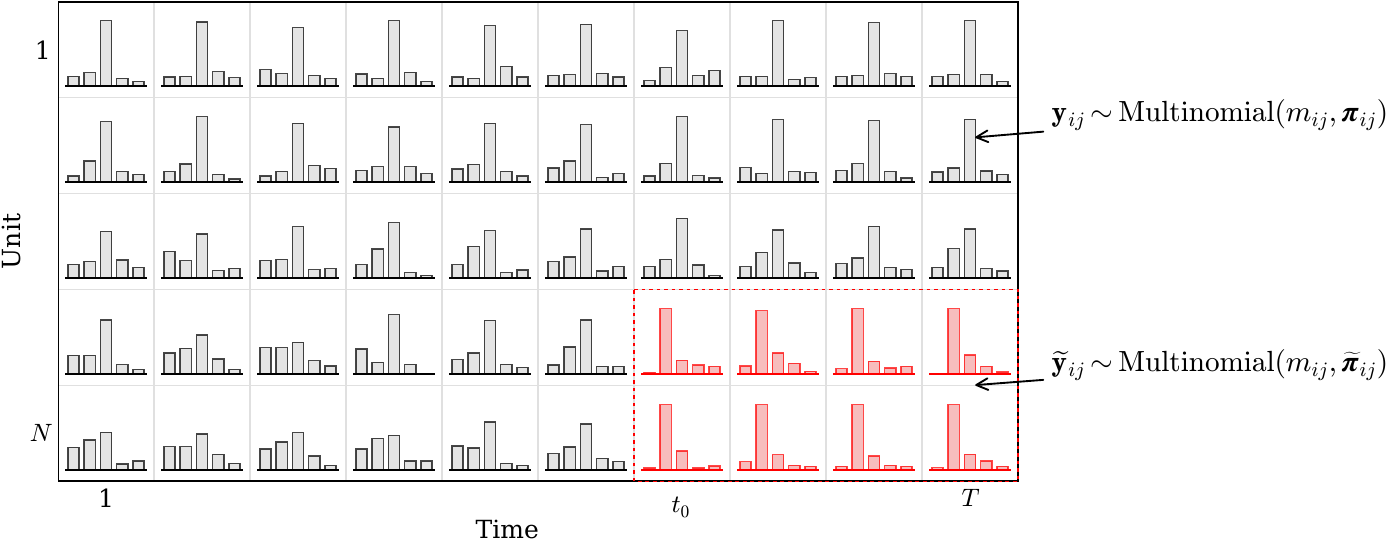}
    \caption{Panel of multinomial responses with $N$ units and $T$ time periods,
where the last two units are affected by a policy that shifts the category
distribution from time $t_0$ onward. This is an idealized scenario: the actual
panel for the Medicaid application has more units than time periods, and the
intervention times are staggered.}\label{fig:data_panel_and_tilt_multinomial}
\end{figure}
For each state $i$ and year $j$, we model the effective count vector as
\begin{align}
\mathbf y_{ij}\mid m_0,\boldsymbol{\pi}_{ij}
\sim
\operatorname{Multinomial}(m_0,\boldsymbol{\pi}_{ij}),
\quad
\boldsymbol{\pi}_{ij}\in\Delta^{C-1}.
\end{align}
We use the fifth category, ``Other'', as the reference category. The natural
parameters are the logits
\begin{align}
\eta_{ij}[c]=\log\frac{\pi_{ijc}}{\pi_{ijC}},
\quad c=1,\ldots,C-1,
\end{align}
with inverse maps
\begin{align}
\pi_{ijc}
&=
\frac{\exp(\eta_{ij}[c])}
{1+\sum_{\ell=1}^{C-1}\exp(\eta_{ij}[\ell])},
\quad c=1,\ldots,C-1,
\\
\pi_{ijC}
&=\frac{1}
{1+\sum_{\ell=1}^{C-1}\exp(\eta_{ij}[\ell])}.
\end{align}

For each non-reference logit $c=1,\ldots,C-1$, EFSC uses the PMF
\begin{align}
\eta_{ij}[c]=\alpha_i[c]+\gamma_j[c]+\boldsymbol{\theta}_i[c]^\top
\boldsymbol{\beta}_j[c].
\end{align}
The parameters $\alpha_i[c]$ and $\gamma_j[c]$ are state and year effects
for logit $c$, respectively, while
$\boldsymbol{\theta}_i[c]^\top\boldsymbol{\beta}_j[c]$ captures residual
state-year dependence through a rank-$r$ latent factorization. In this
experiment we use a separate factorization for each of the $C-1=4$ logits, and fix $r=1$.
The variational approximation and optimization follow
Appendix~\ref{app:bbvi_implementation}, using independent
$\mathcal N(0,2^2)$ priors. The principal fit uses $S=2$, while the placebo
generator and placebo fits use $S=4$.

Let $q_{\mathrm{ctrl}}(\Theta)$ denote the variational posterior obtained by
fitting EFSC on the untreated set $\Omega_{\mathrm{unt}}=\{(i,j):A_i=\infty\text{ or }j<A_i\}$. The posterior-mean
counterfactual natural parameters are
\begin{align}
\hat\eta^{\mathrm{ctrl}}_{ij}[c]=\hat\alpha_i[c]+\hat\gamma_j[c]
+\hat{\boldsymbol{\theta}}_i[c]^\top\hat{\boldsymbol{\beta}}_j[c],
\end{align}
with corresponding probability vector
$\hat{\boldsymbol{\pi}}^{\mathrm{ctrl}}_{ij}$. For the treated post-expansion
block, we also fit a treated model using the cells in $\Omega_{\mathrm{post}}=\{(i,j):A_i<\infty\text{ and }j\ge A_i\}$, yielding $\hat{\boldsymbol{\pi}}^{\mathrm{treat}}_{ij}$ for $(i,j)\in\Omega_{\mathrm{post}}$. The average probability-shift in Table~\ref{tab:acs_probability_shifts} in Section~\ref{subsec:medicaid} reports
\begin{align}
\frac{1}{|\Omega_{\mathrm{post}}|}
\sum_{(i,j)\in\Omega_{\mathrm{post}}}
\hat{\pi}^{\mathrm{ctrl}}_{ijc},
\quad
\frac{1}{|\Omega_{\mathrm{post}}|}
\sum_{(i,j)\in\Omega_{\mathrm{post}}}
\hat{\pi}^{\mathrm{treat}}_{ijc},
\end{align}
and their difference, for each insurance category $c$.

\subsubsection{State-Level Distributional Summaries}
Because adoption is staggered, each treated state has its own post-treatment period
\begin{align}
\mathcal Q_i=
\{j: j\ge A_i\},
\quad A_i<\infty.
\end{align}
For each treated state \(i\), the average Medicaid probability effect is
\begin{align}
\Delta^{\mathrm{Medicaid}}_i=
\frac{1}{|\mathcal Q_i|}
\sum_{j\in\mathcal Q_i}
\left(\hat{\pi}^{\mathrm{treat}}_{ij,\mathrm{Medicaid}}-
\hat{\pi}^{\mathrm{ctrl}}_{ij,\mathrm{Medicaid}}
\right).
\end{align}
The corresponding distributional effect is measured by the probability-vector
KL divergence
\begin{align}
\operatorname{KL}
\left(
\operatorname{Cat}
\bigl(\hat{\boldsymbol{\pi}}^{\mathrm{treat}}_{ij}\bigr)
\,\middle\|\,
\operatorname{Cat}
\bigl(\hat{\boldsymbol{\pi}}^{\mathrm{ctrl}}_{ij}\bigr)
\right)
&=
\sum_{c=1}^{C}
\hat{\pi}^{\mathrm{treat}}_{ijc}
\log\frac{
\hat{\pi}^{\mathrm{treat}}_{ijc}
}{
\hat{\pi}^{\mathrm{ctrl}}_{ijc}
}
\notag\\
&=
\frac{1}{m_0}
\operatorname{KL}
\left(
\operatorname{Multinom}
\bigl(m_0,\hat{\boldsymbol{\pi}}^{\mathrm{treat}}_{ij}\bigr)
\,\middle\|\,
\operatorname{Multinom}
\bigl(m_0,\hat{\boldsymbol{\pi}}^{\mathrm{ctrl}}_{ij}\bigr)
\right).
\label{eq:kl_vector_distributional}
\end{align}
We report this KL on the probability-vector scale. That is, we do not multiply
by the effective cell size $m_0$. This convention makes the reported ECD a
measure of distributional change in category probabilities rather than a
multinomial log-likelihood contrast scaled by the chosen effective count.

For the heatmap in Figure~\ref{fig:medicaid_heatmap}, we further compute category-specific shifts
\begin{align}
\Delta_{ic}=\frac{1}{|\mathcal Q_i|}
\sum_{j\in\mathcal Q_i}
\left(\hat{\pi}^{\mathrm{treat}}_{ijc}
-\hat{\pi}^{\mathrm{ctrl}}_{ijc}\right),
\quad c=1,\ldots,C.
\end{align}
States are sorted by $\mathrm{ECD}_i$, so that the heatmap displays which
category shifts contribute to the largest distributional effects.

Figure~\ref{fig:medicaid_all_category_probability_paths} provides a
category-level view of the fitted trajectories for two selected states. The
Medicaid path captures the most interesting effect of the policy, but the remaining panels
show how increases in Medicaid coverage are accompanied by changes in
e.g. uninsured and employer-sponsored. This motivates the
use of the probability-vector KL in the state-level ECD in Figures \ref{fig:medicaid_state_ecd_scatter} and \ref{fig:medicaid_heatmap}: the empirical
effect is a redistribution across the insurance composition, not only a change
in a single category.

\subsubsection{\texorpdfstring{Observed $\Delta_{\mathrm{KL}}$ Statistic}{Observed KL Statistic}}
The placebo analysis in Section~\ref{subsec:medicaid} is based on the change in average ECD
from the pre-expansion block to the post-expansion block. For the observed
Medicaid assignment, let
\begin{align}
\Omega_{\mathrm{pre}}=\{(i,j): A_i<\infty,\; j<A_i\},
\quad\Omega_{\mathrm{post}}=\{(i,j): A_i<\infty,\; j\ge A_i\}.
\end{align}
Let \(A_{\min}=\min\{A_i:A_i<\infty\}\), and define the pre-target training set
\begin{align}
\Omega_{\mathrm{pre,tgt}}=
\Omega_{\mathrm{pre}}
\cup\left[\{i:A_i=\infty\}\times\{1,\ldots,A_{\min}-1\}\right].
\end{align}
We fit four models for the observed statistic:
\begin{enumerate}
\item a post-target model fit on \(\Omega_{\mathrm{unt}}\);
\item a post-observed model fit on \(\Omega_{\mathrm{post}}\);
\item a pre-target model fit on \(\Omega_{\mathrm{pre,tgt}}\);
\item a pre-observed model fit on \(\Omega_{\mathrm{pre}}\).
\end{enumerate}
Let the resulting posterior-mean probability vectors be
$\hat{\boldsymbol{\pi}}^{\mathrm{post,tgt}}_{ij}$,
$\hat{\boldsymbol{\pi}}^{\mathrm{post,obs}}_{ij}$,
$\hat{\boldsymbol{\pi}}^{\mathrm{pre,tgt}}_{ij}$, and
$\hat{\boldsymbol{\pi}}^{\mathrm{pre,obs}}_{ij}$, respectively. We compute
\begin{align}
\mathrm{ECD}^{\mathrm{pre}}_{\mathrm{obs}}
=\frac{1}{|\Omega_{\mathrm{pre}}|}
\sum_{(i,j)\in\Omega_{\mathrm{pre}}}
\operatorname{KL}
\left(\hat{\boldsymbol{\pi}}^{\mathrm{pre,obs}}_{ij}
\,\|\,\hat{\boldsymbol{\pi}}^{\mathrm{pre,tgt}}_{ij}\right),
\end{align}
and
\begin{align}
\mathrm{ECD}^{\mathrm{post}}_{\mathrm{obs}}=
\frac{1}{|\Omega_{\mathrm{post}}|}
\sum_{(i,j)\in\Omega_{\mathrm{post}}}
\operatorname{KL}
\left(\hat{\boldsymbol{\pi}}^{\mathrm{post,obs}}_{ij}
\,\|\,\hat{\boldsymbol{\pi}}^{\mathrm{post,tgt}}_{ij}\right).
\end{align}
The observed placebo-test statistic is then $\Delta_{\mathrm{KL}}^{\mathrm{obs}}=
\mathrm{ECD}^{\mathrm{post}}_{\mathrm{obs}}
-\mathrm{ECD}^{\mathrm{pre}}_{\mathrm{obs}}.$ The pre term serves as a baseline discrepancy for treated states before expansion, while the post term measures the corresponding discrepancy after
expansion. Both terms are computed using the full five-category insurance
probability vector.

\begin{figure}
    \centering
    \includegraphics[width=0.9\linewidth]{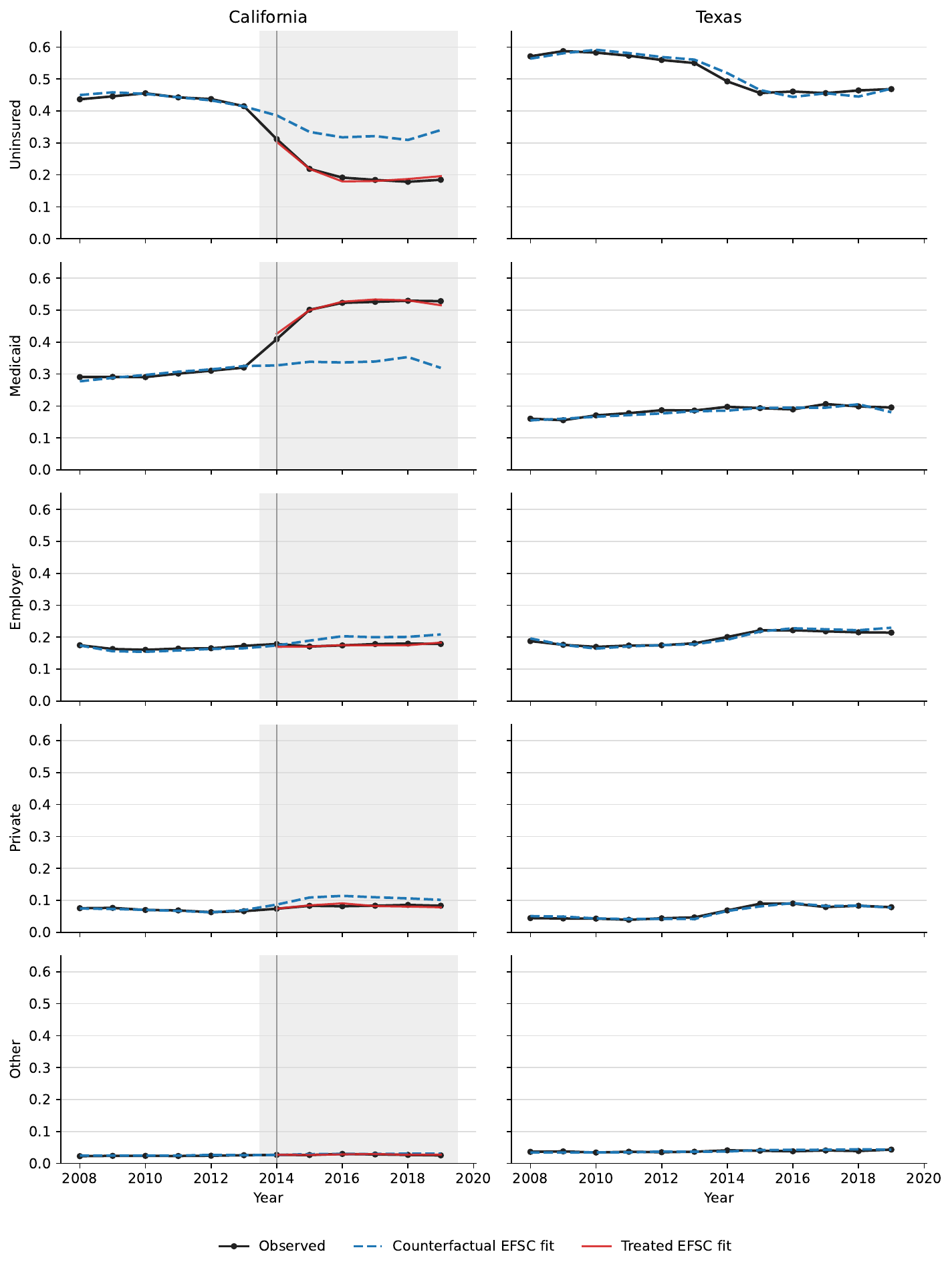}
    \caption{Estimated insurance-category probability paths for two selected states.
The figure compares observed survey-weighted category proportions with
posterior-mean fitted probabilities under the treated and counterfactual
multinomial EFSC models. Unlike Figure~\ref{fig:medicaid_probability_paths},
which focuses only on Medicaid coverage, here we illustrate how the fitted 
model reallocates probability mass across the full coverage distribution.}
    \label{fig:medicaid_all_category_probability_paths}
\end{figure}
\newpage
\subsubsection{Posterior-Predictive Placebo Procedure}
\label{app:medicaid_posterior_predictive_placebo}

\begin{algorithm}[t]
\SetAlgoNoLine
\DontPrintSemicolon
\caption{Posterior-predictive EFSC placebo test for Medicaid expansion}
\label{alg:medicaid-pp-placebo}
\KwIn{ACS/IPUMS panel $\mby$, treatment-time collection $(A_i)_{i\in\mathcal T}$, untreated cells $\Omega_{\mathrm{unt}}$, number of
synthetic units $N_{\mathrm{syn}}$, number of joint placebo replications $B$}
\KwOut{Observed statistic $\Delta_{\mathrm{KL}}^{\mathrm{obs}}$,
posterior-predictive placebo statistics
$\{\Delta_{\mathrm{KL}}^{(b)}\}_{b=1}^B$, empirical $p$-value}

Let $K=|\mathcal T|$ and
$\mathcal A=(A_i)_{i\in\mathcal T}$ denote the
indexed collection of observed treatment-time indices\;

Fit an untreated EFSC generator on $\mby_{\Omega_{\mathrm{unt}}}$ to get a
posterior approximation $q_{\mathrm{gen}}(\Theta)$\;

Compute the observed statistic $\Delta_{\mathrm{KL}}^{\mathrm{obs}}$ using the
observed staggered treatment schedule and four no-leakage EFSC fits for the
pre- and post-treatment ECDs\;

\For{$b=1,\ldots,B$}{
Draw one untreated synthetic panel $\mby_{\mathrm{syn}}^{(b)}$ from the fitted predictive generator\;

Draw a synthetic placebo treated set
$\mathcal T_{\mathrm{syn}}^{(b)}
\subseteq\{1,\ldots,N_{\mathrm{syn}}\}$ with
$|\mathcal T_{\mathrm{syn}}^{(b)}|=K$\;

Randomly permute the treatment-index multiset $\mathcal A$ across units
in $\mathcal T_{\mathrm{syn}}^{(b)}$\;

Construct the placebo pretreatment block
$\Omega_{\mathrm{pre}}^{(b)}$, placebo post-treatment block
$\Omega_{\mathrm{post}}^{(b)}$, and no-leakage post-target training set
\[
\Omega_{\mathrm{train}}^{(b)}=
(\{1,\ldots,N_{\mathrm{syn}}\}\times\{1,\ldots,T\})
\setminus\Omega_{\mathrm{post}}^{(b)}
\]
Construct the pre-target training set
\[
\Omega_{\mathrm{pre,tgt}}^{(b)}
=\Omega_{\mathrm{pre}}^{(b)}\cup
\left[\left(
\{1,\ldots,N_{\mathrm{syn}}\}
\setminus\mathcal T_{\mathrm{syn}}^{(b)}
\right)\times\{1,\ldots,\min(\mathcal A)-1\}
\right]
\]

Fit the post-target model on $\Omega_{\mathrm{train}}^{(b)}$ and the
post-observed model on $\Omega_{\mathrm{post}}^{(b)}$\;

Fit the pre-target model on $\Omega_{\mathrm{pre,tgt}}^{(b)}$ and the
pre-observed model on $\Omega_{\mathrm{pre}}^{(b)}$\;

Compute
$\Delta_{\mathrm{KL}}^{(b)}
=\mathrm{ECD}^{\mathrm{post},(b)}
-\mathrm{ECD}^{\mathrm{pre},(b)}$
}

Compute the right-tail empirical probability:
$\hat p=
({1+\sum_{b=1}^{B}
\mathbb{I}\{\Delta_{\mathrm{KL}}^{(b)}
\geq \Delta_{\mathrm{KL}}^{\mathrm{obs}}
\}
})/(B+1)$
\end{algorithm}

In the synthetic experiments, placebo assignments are obtained by selecting
control groups with the same cardinality as the treated group. In the Medicaid application, 
this direct procedure is infeasible because the number of treated expansion states is larger than the number of states remaining untreated throughout the 2008--2019 analysis window. We therefore construct a posterior-predictive approximation to the same placebo null. The idea is to learn an untreated latent factorization from cells not exposed to treatment, generate untreated synthetic state panels from that factorization, and then apply the same-cardinality no-leakage placebo logic to those synthetic panels.

Let $K=|\{i:A_i<\infty\}|$ denote the number of observed treated states, and let
\begin{align*}
\mathcal A=(A_{i_1},\ldots,A_{i_K})
\end{align*}
be the multiset of first treated-year indices among expansion states, where
$A_{i_k}\in\{1,\ldots,T\}$ for $k=1,\ldots,K$. The placebo procedure uses
synthetic panels with $N_{\mathrm{syn}}\ge K$ units and the same calendar
years as the ACS panel.

First, we fit an untreated generator on $\Omega_{\mathrm{unt}}$, obtaining a
variational posterior $q_{\mathrm{gen}}(\Theta)$. From this fitted generator,
we construct synthetic untreated panels as follows. 

For each synthetic state $i=1,\ldots,N_{\mathrm{syn}}$:
\begin{enumerate}
    \item Draw a source-state index $S_i \sim \operatorname{Unif}\{1,\ldots,N\},$ independently and with replacement.
    \item Set the synthetic unit-level effects equal to their variational posterior means, 
    \begin{align}
      \alpha^{\mathrm{syn}}_i[c]=\EE{q_{\mathrm{gen}}}{\alpha_{S_i}[c]},\;
    \boldsymbol{\theta}^{\mathrm{syn}}_i[c]=\EE{q_{\mathrm{gen}}}{\boldsymbol{\theta}_{S_i}[c]}.  
    \end{align}   
    \item For every year $j=1,\ldots,T$, combine these unit-level effects with the posterior-mean year effects and time factors to obtain the synthetic logits
    \begin{align}
    \eta^{\mathrm{syn}}_{ij}[c]=\alpha^{\mathrm{syn}}_{i}[c]
+\gamma_j[c]+
\boldsymbol{\theta}^{\mathrm{syn}}_{i}[c]^\top\boldsymbol{\beta}_{j}[c],
\; c=1,\ldots,C-1,    
    \end{align}
\end{enumerate}
where $\gamma_j[c]$ and $\boldsymbol{\beta}_j[c]$ denote their variational
posterior means. These logits are transformed into probabilities
$\boldsymbol{\pi}^{\mathrm{syn}}_{ij}$ through the inverse map, and
synthetic effective counts are drawn as
\begin{align}
\mathbf y^{\mathrm{syn}}_{ij}
\sim\operatorname{Multinomial}
(m_0,\boldsymbol{\pi}^{\mathrm{syn}}_{ij}),
\end{align}
for every synthetic state-year cell $(i,j)$. Thus every synthetic cell is generated under the learned untreated factorization.

For every placebo replicate \(b=1,\ldots,B\), we independently draw one
synthetic untreated panel and a subset
$\mathcal T_{\mathrm{syn}}^{(b)}$ of $K$ synthetic states without replacement.
We then randomly permute the observed treatment-index multiset $\mathcal A$
across the selected synthetic states. This creates a staggered placebo
assignment with the same number of treated states and the same distribution of
adoption years as the observed Medicaid assignment. Let
$\Omega^{(b)}_{\mathrm{pre}}$ and $\Omega^{(b)}_{\mathrm{post}}$ denote the resulting placebo pre and post blocks, and let
\begin{align}
\Omega^{(b)}_{\mathrm{train}}=
\left(\{1,\ldots,N_{\mathrm{syn}}\}\times\{1,\ldots,T\}\right)
\setminus\Omega^{(b)}_{\mathrm{post}}.    
\end{align}
For the pre-target fit, we additionally define
\begin{align}
\Omega^{(b)}_{\mathrm{pre,tgt}}=
\Omega^{(b)}_{\mathrm{pre}}\cup
\left[\left(\{1,\ldots,N_{\mathrm{syn}}\}
\setminus\mathcal T_{\mathrm{syn}}^{(b)}
\right)\times\{1,\ldots,\min(\mathcal A)-1\}
\right].    
\end{align}
These masks enforce the no-leakage rule: the placebo post-treatment block is
not used to train either placebo target model.

For each placebo replicate, we fit four EFSC models on the synthetic panel:
\begin{enumerate}
\item a placebo post-target model on $\Omega^{(b)}_{\mathrm{train}}$;
\item a placebo post-observed model on $\Omega^{(b)}_{\mathrm{post}}$;
\item a placebo pre-target model on $\Omega^{(b)}_{\mathrm{pre,tgt}}$;
\item a placebo pre-observed model on $\Omega^{(b)}_{\mathrm{pre}}$.
\end{enumerate}
The corresponding placebo statistic is
\begin{align}
\Delta_{\mathrm{KL}}^{(b)}=\mathrm{ECD}^{\mathrm{post},(b)}-
\mathrm{ECD}^{\mathrm{pre},(b)},
\end{align}
where both terms are computed using the same unscaled probability-vector KL
definition from Eq.~\eqref{eq:kl_vector_distributional}.

In the final implementation, we use $N_{\mathrm{syn}}=80$ synthetic units and
$B=5{,}000$ joint placebo replications. We independently draw one
posterior-predictive panel and one placebo assignment for each placebo
replicate $b=1,\ldots,B$. For each joint panel-assignment draw, we compute and
directly retain the statistic $\Delta_{\mathrm{KL}}^{(b)}$, without averaging
across panels or assignments. This construction preserves both predictive panel variation and
placebo-assignment variation. The posterior-predictive empirical placebo $p$-value is
computed with the standard finite-sample correction shown in Algorithm~\ref{alg:medicaid-pp-placebo}.

This reference distribution should be interpreted as an approximate placebo
null induced by the learned untreated factorization. It is not a literal
permutation distribution over the finite set of observed control states.
Nevertheless, it preserves the central ingredients of our
placebo design: same-cardinality treated blocks, no use of target post-treatment
cells in either target fit, and comparison of the observed statistic to
assignments generated under a no-treatment model.

\bibliography{references}

@article{abadie2003economic,
  title={The economic costs of conflict: A case study of the {Basque Country}},
  author={Abadie, Alberto and Gardeazabal, Javier},
  journal={American Economic Review},
  volume={93},
  number={1},
  pages={113--132},
  year={2003},
  publisher={American Economic Association}
}

@article{abadie2010synthetic,
  title={Synthetic control methods for comparative case studies: Estimating the effect of {California’s} tobacco control program},
  author={Abadie, Alberto and Diamond, Alexis and Hainmueller, Jens},
  journal={Journal of the American Statistical Association},
  volume={105},
  number={490},
  pages={493--505},
  year={2010},
  publisher={Taylor \& Francis}
}

@article{abadie2021using,
Author = {Abadie, Alberto},
Title = {Using Synthetic Controls: Feasibility, Data Requirements, and Methodological Aspects},
Journal = {Journal of Economic Literature},
Volume = {59},
Number = {2},
Year = {2021},
Pages = {391–425}}

@article{abadie2021penalized,
title={A Penalized Synthetic Control Estimator for Disaggregated Data},
author={Abadie, Alberto and L'Hour, Jeremy},
journal={Journal of the American Statistical Association},
volume={116},
number={536},
pages={1817--1834},
year={2021}
}

@article{athey2021matrix,
  title={Matrix completion methods for causal panel data models},
  author={Athey, Susan and Bayati, Mohsen and Doudchenko, Nikolay and Imbens, Guido and Khosravi, Khashayar},
  journal={Journal of the American Statistical Association},
  volume={116},
  number={536},
  pages={1716--1730},
  year={2021},
  publisher={Taylor \& Francis}
}

@article{blei2017variational,
  title={Variational inference: A review for statisticians},
  author={Blei, David M and Kucukelbir, Alp and McAuliffe, Jon D},
  journal={Journal of the American Statistical Association},
  volume={112},
  number={518},
  pages={859--877},
  year={2017},
  publisher={Taylor \& Francis}
}

@article{brodersen2015inferring,
title={Inferring Causal Impact Using Bayesian Structural Time-Series Models},
author={Brodersen, Kay H and Gallusser, Fabian and Koehler, Jim and Remy, Nicolas and Scott, Steven L},
journal={Annals of Applied Statistics},
volume={9},
number={1},
pages={247--274},
year={2015}
}

@book{hernan2020causal,
  author    = {Hern{\'a}n, Miguel A and Robins, James M},
  title     = {Causal Inference: What If},
  year      = {2020},
  publisher = {Chapman \& Hall/CRC}
}

@article{diaz2012population,
title={Population Intervention Causal Effects Based on Stochastic Interventions},
author={Diaz, Ivan and van der Laan, Mark J},
journal={Biometrics},
volume={68},
number={2},
pages={541--549},
year={2012}
}

@article{diaz2019causal,
title={Causal Mediation Analysis for Stochastic Interventions},
author={Diaz, Ivan and Hejazi, Nima S},
journal={Journal of the Royal Statistical Society: Series B},
volume={82},
number={3},
pages={661--683},
year={2020}
}

@book{efron2022exponential,
  title={Exponential Families in Theory and Practice},
  author={Efron, Bradley},
  year={2022},
  publisher={Cambridge University Press}
}

@book{good2005permutation,
  title     = {Permutation, Parametric, and Bootstrap Tests of Hypotheses},
  author    = {Good, Phillip I},
  edition   = {3rd},
  year      = {2005},
  publisher = {Springer-Verlag}
}

@article{jetsupphasuk2025difference,
    title={Difference-in-differences with stochastic policy shifts of a continuous treatment}, 
      author={Michael Jetsupphasuk and Chenwei Fang and Didong Li and Michael G Hudgens},
      year={2025},
      journal={arXiv:2512.00296}
}

@article{kennedy2019nonparametric,
title={Nonparametric Causal Effects Based on Incremental Propensity Score Interventions},
author={Kennedy, Edward H},
journal={Journal of the American Statistical Association},
volume={114},
number={526},
pages={645--656},
year={2019}
}

@article{klinenberg2024timevarying,
author = {Danny Klinenberg},
title = {Synthetic Control with Time Varying Coefficients: A State Space Approach with Bayesian Shrinkage},
journal = {Journal of Business \& Economic Statistics},
volume = {41},
number = {4},
pages = {1065--1076},
year = {2023}}

@inproceedings{nazaret2024misspecification,
  title={On the Misspecification of Linear Assumptions in Synthetic Controls},
  author={Nazaret, Achille and Shi, Claudia and Blei, David M},
  booktitle={International Conference on Artificial Intelligence and Statistics (AISTATS)},
  pages={3790--3798},
  year={2024},
  organization={PMLR}
}

@misc{patientprotection2010,
  author = {{U.S. Congress}},
  title        = {Patient Protection and Affordable Care Act},
  year         = {2010},
  note         = {Public Law 111-148, 124 Stat. 119},
  url = {https://www.congress.gov/111/plaws/publ148/PLAW-111publ148.pdf}
}

@inproceedings{ranganath2014black,
  title={Black box variational inference},
  author={Ranganath, Rajesh and Gerrish, Sean and Blei, David M},
  booktitle={International Conference on Artificial intelligence and statistics (AISTATS)},
  pages={814--822},
  year={2014},
  organization={PMLR}
}

@inproceedings{rho2024timeaware,
  title={Time-Aware Synthetic Control},
  author={Rho, Saeyoung and Illick, Cyrus and Narasipura, Samhitha and Abadie, Alberto and Hsu, Daniel and Misra, Vishal},
  booktitle={International Conference on Artificial Intelligence and Statistics (AISTATS)},
  year={2026},
  organization={PMLR}
}

@InProceedings{rho2025cluster,
  title = {{ClusterSC}: Advancing Synthetic Control with Donor Selection},
  author = {Rho, Saeyoung and Tang, Andrew and Bergam, Noah and Cummings, Rachel and Misra, Vishal},
  booktitle = {International Conference on Artificial Intelligence and Statistics (AISTATS)},
  pages ={109-117},
  year = {2025},
  publisher = {PMLR}
}

@misc{ruggles2025ipumsusa,
  author       = {Steven Ruggles and Sarah Flood and Matthew Sobek and
                  Daniel Backman and Grace Cooper and Julia A. Rivera Drew and
                  Stephanie Richards and Renae Rodgers and Jonathan Schroeder and
                  Kari C. W. Williams},
  title        = {{IPUMS USA}: Version 16.0 [dataset]},
  year         = {2025},
  address      = {Minneapolis, MN},
  publisher    = {IPUMS},
  doi          = {10.18128/D010.V16.0},
  url          = {https://doi.org/10.18128/D010.V16.0}
}

@article{rubin2005causal,
  title={Causal inference using potential outcomes: Design, modeling, decisions},
  author={Rubin, Donald B},
  journal={Journal of the American Statistical Association},
  volume={100},
  number={469},
  pages={322--331},
  year={2005},
  publisher={Taylor \& Francis}
}

@inproceedings{mao2024learning,
  title={Learning Identifiable Factorized Causal Representations of Cellular Responses},
  author={Mao, Haiyi and Lopez, Romain and Liu, Kai and Huetter, Jan-Christian and Richmond, David and Benos, Panayiotis V. and Qiu, Lin},
  booktitle={Advances in Neural Information Processing Systems (NeurIPS)},
  volume={37},
  year={2024}
}

@inproceedings{salakhutdinov2007pmf,
  title={Probabilistic matrix factorization},
  author={Salakhutdinov, Ruslan and Mnih, Andriy},
  booktitle={Advances in Neural Information Processing Systems (NeurIPS)},
  volume={20},
  year={2007}
}

@article{schindl2024incremental,
  title={Incremental effects for continuous exposures},
  author={Schindl, Kyle and Shen, Shuying and Kennedy, Edward H},
  journal={arXiv:2409.11967},
  year={2024}
}

@inproceedings{shao2022generalized,
  title={Generalized synthetic control method with state-space model},
  author={Shao, Junzhe and Yin, Mingzhang and Cai, Xiaoxuan and Valeri, Linda},
  booktitle={NeurIPS Workshop on Causal Machine Learning for Real-World Impact},
  year={2022}
}

@inproceedings{shi2022assumptions,
  title={On the assumptions of synthetic control methods},
  author={Shi, Claudia and Sridhar, Dhanya and Misra, Vishal and Blei, David M},
  booktitle={International Conference on Artificial Intelligence and Statistics (AISTATS)},
  pages={7163--7175},
  year={2022},
  organization={PMLR}
}

@book{strang2019linear,
  author    = {Strang, Gilbert},
  title     = {Linear Algebra and Learning from Data},
  year      = {2019},
  publisher = {Wellesley-Cambridge Press}
}

@article{viviano2023synthetic,
  title={Synthetic learner: model-free inference on treatments over time},
  author={Viviano, Davide and Bradic, Jelena},
  journal={Journal of Econometrics},
  volume={234},
  number={2},
  pages={691--713},
  year={2023}
}

@book{wager2024causal,
  author = {Wager, Stefan},
  title  = {Causal Inference: A Statistical Learning Approach},
  year   = {2025},
  note   = {Draft book},
  url    = {https://web.stanford.edu/~swager/causal_inf_book.pdf}
}

@article{wainwright2008graphical,
  title={Graphical models, exponential families, and variational inference},
  author={Wainwright, Martin J and Jordan, Michael I},
  journal={Foundations and Trends{\textregistered} in Machine Learning},
  volume={1},
  number={1-2},
  pages={1--305},
  year={2008},
  publisher={Now Publishers}
}

@article{weinstein2026hierarchical,
  title={Hierarchical causal models},
  author={Weinstein, Eli N and Blei, David M},
  journal={Journal of Machine Learning Research},
  volume={27},
  number={37},
  pages={1--73},
  year={2026}
}

\end{document}